\renewcommand{\algorithmiccomment}[1]{\hfill{\textcolor{gray}{\ensuremath{\triangleright} #1}}}
\theoremstyle{plain}
\newtheorem{theorem}{Theorem}[section]
\newtheorem{Thm}{Theorem}
\newtheorem{Lem}[theorem]{Lemma}
\newtheorem{corollary}[theorem]{Corollary}
\theoremstyle{definition}
\newtheorem{Assump}[theorem]{Assumption}
\newcommand{\entropy}{\calig{H}}
\newcommand{\methodName}{\textsf{DoCoFL}}
\newcommand{\fedavg}{\textsf{FedAvg}}
\newcommand*\circled[1]{\tikz[baseline=(char.base)]{
            \node[shape=circle,draw,inner sep=0.8pt] (char) {#1};}}
\DeclarePairedDelimiter\abs{\lvert}{\rvert}
\DeclarePairedDelimiter\ceil{\lceil}{\rceil}
\newcommand{\RN}[1]{%
  \textup{\uppercase\expandafter{\romannumeral#1}}%
}
\newcommand{\blue}[1]{\textcolor{black}{#1}}
\newcommand{\calig}[1]{\mathcal{#1}}
\newcommand{\bb}[1]{\mathbb{#1}}
\newcommand{\brac}[1]{\left(#1\right)}
\newcommand{\sbrac}[1]{\left[#1\right]}
\newcommand{\cbrac}[1]{\left\{#1\right\}}
\newcommand{\bbE}{\bb{E}}
\newcommand{\reals}{\bb{R}}
\newcommand{\prob}{\bb{P}}
\newcommand{\A}{\calig{A}}
\newcommand{\C}{\calig{C}}
\newcommand{\Ocal}{\calig{O}}
\renewcommand{\P}{\calig{P}}
\newcommand{\Q}{\calig{Q}}
\newcommand{\Scal}{\calig{S}}
\newcommand{\Tcal}{\calig{T}}
\newcommand{\V}{\calig{V}}
\newcommand{\bnorm}[1]{\left\Vert#1\right\Vert}
\newcommand{\norm}[1]{\lVert#1\rVert}
\newcommand*{\QEDW}{\null\nobreak\hfill\ensuremath{\square}}
\def\naturals{{\mathbb N}}
\newcommand{\ignore}[1]{}
\def\reals{{\mathbb R}}
\def\V{{\mathcal V}}
\def\bold0{\mathbf{0}}
\newcommand{\nab}{\nabla}
\newcommand{\nabhat}{\hat{\nabla}}
\newcommand{\modulu}[2]{#1\hspace{0.2em}\text{mod}\hspace{0.2em}{#2}}
\icmltitlerunning{DoCoFL: Downlink Compression for Cross-Device Federated Learning}
\begin{document}

\twocolumn[
\icmltitle{DoCoFL: Downlink Compression for Cross-Device Federated Learning}

\begin{icmlauthorlist}
\icmlauthor{Ron Dorfman}{vmware,tech}
\icmlauthor{Shay Vargaftik}{vmware}
\icmlauthor{Yaniv Ben-Itzhak}{vmware}
\icmlauthor{Kfir Y. Levy}{tech}
\end{icmlauthorlist}

\icmlaffiliation{tech}{Viterby Faculty of Electrical and Computer Engineering, Technion, Haifa, Israel}
\icmlaffiliation{vmware}{VMware Research}

\icmlcorrespondingauthor{Ron Dorfman}{rdorfman@campus.technion.ac.il}

\icmlkeywords{Machine Learning, ICML}

\vskip 0.3in
]



\printAffiliationsAndNotice{}  

\begin{abstract}
Many compression techniques have been proposed to reduce the communication overhead of Federated Learning training procedures. However, these are typically designed for compressing model updates, which are expected to decay throughout training. As a result, such methods are inapplicable to downlink (i.e., from the parameter server to clients) compression in the cross-device setting, where heterogeneous clients \textit{may appear only once} during training and thus must download the model parameters. Accordingly, we propose \methodName{} -- a new framework for downlink compression in the cross-device setting. Importantly, \methodName{} can be seamlessly combined with many uplink compression schemes, rendering it suitable for bi-directional compression. Through extensive evaluation, we show that \methodName{} offers significant bi-directional bandwidth reduction while achieving competitive accuracy to that of a baseline without any compression.  
\end{abstract}

\section{Introduction}\label{sec:intro}
In recent years, there has been an increasing interest in federated learning (FL) as a paradigm for large-scale machine learning over decentralized data~\cite{konevcny2016federatedb, kairouz2021advances}. FL enables organizations and/or devices, collectively termed \textit{clients}, to jointly build better and more robust models by relying on their collective data and processing power. 
Importantly, the FL training procedure occurs without exchanging or sharing client-specific data, thus ensuring some degree of privacy and compliance with data access rights and regulations~(e.g., the General Data Protection Regulation (GDPR) implemented by the European Union in May, 2018). Instead, in each round, clients perform local optimization using their local data and send only model updates to a central coordinator, also known as the \textit{parameter server} (PS). The PS aggregates these updates and updates the global model, which is then utilized by the clients in subsequent rounds.

One of the main challenges in FL is the communication bottleneck introduced during the distributed training procedure. To illustrate this bottleneck, consider the following example of a real FL deployment presented by~\citet{mcmahan2022federated}: their training involves a small neural network with $1.3$ million parameters; in each round there are $6500$ participating clients; and the model is trained over $2000$ rounds. A simple calculation shows that the total required bandwidth to and from the PS during this training is $\approx61.5$ TB. Since modern machine learning models have many millions (or even billions) of parameters and we might have~more~participants,~FL~may~result~in~excessive~communication~overhead.

To deal with this overhead, many bandwidth reduction techniques have been proposed. These include taking multiple (rather than a single) local optimization steps~\cite{mcmahan2017communication}, quantization techniques~\cite{seide20141, alistarh2017qsgd, wen2017terngrad, bernstein2018signsgd, bernstein2018signsgd_b, karimireddy2019error, jin2020stochastic, shlezinger2020uveqfed}, low-rank decomposition~\cite{vogels2019powersgd}, sketching~\cite{ivkin2019communication, rothchild2020fetchsgd}, and distributed mean estimation~\cite{lyubarskii2010uncertainty, suresh2017distributed, konevcny2018randomized, vargaftik2021drive, vargaftik2022eden,safaryan2022uncertainty}. However, as we detail in \S\ref{sec:background_and_related_work}, a direct application of these techniques is less suitable for downlink compression, i.e., from the PS to the clients, in the cross-device setup in which new and heterogeneous clients may participate at each round and thus must download the model parameters. This is in contrast to the cross-silo setup in which the PS can compress and send a global update (i.e., clients' aggregated update) to all silos.

To the best of our knowledge, only a handful of works consider bi-directional compression, i.e., compression from the clients to the PS and vice versa. These works mainly rely on per-client memory mechanism~\cite{tang2019doublesqueeze, zheng2019communication, liu2020double, philippenko2020bidirectional, philippenko2021preserved,gruntkowska2022ef21} or require keeping an updated copy of the model on all clients \cite{horvoth2022natural}, thus targeting either distributed learning or FL with full or partial but \emph{recurring} participation (e.g., cross-silo FL). Such solutions are less suitable for large-scale cross-device FL, where a client may appear only a handful of times, or even just once, during the entire training procedure.

It is important to stress that the significance of bi-directional bandwidth reduction for cross-device FL goes far beyond cost reduction, energy efficiency, and carbon footprint considerations. In fact, inclusion, fairness, and bias are at the very heart of cross-device FL as, according to recent sources~\cite{wifidiffs, mobilepricediffs}, the price of a wireless connection and its quality admits differences of orders of magnitude among countries. This may prevent large populations from contributing to cross-device FL training due to costly and unstable connectivity, resulting in biased and less accurate models.

Accordingly, in this work we introduce \methodName{}, a novel downlink compression framework specifically designed for cross-device FL. Importantly, it operates independently of many uplink compression techniques, making it suitable for bi-directional compression in cross-device setups. 

The primary challenge addressed by \methodName{} is that clients must download model parameters (i.e., weights) instead of model updates. Unlike updates, which are proportional to gradients and thus their norm is expected to decrease during training, the model parameters do not decay, rendering low-bit compression methods undesirable. As a result, and since clients can only download the updated model weights during their designated participation round, this can lead to a network bottleneck for low-resourced clients. 

To address this bottleneck, \methodName{} decomposes the download burden by utilizing previous models, referred to as  \emph{anchors}, which clients can download prior to their participation round. Then, at the designated participation round, clients only need to download the correction, i.e., the difference between the updated model and the anchor. As the correction is proportionate to the sum of previous updates, it is expected to decay, allowing for the use of low-bit compression methods. To ensure the correction term, PS memory footprint, and PS computational overhead remain manageable, the available anchors are updated periodically. This approach reduces the amount of bandwidth required by the clients \emph{online} (i.e., at their participation round). To reduce the overall downlink bandwidth usage, we further develop and utilize an efficient anchor compression technique with an appealing bandwidth-accuracy tradeoff.

\vspace{-0.5em}
\paragraph{Contributions.} We summarize our contributions below,
\vspace{-0.7em}
\begin{itemize}[leftmargin=*]
\setlength\itemsep{0.2em}
    \item We propose a new framework (\methodName{}) that both enlarges the time window during which clients can obtain the model parameters and reduces the total downlink bandwidth requirements for cross-device FL.
    \item We show that \methodName{} provably converges to a stationary point when not compressing anchors and give an asymptotic convergence rate.  
    \item We design a new compression technique with strong empirical results, which \methodName{} uses for anchor compression and can be of independent interest. We provide the theoretical intuition and empirical evidence for why \methodName{} with anchor compression works.
\end{itemize}
Finally, we show over image classification and language processing tasks that \methodName{} consistently achieves model accuracy that is competitive with an uncompressed baseline, namely, \fedavg{}~\cite{mcmahan2017communication} while reducing bandwidth usage in both directions by order of magnitude.

\section{Background and Related Work}\label{sec:background_and_related_work}
In this section, we overview mostly related work and detail the challenges in designing a bi-directional bandwidth reduction framework for cross-device FL.

\subsection{Uplink vs. Downlink \blue{Compression}}
In the context of FL, uplink (i.e., client to PS) and downlink (i.e., PS to client) compression are inherently different and should not be treated in the same manner. In particular, many recent uplink compression solutions (e.g., \citet{konevcny2016federateda,alistarh2017qsgd}) partially rely on two properties to obtain their effectiveness:

\textbf{Averaging.} 
A fundamental property arises when many clients send their compressed gradients for averaging at the PS. If the clients' estimates are independent and unbiased, the error in estimating their mean by calculating their estimations' mean is decreasing linearly with respect to the number of clients. Thus, having more clients in each round allows for more aggressive and more accurate compression.   

\textbf{Error Decay.} 
Essentially, unbiased compression of updates results in an increased variance in their estimation. This increase can be compensated by decreasing the learning rate. Moreover, the effect of update compression is expected to diminish since the expected update decays as the training process approaches a stationary point. This is not the case when compression model parameters.


For downlink compression, we immediately lose the averaging property since, by design, there is only one source with whom the clients communicate, namely, the PS. Regarding the error decay property, we must further distinguish between different FL setups as described next.
\subsection{Cross-silo vs. Cross-device \blue{FL}}
FL can be divided into two types based on the nature of the participating clients (\citet{kairouz2021advances}, Table 1).

\textbf{Silos.} In cross-silo FL, the clients are typically assumed to be active throughout the training procedure and with sufficient compute and network resources.  
Silos are typically associated with entities such as hospitals that jointly train a model for better diagnosis and treatment~\cite{ng2021federated} or banks that jointly build better models for fraud and anomalous activity detection~\cite{yang2019ffd}.

\begin{table}[t]
\centering
\caption{Averaging and Error Decay in different setups.}
\vspace{0.1in}
\resizebox{\linewidth}{!}{%
\begin{tabular}{cc|c|c|}
\cline{3-4}
& \multicolumn{1}{l|}{} & Averaging & Error Decay \\ \hline
\multicolumn{2}{|c|}{Uplink}                                             & {\color[HTML]{32CB00} \checkmark} & {\color[HTML]{32CB00} \checkmark} \\ \hline
\multicolumn{1}{|c|}{}                           & Cross-silo            & {\color[HTML]{FE0000} $\times$}  & {\color[HTML]{32CB00} \checkmark} \\ \cline{2-4} 
\multicolumn{1}{|c|}{\multirow{-2}{*}{Downlink}} & Cross-device          & {\color[HTML]{FE0000} $\times$}  & {\color[HTML]{FE0000} $\times$}  \\ \hline
\end{tabular}%
}
\vspace{0.1in}
\label{tab:comp_effects}
\end{table}

Indeed, silos allow for the design of efficient compression techniques that rely on client persistency and per-client memory mechanisms that are used for, e.g., 
compressing gradient differences, employing error feedback, and learning control variates~\cite{alistarh2018convergence,karimireddy2020scaffold,philippenko2020bidirectional, gorbunov2021marina,richtarik2021ef21}. While most of these techniques consider only uplink compression, some recent works target bi-directional compression by utilizing the same property of using per-client memory and relying on \emph{repeated} client participation~\cite{tang2019doublesqueeze,liu2020double,philippenko2020bidirectional,philippenko2021preserved,gruntkowska2022ef21}.

\textbf{Devices.} In cross-device FL, clients are typically assumed to be heterogeneous and not persistent to the extent that a client often participates in a \emph{single} out of many thousands of training rounds. Also, in this setup, clients may often admit compute and network constraints. Devices are usually associated with entities such as laptops, smartphones, smartwatches, tablets, and IoT devices. A typical example of a cross-device FL application is keyboard completion for android devices~\cite{mcmahan2017federated}.

Unlike in silos with full or partial but repeated participation, compression techniques for devices that appear only once or a handful of times cannot rely on having some earlier state for or on that device. This renders methods that rely on per-client memory or learned control variates less suitable for such cross-device FL setups. Indeed, recent gradient compression techniques can be readily used for bi-directional compression in the cross-silo setup or only uplink compression in both setups~\cite{konevcny2016federateda,alistarh2017qsgd,suresh2017distributed,ramezani2021nuqsgd,vargaftik2021drive,vargaftik2022eden,safaryan2022uncertainty}.
\subsection{Putting It All Together}
As summarized in Table \ref{tab:comp_effects}, differences in the clients' nature and the compression direction (i.e., uplink vs. downlink) significantly affect the efficiency of bandwidth reduction techniques. 
In the considered setups, downlink compression is more challenging than uplink compression due to the lack of averaging and received considerably less attention in the literature. Moreover, for the cross-device setup, the problem is more acute due to not having error decay as well.

\section{\methodName}\label{sec:docofl}
In this section, we present \methodName{}. We start with describing our design goals, which are derived from the challenges outlined in the previous section, followed by a formal definition of the federated optimization problem. Then, in \S\ref{subsec:overview}, we give intuition and introduce our framework. In \S\ref{subsec:client_selection_process}, we detail about an important element of \methodName{}, namely, the client selection process employed by the PS. Finally, we provide a theoretical convergence result in \S\ref{subsec:theory}.

\textbf{Design Goals. } Motivated by the discussion in the previous section, we aim at achieving two goals to deal with the low bandwidth and slow and unstable connectivity conditions that edge devices may experience:
\vspace{-0.3em}
\begin{enumerate}
    \setlength\itemsep{0.1em}
    \item \textit{Enlarging the time window} during which a client can download the model weights from the PS.
    \item \textit{Reducing the bandwidth} requirements in the downlink direction.
\end{enumerate}

Achieving both these goals will enable more heterogeneous clients to participate in the training process, which in turn may reduce bias and improve fairness\footnote{\blue{By fairness, we refer to the situation where clients in regions with limited, unstable, and costly connectivity are keen to participate in the training procedure.}}.

\textbf{Preliminaries. } We use $\lVert\cdot\rVert$ to denote the $L_2$ norm and for every $n\in\naturals$,  $\sbrac{n}\coloneqq\cbrac{1,\ldots,n}$. Let $N$ be the number of clients participating in the federated training procedure. Each client $i\in\sbrac{N}$ is associated with a local loss function $f_i$, and our goal is to minimize the loss with respect to all clients, i.e., to solve
\vspace{-0.5em}
\begin{equation}
    \min_{w\in\reals^d}{f(w)\coloneqq \frac{1}{N}\sum_{i=1}^{N}{f_i(w)}}\; .
\end{equation}
Unlike in standard distributed optimization or cross-silo FL with full or partial but repeated participation, in cross-device FL, only a subset of $S$ clients participate in each optimization round and typically $S\ll N$ (e.g., $S$ in the hundreds/thousands and $N$ in many millions). Thus, clients are not expected to repeatedly participate in the optimization.

Since FL mostly considers non-convex optimization (e.g., neural networks), and global loss minimization of such models is generally intractable, we focus on finding an approximate stationary point, i.e., a point $w$ for which the expected gradient norm $\bbE\norm{\nabla f(w)}$ tends to zero.

For the purpose of formal analysis, we make a few standard assumptions, namely, that $f$ is bounded from below by $f^*$, the local functions $\cbrac{f_i}$ are $\beta$-smooth, i.e., $\norm{\nabla f_i(w) - \nabla f_i(u)}\leq \beta\norm{w-u}, \forall w,u\in\reals^d$, and the access to each local function is done via a stochastic gradient oracle, i.e.,
\begin{Assump}\label{assump:stochastic_grad_oracle}
    For any $w\in\reals^d$, client $i$ computes an unbiased gradient estimator $g^i(w)$ with a variance that is upper bounded by $\sigma^2$, i.e.,
    \begin{equation}
        \bbE[g^i(w)] = \nabla f_i(w), \enskip \bbE\norm{g^i(w) - \nabla f_i(w)}^2\!\leq\!\sigma^2\; .
    \end{equation}
\end{Assump}
Additionally, we assume that the dissimilarity of the local gradients is bounded (i.e., limited client data heterogeneity).
\begin{Assump}\label{assump:bgd}
    There exist constants $G, B\in\reals_{+}$ such that for every $w\in\reals^d$:
    \begin{equation}
        \frac{1}{N}\sum_{i=1}^{N}{\norm{\nabla f_i(w)}^2}\leq G^2 + B^2\norm{\nabla f(w)}^2\; .
    \end{equation}
\end{Assump}
\blue{While some works consider milder~\cite{khaled2020better,haddadpour2021federated} or no~\cite{gorbunov2021marina} assumptions on client heterogeneity in some settings (e.g., exact gradients and/or full participation), this assumption is standard in heterogeneous federated learning~\cite{karimireddy2020scaffold, wang2020tackling,DBLP:conf/iclr/ReddiCZGRKKM21}.}
\subsection{Overview}\label{subsec:overview}
\begin{figure*}
    \centering
    \includegraphics[clip, trim=1.5cm 0cm 0.1cm 0cm,width=0.9\linewidth]{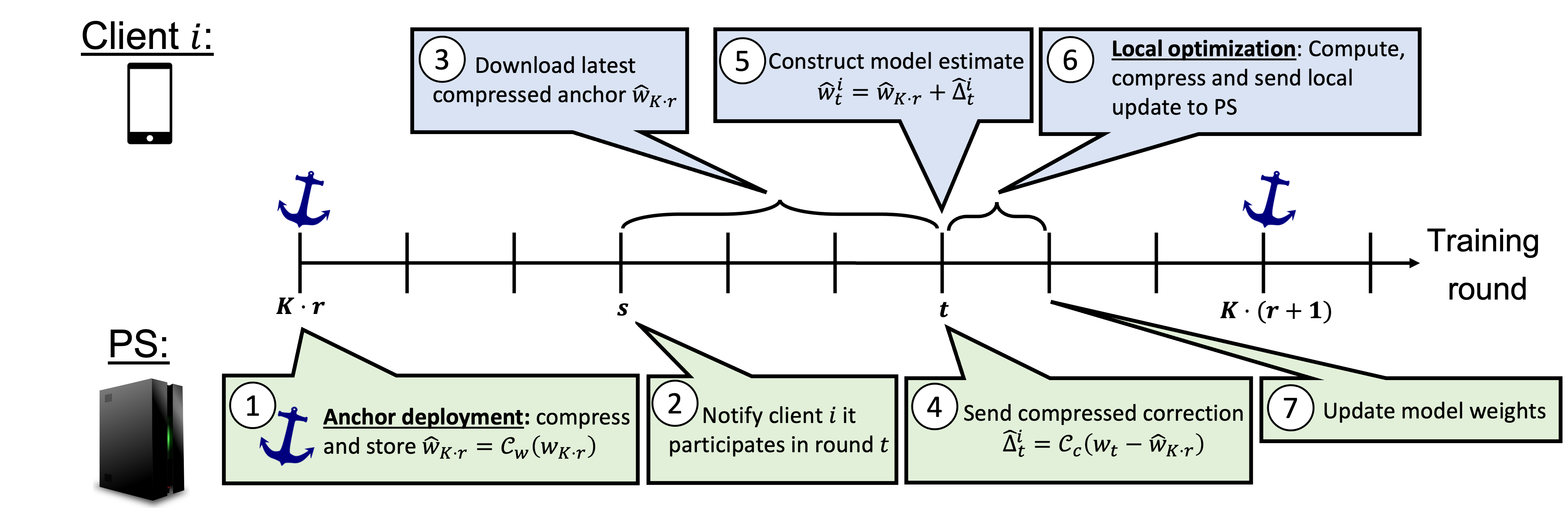}
    \caption{\methodName{}'s training procedure. Here, we illustrate the interaction between the PS and a single client. Typically, multiple clients participate in each training round, and each client may be notified about its participation in a different round.}
    \label{fig:docofl_timeline}
\end{figure*}

A naive approach to reduce bandwidth in the downlink direction is to apply some compression to the model weights and have all participating clients download the compressed weights. That is, in each round $t$, the participating clients $\Scal_t\subseteq\sbrac{N}$ obtain a compressed version of the model weights $\hat{w}_t = \C_w(w_t)$, for some compression operator $\C_w$. The clients can then compute an unbiased gradient estimator at $\hat{w}_t$ and send it back to the PS for aggregation. 

While this method is fairly simple, it has inherent disadvantages with respect to our goals. First, there is no enlarged time window during which clients can download the compressed model weights, as they can only do so at their participation round. Second, unlike with gradient compression, convergence in this setting can be guaranteed only to a proximity that is proportional to the compression error, rendering standard low-bit compression schemes unusable. Indeed, this is the case even for strongly convex functions, as shown by~\citet{chraibi2019distributed} and reinforced by our counter-example in \cref{app:weights_compression_example}.

To tackle these challenges, our approach relies on the following relation: for any $\tau\geq 0$, we can decompose $w_t$ into two ingredients: \emph{Anchor} and \emph{Correction}. Formally,
\begin{equation*}
    \quad w_t = \underbrace{w_{t-\tau}}_{\textit{(i) Anchor}} + \underbrace{w_t - w_{t-\tau}}_{\textit{(ii) Correction}}\;.
\end{equation*}
This implies that:
\begin{itemize}
    \setlength\itemsep{0.1em}
    \item[\textit{(i)}] If a client is notified at round $t-\tau$ about its upcoming participation at round $t$, it can start downloading the anchor, that is, $w_{t-\tau}$, \textit{ahead of its participation round},\footnote{The client can start downloading 
    $w_{t-\tau'}$ at round $t-\tau'$ for some $\tau' \leq \tau$, as long as the download is complete \emph{before} round $t$.}
    \item[\textit{(ii)}] and thus, at round $t$, the client only needs to download the correction, that is, $w_t - w_{t-\tau}$.
\end{itemize}
\vspace{-0.7em}
Yet, merely relying on this relation is not sufficient to achieve our goals; additionally, we seek to \emph{compress} both \emph{(i)} and \emph{(ii)}. However, these terms are inherently different and therefore, should not be treated in the same manner. Essentially, the client has more time to download \emph{(i)}, which is the main ingredient that forms the model weights. Introducing a large error in this term may prevent the model from converging. Conversely, \emph{(ii)} must be downloaded at the participation round of the client, but it is just the sum of $\tau$ recent gradients. 
\begin{algorithm}[t]
\small
\caption{\methodName{} -- Parameter Server
}\label{alg:docofl_ps}
\begin{algorithmic}
    \STATE {\bfseries Input:} Initial weights $w_0\in\reals^d$, learning rate $\eta$, weights (anchors) compression $\C_{w}$, correction compression $\C_{c}$, anchor compression rate $K$, compressed anchors queue $Q\leftarrow\emptyset$ with capacity $\V$, client participation process $\P(\cdot)$

    \FOR{$t=0,\ldots,T-1$}
    \vspace{0.15cm}
    \STATE \textcolor{blue}{$\triangleright\enskip$Anchor Deployment}
    \IF{$\modulu{t}{K} == 0$} 
        \STATE Compress anchor, $\C_{w}(w_t)$
        \STATE $Q.\textup{enqueue}(\C_w(w_{t}))$    \algorithmiccomment{If $Q$ is full, $Q.\textup{dequeue}()$}
    \ENDIF
    \vspace{0.15cm}
    \STATE \textcolor{blue}{$\triangleright\enskip$Client Participation Process}
    \STATE $\Scal_t\leftarrow\P(t)$ \algorithmiccomment{$\abs{\Scal_t} = S$; see \S\ref{subsec:client_selection_process}}
    \vspace{0.15cm}
    \STATE \textcolor{blue}{$\triangleright\enskip$Optimization }
        \FOR{client $i\in\Scal_{t}$ in parallel}
            \STATE Send compressed correction,
            $\hat{\Delta}_{t}^i$ \algorithmiccomment{See \cref{alg:docofl_client}}
            \STATE Obtain compressed local gradient, $\C_g(\hat{g}_t^i)$
        \ENDFOR
        \STATE Aggregate local gradients, $\hat{g}_t\coloneqq \frac{1}{S}\sum_{i\in\Scal_t}{\C_g(\hat{g}_t^i)}$
        \STATE Update weights, $w_{t+1} = w_{t} - \eta \hat{g}_t$
        \vspace{0.15cm}
    \ENDFOR
\end{algorithmic}
\end{algorithm}
\setlength{\textfloatsep}{8pt}
\begin{algorithm}[t]
\small
\caption{\methodName{} -- Client $i$}\label{alg:docofl_client}
\begin{algorithmic}
    \STATE {\bfseries Input:} Gradient compression $\C_g$
        \STATE \textcolor{blue}{\underline{Notification round $s$ (by process $\P$):}} 
        \vspace{0.05cm}
        \STATE Obtain participation round $t$  \algorithmiccomment{$i\in\P(t), s\leq t$}
            \STATE Obtain latest compressed anchor, $y_t^i\leftarrow Q.\textup{top}()$ \algorithmiccomment{Within time window $\sbrac{s,t}$}
            \vspace{0.15cm}
            \STATE \textcolor{blue}{\underline{Participation round $t$:}}
            \STATE Obtain compressed correction,
            $\hat{\Delta}_{t}^i = \C_c(w_t - y_t^i)$
            \STATE Construct current model estimate, $\hat{w}_t^i = y_t^i + \hat{\Delta}_t^i$
            \STATE Compute local gradient, $\hat{g}_t^i = g_t^i(\hat{w}_t^i)$
            \STATE Compress and send local gradient, $\C_{g}(\hat{g}_t^i)$, to PS
\end{algorithmic}
\end{algorithm}

For \emph{(i)}, we develop a new compression technique (see~\S\ref{sec:anchor_compression}), that achieves a better accuracy-bandwidth tradeoff than gradient compression techniques at the cost of higher complexity; we amortize its complexity over several rounds. Using this technique with only several bits per coordinate (e.g., $4$) results in a negligible error. For \emph{(ii)}, we use standard gradient compression techniques with as low as $0.5$ bit per coordinate. Since \emph{(ii)} is only a sum of $\tau$ recent gradients, this error is expected to decay as training progresses.

Overall, we achieve both goals as (1) the download time window is enlarged, with only a small bandwidth fraction that must be used online; and (2) the total downlink bandwidth usage is reduced by up to an order of magnitude compared to existing solutions and without degrading model accuracy.

In Figure~\ref{fig:docofl_timeline}, we show a timeline that illustrates the training procedure of \methodName{}, as we now formally detail on the role of the PS and the clients in our framework.

\textbf{Parameter Server. } As detailed in Algorithm~\ref{alg:docofl_ps}, our PS executes three separate processes throughout the training procedure. 
First, it performs \circled{1} `anchor deployment' -- once in every $K$ rounds, it compresses the model weights and stores the compressed weights in a queue $Q$ of length $\V$. 
Second, the PS employs a client participation process, which we elaborate on in \S\ref{subsec:client_selection_process}; this process determines which clients will participate in a given round. Finally, in each round, the PS obtains the local model updates (i.e., gradients) from the clients participating in that round, computes their average, and \circled{7} updates the model weights. 

\textbf{Client. } Consider a client that is \circled{2} chosen by the PS at some round $s$ to participate in some future round $t$. This means that in round $s\leq t$, the client is notified about its upcoming participation. It can then start \circled{3} downloading the latest anchor stored by the PS; note that the client has a time window of length $t-s$ rounds to download the compressed anchor. At its participation round $t$, the client \circled{4} obtains the compressed correction from the PS, i.e., the compressed difference between the updated model and the compressed anchor obtained earlier, to \circled{5} construct an unbiased estimate of the updated model weights.\footnote{The client obtains the (unbiasedly) compressed difference between $w_t$ and the \textit{compressed} anchor rather than the exact anchor. Thus, the model estimate is unbiased, even if the compressor $\C_w$ is biased. A similar mechanism was used by \citet{horvath2020better} for gradient compression, i.e., `induced compressor'.}~It then \circled{6} locally computes a stochastic gradient, compresses it, and sends the compressed gradient to the PS; see Algorithm~\ref{alg:docofl_client}. 
\subsection{Client Participation Process}\label{subsec:client_selection_process}
An important element in \methodName{} is the client participation process $\P$. For a round number $t$, it returns a subset of clients $\Scal_t\subseteq\sbrac{N}$ of size $S$ to participate in that round. Crucially, in \methodName{}, clients can be notified about their participation prior to their actual participation round. 

This discrepancy between the notification and the participation rounds gives \methodName{} the desired versatility that opens the door for more clients to participate in the optimization process. While current frameworks follow a selection process that notifies a client about its participation just before it takes place, it is possible to consider useful selection/notification processes where some clients (e.g., with weaker connectivity) are notified earlier than others.

Another point to consider is the bias-utility tradeoff, where some choices of $\P$ can allow more clients to participate but may introduce bias in the participation rounds of clients with an untractable effect on the optimization process. Instead, we focus on processes that preserve the property where at each round, the PS can obtain an unbiased estimate of the gradient, which means that we require $\P$ to satisfy the following property: all clients have the same probability of participating in any given round $t$, i.e., $\prob(i\in\P(t)) = S/N$. Surprisingly, such a restriction allows for a wide range of useful selection policies. 

For example, consider a simple scenario where the PS has predetermined time windows $T_s$ and $T_w$ that it associates with ``strongly connected''  and ``weakly connected'' devices, respectively. Then, at each round $t$, the PS randomly selects clients but assigns their participation rounds to $t+T_s$ or $t+T_w$ according to their strength. Observe that this simple, yet very useful scenario satisfies the property we seek after $T_w$ rounds (the first $T_w$ rounds may take longer since, during these initial rounds, the weakly connected clients cannot be notified enough rounds prior to their participation).

\subsection{Theoretical Guarantee}\label{subsec:theory}
The primary challenge in analyzing downlink compression schemes for cross-device FL is that, even when using an unbiased compression method, for which $\mathbb{E}[\C_w(w)] = w$, the resulting gradient estimate $\nabla f(\C_w(w))$ may be biased. This is because, in general, the gradient is not a linear mapping. As mentioned, the resulting bias can hinder convergence; in \Cref{app:weights_compression_example} we show that gradient descent with weights compression may not reach the optimal solution even for strongly convex functions.


Accordingly, we show that \methodName{} converges to a stationary point when $\C_w$ is the identity mapping, i.e., $\C_w(w) = w, \forall w\in\reals^d$ -- a setup that achieves our first goal (i.e., enlarged time window). As our analysis suggests, this identity assumption enables us to effectively bound the gradient bias resulting from the compression. For simplicity, we also assume no uplink compression (i.e., $\C_g$ is also identity), although including it in our analysis is straightforward \blue{for unbiased $\C_g$\footnote{\blue{Unbiasedness in the uplink direction is highly desired since (together with independence) it ensures linearly decaying mean estimation error with respect to the number of clients. For biased $\C_g$, in light of existing results on biased compressors \cite{beznosikov2020biased}, it may be the case that for some biased compressors the theoretical guarantee, with additional challenges, holds.}}}, and we do incorporate it in our experiments. 

Following this result, and the result of \citet{chraibi2019distributed} in the convex case, in \cref{app:goal2_intuition} we give a theoretical intuition and empirical evidence for why \methodName{} works well in setups of interest, when $\C_w$ is \emph{not} the identity function -- achieving our second goal (i.e., total bandwidth reduction).

Before we state our convergence result, we require an additional standard assumption about the correction compression operator $\C_c$, namely, that it has a bounded Normalized Mean-Squared-Error (NMSE)~\cite{philippenko2020bidirectional,richtarik2021ef21,vargaftik2021drive}.
\begin{Assump}\label{assump:bounded_nmse}
    There exists an $\omega\in\reals_{+}$ such that
    \vspace{1mm}
    \begin{equation}
        \bbE\sbrac{\norm{\C_c(w) - w}^2} \leq \omega^2\norm{w}^2, \quad \forall w\in\reals^d\; .
    \end{equation}
\end{Assump}
We now give a convergence result for \methodName{}, namely, Theorem~\ref{thm:main_theorem}. Its full proof is deferred to Appendix~\ref{app:main_proof}; here, we discuss the result and give a proof sketch.
\vspace{2mm}

\begin{restatable}{theorem}{mainThm}
\label{thm:main_theorem}
Let \mbox{$M\!=\! f(w_0) {-} f^*, \tilde{\sigma}^2\!=\! \sigma^2 {+} 4\brac{1 {-} \frac{S}{N}}G^2$}, and $\gamma= 1 + \brac{1 {-} \frac{S}{N}}\frac{B^2}{S}$. Then, \methodName{} with $\C_w$ and $\C_g$ as identity mappings (and appropriate $\eta$) guarantees
    \begin{align} 
        \bbE\sbrac{\frac{1}{T}\sum_{t=0}^{T-1}{\norm{\nabla f(w_t)}^2}}\in \Ocal\left(\sqrt{\frac{M\beta\tilde{\sigma}^2}{TS}} \right. + \nonumber\\ &\hspace{-4.5cm}\left.\frac{(M^2\beta^2\omega^2 K\V\tilde{\sigma}^2)^{1/3}}{T^{2/3}S^{1/3}} + \frac{\gamma M\beta(\omega K\V + 1)}{T}\right) \nonumber .
    \end{align}
\end{restatable}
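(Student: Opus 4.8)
The plan is to run a smoothness-based descent argument while carefully separating the two distinct error sources: partial client participation (controlled by Assumptions~\ref{assump:stochastic_grad_oracle}--\ref{assump:bgd}) and the correction compression (controlled by Assumption~\ref{assump:bounded_nmse}). Since $\C_w$ and $\C_g$ are the identity, the anchor $y_t^i$ is an \emph{exact} past iterate $w_{t-\tau_i}$, so the client's model estimate obeys $\hat w_t^i - w_t = \C_c(\delta_t^i) - \delta_t^i$ with correction $\delta_t^i = w_t - y_t^i = -\eta\sum_{j=t-\tau_i}^{t-1}\hat g_j$. Because anchors are deployed every $K$ rounds into a queue of capacity $\V$, each correction telescopes over a window of length $\tau_i \le K\V$; this structural fact is what forces $K\V$ into the bound. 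Starting from $\bbE f(w_{t+1}) \le \bbE f(w_t) - \eta\bbE\langle \nabla f(w_t), \hat g_t\rangle + \tfrac{\beta\eta^2}{2}\bbE\norm{\hat g_t}^2$, I split $\bbE\langle\nabla f(w_t),\hat g_t\rangle = \norm{\nabla f(w_t)}^2 + \langle\nabla f(w_t), b_t\rangle$, where $b_t \coloneqq \bbE[\hat g_t\mid \mathcal{F}_t] - \nabla f(w_t)$ is the gradient bias from evaluating at the perturbed points $\hat w_t^i$.

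The first block of estimates is per-round. For the bias, the participation property $\prob(i\in\P(t)) = S/N$ gives $\bbE[\hat g_t\mid\mathcal{F}_t] = \tfrac1N\sum_i \bbE_{\C_c}[\nabla f_i(\hat w_t^i)]$, so by $\beta$-smoothness, Jensen, and the NMSE bound (Assumption~\ref{assump:bounded_nmse}), $\norm{b_t} \le \tfrac{\beta}{N}\sum_i\bbE\norm{\hat w_t^i - w_t} \le \tfrac{\beta\omega}{N}\sum_i\norm{\delta_t^i}$; note this uses only bounded NMSE, not unbiasedness of $\C_c$. For the second moment I write $\bbE\norm{\hat g_t}^2 = \norm{\bar g_t}^2 + \bbE\norm{\hat g_t - \bar g_t}^2$, bound the sampling-plus-oracle variance by a finite-population correction estimate together with Assumptions~\ref{assump:stochastic_grad_oracle}--\ref{assump:bgd} (evaluated at $w_t$ after a Young step that moves from $\hat w_t^i$ to $w_t$ and produces the constant $4$), and obtain $\bbE\norm{\hat g_t}^2 \lesssim \gamma\norm{\nabla f(w_t)}^2 + \tfrac{\tilde\sigma^2}{S} + \tfrac{\beta^2\omega^2}{N}\sum_i\norm{\delta_t^i}^2$. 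Here $\tilde\sigma^2 = \sigma^2 + 4(1-\tfrac SN)G^2$ collects the oracle noise and the $G^2$ part of the sampling variance, while the $B^2$ part of the sampling variance is exactly what inflates the coefficient of $\norm{\nabla f(w_t)}^2$ to $\gamma = 1 + (1-\tfrac SN)\tfrac{B^2}{S}$.

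The crux is controlling the correction terms $\norm{\delta_t^i}^2$, which couple round $t$ to the $K\V$ preceding rounds. The key move is to split each $\hat g_j$ into its conditional mean and a martingale-difference noise term: the noise contributions are uncorrelated across $j$ and hence accumulate only \emph{linearly} over the window, giving $\bbE\norm{\delta_t^i}^2 \lesssim \eta^2 K\V\sum_{j=t-\tau_i}^{t-1}\norm{\bar g_j}^2 + \eta^2 K\V\,\tfrac{\tilde\sigma^2}{S}$, whereas a naive bound would spuriously put $(K\V)^2$ on the noise and overstate the rate. Summing over $t$ and swapping the order of summation — each index $j$ lies in at most $K\V$ windows — yields $\sum_t\bbE\norm{\delta_t^i}^2 \lesssim \eta^2(K\V)^2\sum_j\norm{\bar g_j}^2 + \eta^2 K\V\,\tfrac{\tilde\sigma^2}{S}T$. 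The main obstacle is precisely this self-referential feedback: the correction error depends on the very gradient magnitudes it helps generate, so both the variance bound and the descent inequality feed $\sum_t\bbE\norm{\hat g_t}^2$ back into themselves. I resolve it by imposing $\eta = \Ocal\!\big(1/(\beta\gamma(1+\omega K\V))\big)$, which makes every feedback coefficient strictly below $1$, so the $\norm{\bar g_j}^2$ (hence $\norm{\nabla f(w_j)}^2$) terms can be absorbed into the left-hand side.

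Assembling these pieces, summing the descent inequality over $t=0,\dots,T-1$, and using $\bbE f(w_0) - \bbE f(w_T) \le M$ gives, after absorbing the feedback, an inequality of the shape $\tfrac{1}{T}\sum_t\bbE\norm{\nabla f(w_t)}^2 \lesssim \tfrac{M}{\eta T} + \beta\eta\tfrac{\tilde\sigma^2}{S} + \beta^2\omega^2\eta^2 K\V\tfrac{\tilde\sigma^2}{S}$, valid for $\eta \le \eta_{\max} = \Ocal\!\big(1/(\beta\gamma(1+\omega K\V))\big)$. The three terms of the theorem then fall out of optimizing $\eta$: balancing $\tfrac{M}{\eta T}$ against $\beta\eta\tfrac{\tilde\sigma^2}{S}$ gives the leading $\sqrt{M\beta\tilde\sigma^2/(TS)}$ term; balancing $\tfrac{M}{\eta T}$ against the $\eta^2$ term gives the $T^{-2/3}$ term $\big(M^2\beta^2\omega^2 K\V\tilde\sigma^2\big)^{1/3}/(T^{2/3}S^{1/3})$; and when the step-size cap binds, $\tfrac{M}{\eta_{\max}T}$ produces the $\gamma M\beta(\omega K\V + 1)/T$ term. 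Taking $\eta$ to be the smaller of the two balancing values and $\eta_{\max}$ (equivalently, the maximum of the three candidate rates) yields the stated $\Ocal$ bound. I would double-check the finite-population constants (the factor $4$ and the exact form of $\gamma$) in the per-round variance step, since that is where the stated constants are pinned down.
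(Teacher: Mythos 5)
Your proposal is correct and reaches the paper's final working inequality, $\tfrac{1}{T}\sum_t\bbE\norm{\nabla f(w_t)}^2\lesssim \tfrac{M}{\eta T}+\beta\eta\tfrac{\tilde\sigma^2}{S}+\beta^2\omega^2 K\V\eta^2\tfrac{\tilde\sigma^2}{S}$ under $\eta\lesssim 1/(\gamma\beta(1+\omega K\V))$, followed by the same three-way step-size tuning. The skeleton matches the paper's: smoothness descent, the bias bounded by the average model-estimation error via $\beta$-smoothness, a second-moment lemma whose sampling-variance computation (indicator variance plus Assumption~\ref{assump:bgd}, with a Young step moving from $\hat w_t^i$ to $w_t$) produces $\tilde\sigma^2$ and $\gamma$, and a window bound on the correction using Cauchy--Schwarz on the conditional means and orthogonality of the martingale noises. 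Where you genuinely diverge is in resolving the self-referential coupling between the compression error and the gradient second moments: the paper derives a \emph{non-recursive per-round} bound (Lemma~\ref{lem:model_estimation_error}) by strong induction, unrolling the recursion into geometric series in $\rho\Tcal$ and then swapping summation orders (Lemma~\ref{lem:double_sum_of_weighted_grads_to_single_sum}); you instead sum the recursive inequality over $t$ first, use the counting fact that each round lies in at most $K\V$ windows, and absorb the accumulated $\sum_j\bbE\norm{\nabla f(w_j)}^2$ and $\sum_j E_j$ terms into the left-hand side under the same step-size cap. Your sum-then-absorb route (standard in delayed-SGD analyses) avoids the induction entirely and is shorter; the paper's unrolled bound is per-round and hence slightly more informative, but it is only ever used after summation, so nothing in the rate is lost. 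One caveat to tighten when writing this up: your displayed bound $\bbE\norm{\delta_t^i}^2\lesssim \eta^2K\V\sum_j\norm{\bar g_j}^2+\eta^2K\V\tilde\sigma^2/S$ suppresses the fact that the per-round noise variance $\bbE\norm{\hat\xi_j}^2$ itself carries $\gamma\bbE\norm{\nabla f(w_j)}^2$ and compression-error contributions (this is exactly why the paper's recursion is genuinely nested); you do acknowledge this feedback and your cap $\eta\lesssim 1/(\beta\gamma(1+\omega K\V))$ is strong enough to absorb those extra terms as well, but they must appear explicitly in the absorption step, as must the finite-population constants you already flagged.
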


\vspace{-1mm}
The convergence rate in Theorem~\ref{thm:main_theorem} consists of three terms:
\begin{itemize}
    \setlength\itemsep{0.1em}
    \vspace{-2mm}
    \item The first term $\sqrt{\frac{M\beta\tilde{\sigma}^2}{TS}}$ is a slow statistical term that depends only on the noise level $\tilde{\sigma}^2$ (and the objective's properties); importantly, it is independent of \methodName{}'s hyperparameters, $K, \V$, and $\omega$.  

    \item The last term $\frac{\gamma M\beta(\omega K\V + 1)}{T}$ is a fast deterministic term. When $\omega K\V\in\Ocal(1)$ it decreases proportionally to $1/T$, and otherwise, it is proportional to $\omega K\V/T$.

    \item The middle term $\frac{(M^2\beta^2\omega^2 K\V\tilde{\sigma}^2)^{1/3}}{T^{2/3}S^{1/3}}$ is a moderate term that depends on both noise level and \methodName{}'s hyperparameters through the multiplication $\omega^2 K\V\tilde{\sigma}^2$; it is proportionate to $(T^{2/3}S^{1/3})^{-1}$.
\end{itemize}
We next derive observations from Theorem~\ref{thm:main_theorem}. Henceforth, we omit from $\Ocal(\cdot)$ the dependence on $M, \beta$, and $\gamma$.
\begin{corollary}\label{cor:fedavg_equivalence}
    When $\C_c$ is the identity mapping, i.e., $\omega{=}0$, clients obtain the exact model, and thus our method is equivalent to \fedavg{}. Indeed, we get the same asymptotic rate as \fedavg{}~\citep{mcmahan2017communication, karimireddy2019error}, namely, $\Ocal(\sqrt{\tilde{\sigma}^2/TS} + 1/T)$. 
\end{corollary}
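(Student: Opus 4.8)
The plan is to treat this as a direct specialization of Theorem~\ref{thm:main_theorem} together with one algorithmic observation, so the work is almost entirely bookkeeping. First I would establish the claimed equivalence to \fedavg{} at the level of the iterates. In Algorithm~\ref{alg:docofl_client} each participating client forms its model estimate as $\hat{w}_t^i = y_t^i + \C_c(w_t - y_t^i)$, where $y_t^i$ is the (possibly compressed) anchor it downloaded ahead of time. Setting $\C_c$ to the identity makes the correction exact, so the anchor telescopes out and $\hat{w}_t^i = y_t^i + (w_t - y_t^i) = w_t$ for every client, regardless of the anchor's value. Hence every client computes its stochastic gradient at the true weight vector $w_t$, and since the theorem already fixes $\C_g$ to the identity, the PS update in Algorithm~\ref{alg:docofl_ps} reduces exactly to the \fedavg{} aggregation step under the same $S$-out-of-$N$ participation process; no compression error is ever injected.

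Second, I would specialize the convergence bound. The hypothesis $\C_c = \mathrm{id}$ is precisely $\omega = 0$ in Assumption~\ref{assump:bounded_nmse}, so I substitute $\omega = 0$ into the three-term rate of Theorem~\ref{thm:main_theorem}. The middle term carries a factor $\omega^2$ under a cube root, hence scales as $\omega^{2/3}$ and vanishes; the last term's factor $(\omega K \V + 1)$ collapses to $1$, leaving $\gamma M \beta / T$; and the leading statistical term $\sqrt{M\beta\tilde{\sigma}^2 / (TS)}$ is untouched since it never depended on $\omega$, $K$, or $\V$ in the first place. This yields
\[\bbE\sbrac{\frac{1}{T}\sum_{t=0}^{T-1}{\norm{\nabla f(w_t)}^2}} \in \Ocal\!\brac{\sqrt{\frac{M\beta\tilde{\sigma}^2}{TS}} + \frac{\gamma M\beta}{T}}.\]
Applying the stated convention of suppressing the $M$, $\beta$, and $\gamma$ dependence inside $\Ocal(\cdot)$ then reduces this to $\Ocal(\sqrt{\tilde{\sigma}^2/TS} + 1/T)$, matching the classical \fedavg{} rate of \citet{mcmahan2017communication, karimireddy2019error}, which closes the argument.

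I do not expect a genuine obstacle here, since the corollary is a clean limiting case rather than a new estimate. The only points that warrant a moment's care are checking that the anchor genuinely cancels in the estimate $\hat{w}_t^i$ so that the \fedavg{} equivalence is \emph{exact} and not merely asymptotic, and confirming that the middle term's dependence on $\omega$ is a strictly positive power, so that it disappears outright at $\omega = 0$ rather than leaving behind a residual constant.
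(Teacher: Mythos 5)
Your proposal is correct and follows essentially the same (largely implicit) argument as the paper: the equivalence to \fedavg{} is exactly the observation that $\C_c$ being the identity forces $\hat{w}_t^i = y_t^i + (w_t - y_t^i) = w_t$ so no compression error enters the update, and the rate is obtained by substituting $\omega = 0$ into the bound of Theorem~\ref{thm:main_theorem}, which kills the middle term and reduces the last term to $\gamma M\beta/T$, giving $\Ocal\brac{\sqrt{\tilde{\sigma}^2/TS} + 1/T}$ under the stated convention of suppressing $M$, $\beta$, and $\gamma$. No gap here; the only care points you flag (exact cancellation of the anchor, strictly positive power of $\omega$ in the middle term) are handled correctly.
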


\begin{corollary}
    Suppose $\omega K\V\in\Theta(1)$. In that case, we get the following asymptotic rate:
    \begin{align*}
    \Ocal\brac{\sqrt{\frac{\tilde{\sigma}^2}{TS}} + \frac{(\omega\tilde{\sigma}^2)^{1/3}}{T^{2/3}S^{1/3}} + \frac{1}{T}}\; .
    \end{align*}
    Compared to Corollary~\ref{cor:fedavg_equivalence}, we note that the middle term is the additional cost incurred for utilizing compression. Importantly, it decreases when we improve the correction compression, i.e., reduce $\omega$.
\end{corollary}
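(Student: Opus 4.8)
The plan is to specialize the bound from Theorem~\ref{thm:main_theorem} by substituting the assumption $\omega K\V\in\Theta(1)$ into each of its three terms. Recall that, per the instruction stated just before this corollary, we suppress the dependence on $M$, $\beta$, and $\gamma$ inside $\Ocal(\cdot)$, so the starting point of the argument is
\begin{equation*}
\Ocal\brac{\sqrt{\frac{\tilde{\sigma}^2}{TS}} + \frac{(\omega^2 K\V\tilde{\sigma}^2)^{1/3}}{T^{2/3}S^{1/3}} + \frac{\omega K\V + 1}{T}}\;.
\end{equation*}

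First I would dispatch the two straightforward terms. The statistical term $\sqrt{\tilde{\sigma}^2/(TS)}$ contains no factor of $\omega$, $K$, or $\V$, so the assumption leaves it untouched and it carries over verbatim. For the deterministic term, substituting $\omega K\V=\Theta(1)$ gives $\omega K\V + 1 = \Theta(1)$, so this term collapses to $\Ocal(1/T)$.

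The one point requiring care is the middle term, whose cube-root argument is $\omega^2 K\V\tilde{\sigma}^2$ rather than $\omega K\V\tilde{\sigma}^2$. I would factor $\omega^2 K\V = \omega\cdot(\omega K\V)$ and apply the hypothesis to the second factor only, obtaining $\omega^2 K\V = \Theta(\omega)$ -- crucially \emph{not} $\Theta(1)$, since one factor of $\omega$ remains. Hence $(\omega^2 K\V\tilde{\sigma}^2)^{1/3}=\Theta\brac{(\omega\tilde{\sigma}^2)^{1/3}}$, and the middle term simplifies to $\Ocal\brac{(\omega\tilde{\sigma}^2)^{1/3}/(T^{2/3}S^{1/3})}$. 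Collecting the three simplified terms reproduces the claimed rate.

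There is no genuine obstacle here; the result is a direct consequence of Theorem~\ref{thm:main_theorem}. The only subtlety -- and the thing I would be careful not to get wrong -- is avoiding the over-simplification of $\omega^2 K\V$ to a constant. Retaining the leftover factor of $\omega$ is precisely what preserves the residual $\omega$-dependence in the middle term, matching the corollary's accompanying remark that this term is ``the additional cost incurred for utilizing compression'' and that it shrinks as one improves the correction compression (reduces $\omega$).
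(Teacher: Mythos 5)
Your proposal is correct and matches the paper's (implicit) argument exactly: the corollary is a direct substitution of $\omega K\V\in\Theta(1)$ into the bound of Theorem~\ref{thm:main_theorem} with $M,\beta,\gamma$ suppressed, and you correctly handle the one subtle point by factoring $\omega^2 K\V = \omega\cdot(\omega K\V) = \Theta(\omega)$ so that the middle term retains its residual $\omega^{1/3}$-dependence rather than collapsing to a constant.
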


\begin{corollary}
    Consider $\omega \!=\! \Theta(1)$. If $K\V\in\Ocal(\sqrt{\tilde{\sigma}^2 T/S})$, the slow term dominates the rate, which is $\Ocal(\sqrt{\tilde{\sigma}^2/TS})$; that is, we can set $K\V$ as large as $\Ocal(\sqrt{\tilde{\sigma}^2 T/S})$ and still get, similarly to \fedavg{}, a speed-up with $S$, the number of participating clients per-round.
\end{corollary}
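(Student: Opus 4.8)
The plan is to derive Corollary~\ref{thm:main_theorem}'s last consequence purely as a specialization of Theorem~\ref{thm:main_theorem}, so the entire task reduces to an order-of-magnitude comparison among its three terms. Following the convention fixed just before the corollaries (omitting the $M,\beta,\gamma$ dependence from $\Ocal(\cdot)$), I would first rewrite the guarantee in the reduced form
\[
\frac1T\sum_{t=0}^{T-1}\bbE\norm{\nabla f(w_t)}^2 \in \Ocal\!\left(\sqrt{\frac{\tilde\sigma^2}{TS}} + \frac{(\omega^2 K\V\,\tilde\sigma^2)^{1/3}}{T^{2/3}S^{1/3}} + \frac{\omega K\V + 1}{T}\right).
\]
Imposing $\omega=\Theta(1)$ turns $\omega$ into an absolute constant absorbed by $\Ocal(\cdot)$, leaving a \emph{slow} term $\sqrt{\tilde\sigma^2/(TS)}$, a \emph{middle} term $(K\V\,\tilde\sigma^2)^{1/3}/(T^{2/3}S^{1/3})$, and a \emph{fast} term $(K\V+1)/T$. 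The goal is then to show that under $K\V\in\Ocal(\sqrt{\tilde\sigma^2 T/S})$ the slow term dominates, so the whole bound collapses to $\Ocal(\sqrt{\tilde\sigma^2/(TS)})$.

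I would then verify this term by term. Comparing the middle term to the slow term and raising both (positive) sides of $(K\V\,\tilde\sigma^2)^{1/3}/(T^{2/3}S^{1/3})\le(\tilde\sigma^2/(TS))^{1/2}$ to the sixth power to clear all fractional exponents, the inequality reduces to $(K\V)^2\le \tilde\sigma^2 T/S$, i.e.\ $K\V\le\sqrt{\tilde\sigma^2 T/S}$. Thus the hypothesis is \emph{exactly} the threshold at which the middle term equals the slow term, and since the middle term is increasing in $K\V$, the hypothesis guarantees it is $\Ocal(\sqrt{\tilde\sigma^2/(TS)})$. For the fast term I would split $(K\V+1)/T = K\V/T + 1/T$; an identical manipulation shows $K\V/T\le\sqrt{\tilde\sigma^2/(TS)}$ is again equivalent to $K\V\le\sqrt{\tilde\sigma^2 T/S}$. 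The pleasant structural point worth highlighting is that the single condition on $K\V$ simultaneously controls both the middle term and the $K\V/T$ contribution, because both comparisons collapse to the same inequality.

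The one place needing care is the residual $1/T$ in the fast term, which is not governed by the $K\V$ hypothesis. It is, however, lower order: $1/T\le\sqrt{\tilde\sigma^2/(TS)}$ holds once $T\gtrsim S/\tilde\sigma^2$, and in the asymptotic regime in which the corollary is stated this is automatic since $1/T = o(1/\sqrt{T})$. Collecting the three bounds yields $\frac1T\sum_t\bbE\norm{\nabla f(w_t)}^2\in\Ocal(\sqrt{\tilde\sigma^2/(TS)})$, which carries the $1/\sqrt{S}$ factor and hence the advertised linear speed-up in the number $S$ of per-round participants, matching \fedavg{} (Corollary~\ref{cor:fedavg_equivalence}). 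There is no substantive obstacle beyond bookkeeping here; the only mild effort is the elementary verification that one threshold on $K\V$ tames two of the three terms and that the additive $1/T$ is asymptotically negligible.
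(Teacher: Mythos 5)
Your proposal is correct and follows essentially the same route as the paper, which states this corollary as a direct specialization of Theorem~3.1 (with $M,\beta,\gamma$ suppressed): comparing the middle and fast terms against the slow statistical term, observing that both comparisons reduce to the single threshold $K\V\lesssim\sqrt{\tilde\sigma^2 T/S}$, and treating the residual $1/T$ as asymptotically negligible. Your sixth-power verification and the explicit handling of the $1/T$ remainder are just careful bookkeeping of what the paper leaves implicit.
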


\noindent\textit{Proof Sketch.} Denote: $\nabla_t\!\coloneqq\!\nabla f(w_t)$ and $\hat{\nabla}_t\!\coloneqq\!\bbE[\hat{g}_t]$. By the update rule, the smoothness of the objective and standard arguments, we obtain that
\begin{align}\label{eq:smoothness_proofsketch}
    \bbE[f(w_{t+1}) - f(w_t)] \leq&
    -\frac{\eta}{2}\bbE\norm{\nab_t}^2 + \frac{\beta\eta^2}{2}\bbE\norm{\hat{g}_t}^2 \nonumber \\ &+ \frac{\eta}{2}\bbE\lVert\nabhat_t - \nab_t\rVert^2\; .
\end{align}
Using the smoothness of $f$, we can bound the last term in the right-hand side, corresponding to the gradient bias, by the clients' average compression error:
\begin{align}\label{eq:proof_sketch_gradient_bias}
    \bbE\lVert\nabhat_t - \nab_t\rVert^2 \leq \beta^2\cdot\bbE\sbrac{\frac{1}{S}\sum_{i\in\Scal_t}{\lVert\hat{w}_t^i - w_t\rVert^2}}\; .
\end{align}
Additionally, in Lemma~\ref{lem:second_moment_bound}, we derive the following bound on the second moment of the stochastic aggregated gradient: 
\vspace{-3mm}
\begin{equation}\label{eq:grads_as_compress_err}
    \bbE\norm{\hat{g}_t}^2\!\leq\!\frac{\tilde{\sigma}^2}{S}{+}4\gamma\bbE\norm{\nab_t}^2{+}\frac{2\beta^2}{S}\bbE\sum_{i\in\Scal_t}{\norm{\hat{w}_t^i-w_t}^2} .
\end{equation} 
Plugging these bounds back to Eq.~\eqref{eq:smoothness_proofsketch}, we obtain:
\begin{align}\label{eq:proof_sketch_sum_over}
    \bbE[f(w_{t+1}) - f(w_t)] \leq& \brac{-\frac{\eta}{2} {+} 2\gamma\beta\eta^2}\bbE\norm{\nab_t}^2 {+} \frac{\beta\eta^2\tilde{\sigma}^2}{2S} \nonumber\\ &\hspace{-2.1cm}+ \brac{\frac{\beta^2\eta}{2} {+} \beta^3\eta^2}\bbE\sbrac{\frac{1}{S}\sum_{i\in\Scal_t}{\lVert\hat{w}_t^i - w_t\rVert^2}} .
    \vspace{-2mm}
\end{align}
\noindent Recall that each client constructs the current model estimate by summing an anchor and a compressed correction, i.e., $\hat{w}_t^i = y_{t}^{i} + \C_c(w_t - y_t^i)$, where $y_t^i$ (i.e., the anchor) is some model from up to $K\V$ rounds ago; for simplicity, assume that all clients obtain the oldest anchor, i.e., $y_t^i = w_{t-K\V}$. Therefore, using Assumption~\ref{assump:bounded_nmse}, we can bound the compression error by the difference between the current model and the obtained anchor, which is proportional to the sum of the last few (aggregated) gradients:
\begin{align*}\label{eq:compress_err_as_grads}
    \bbE\lVert\hat{w}_t^i {-} w_t\rVert^2 \leq \omega^2\bbE\lVert w_t - y_t^i\rVert^2 = \omega^2\eta^2 \bbE\bigg\lVert\sum_{k=t-K\V}^{t-1}{\hat{g}_k}\bigg\rVert^2.
\end{align*}
Denote the client compression error by $e_t^i\coloneqq \bbE\lVert\hat{w}_t^i - w_t\rVert^2$. Decomposing each gradient into bias and variance as $\hat{g}_k = \nabhat_k + \hat{\xi}_k$, where $\bbE[\hat{\xi}_k] = 0$, we get:
\begin{align}
    e_t^i &\leq 2\omega^2\eta^2 \bbE\bigg\lVert\sum_{k=t-K\V}^{t-1}{\hat{\nab}_k}\bigg\rVert^2 + 2\omega^2\eta^2 \bbE\bigg\rVert\sum_{k=t-K\V}^{t-1}{\hat{\xi}_k}\bigg\rVert^2 \nonumber \\ &\hspace{-0.5mm}\leq 2\omega^2\eta^2 K\V\sum_{k=t-K\V}^{t-1}{\bbE\lVert\hat{\nab}_k\rVert^2} {+} 2\omega^2\eta^2 \sum_{k=t-K\V}^{t-1}{\bbE\lVert\hat{\xi}_k\rVert^2} \nonumber ,
\end{align}
where we used the orthogonality of the noises, i.e., $\bbE[\hat{\xi}_k^\top\hat{\xi}_\ell]=0$ for $k\neq\ell$. Plugging-in $\hat{\xi}_k = \hat{g}_k - \nabhat_k$, we obtain:
\begin{align*}
    e_t^i &\!\leq\! 6\omega^2\eta^2 K\V\sum_{k=t-K\V}^{t-1}{\bbE\lVert\nabhat_k\rVert^2} {+} 4\omega^2\eta^2\sum_{k=t-K\V}^{t-1}{\bbE\lVert \hat{g}_k\rVert^2}.
\end{align*}
Using Eq.~\eqref{eq:proof_sketch_gradient_bias} and \eqref{eq:grads_as_compress_err} to bound $\bbE\lVert\nabhat_k\rVert^2$ and $\bbE\lVert\hat{g}_k\rVert^2$, respectively, we get a recursive relation as the client compression error at round $t$ depends on all prior errors. This is due to error accumulation from computing the aggregated gradients at inaccurate iterates. Lemma~\ref{lem:model_estimation_error} provides a (non-recursive) bound on the compression error at round $t$. Plugging this bound back to Eq.\enskip\eqref{eq:proof_sketch_sum_over}, summing over $t=0,\ldots,T-1$, and using some algebra, we get: 
\begin{align*}
    \bbE[f(w_T) - f(w_0)] \leq& -\frac{\eta}{4}\sum_{t=0}^{T-1}{\bbE\norm{\nab_t}^2} \\ &+ \brac{\frac{\beta\eta^2}{2} + 12\beta^2\omega^2 K\V\eta^3} \frac{T\tilde{\sigma}^2}{S}\; .
\end{align*}
Rearranging terms and tuning $\eta$ concludes the proof. $\QEDW$ 

It is important to note that our framework may also introduce some opportunities for system-wise improvements that are not captured by standard analysis. For example, with a larger pool of clients that are able to participate in a training procedure, it may be easier and faster to reach the desired threshold of participants in each round. Also, it may offer access to more data overall with a different resulting model. How to capture and model such potential benefits in a way that is consistent with and useful in real deployments? Indeed, this is an interesting and significant challenge for future work that may yield new FL policies.

\section{Anchor Compression}\label{sec:anchor_compression}
Compressing the anchors is an essential building block of \methodName{} for reducing the total downlink bandwidth. While many compression techniques exist, most techniques were designed for gradient compression. Although we can use many such methods in our framework, it is less desirable to use a gradient compression scheme for anchor compression since the compression error of the anchor has a larger impact on the resulting model accuracy than the correction error; recall that the model weights, unlike the correction, do not decay throughout training. Accordingly, we designed a compression technique for that purpose. 

We first observe that this technique is considerably less~restricted on the PS side (i.e., compression) than on the client’s side (i.e., decompression). On the PS side, we typically have more resources and time (a new anchor is deployed only every $K$ rounds) to employ more complex calculations, where at the client side we seek speed and lighter computations. 

Consequently, we devised a compression method called \emph{Entropy-Constrained Uniform Quantization (ECUQ)}. The main idea behind this approach is to approximate Entropy-Constrained Quantization (ECQ), which is an optimal scheme among a large family of quantization techniques~\cite{chou1989entropy}. Intuitively, given some vector, ECQ finds the best quantization values (i.e., those that minimize the mean squared error) such that after quantization and entropy encoding (e.g., Huffman coding,~\citet{huffman1952method}) of the resulting quantized vector, a given budget constraint is respected. However, this approach is slow, complex, and unstable (sensitive to hyperparameters), which renders it unsuitable for online compression of large vectors. 

As we detail in Appendix~\ref{app:ecuq}, ECUQ employs a double binary search to efficiently find the maximal number of \emph{uniformly spaced} quantization values (between the minimal and maximal values of the input vector), such that after quantization, the entropy of the vector would be within a small threshold from a given bandwidth constraint. Since computing the entropy of a vector does not require to actually encode it, the double binary search is executed fast, and only after finding these quantization values, we encode the vector. 

In Appendix~\ref{app:ecuq}, we compare ECUQ, ECQ and a technique based on K-Means clustering (ECK-Means), which also approximates ECQ (see \cref{fig:synth}); our results indicate that ECUQ is always better than ECK-Means and competitive with ECQ while being orders of magnitude faster. 

We also compare ECUQ with four recent gradient compression techniques: \textbf{(1)} Hadamard followed by stochastic quantization (SQ)~\cite{suresh2017distributed}; \textbf{(2)} Kashin’s representation followed by SQ~\cite{lyubarskii2010uncertainty, safaryan2022uncertainty}; \textbf{(3)} QSGD followed by Elias Gamma encoding~\cite{alistarh2017qsgd}; and \textbf{(4)} EDEN~\cite{vargaftik2022eden}. We test these in three different scenarios: \textbf{(1)} Model parameters of a convolutional neural network (CNN) with~$\approx{11}$M parameters; \textbf{(2)} Model parameters of an LSTM network with~$\approx{8}$M parameters; and \textbf{(3)} Vectors from a \textit{LogNormal}$(0, 1)$ distribution with $1$M entries.~{We repeat each experiment ten times and report the mean.} 

As shown in Figure~\ref{fig:ecuq_vs_gradient_compression}, ECUQ consistently offers the best NMSE, which is by up to an order of magnitude better than that of the second best. We also find that ECUQ is sufficiently fast to be used by the PS every several rounds (a typical cross-device FL round may take minutes to hours).

\begin{figure}[t]
    \centering
    \includegraphics[clip, trim=4.4cm 0.35cm 3.6cm 0.4cm, width=\linewidth]{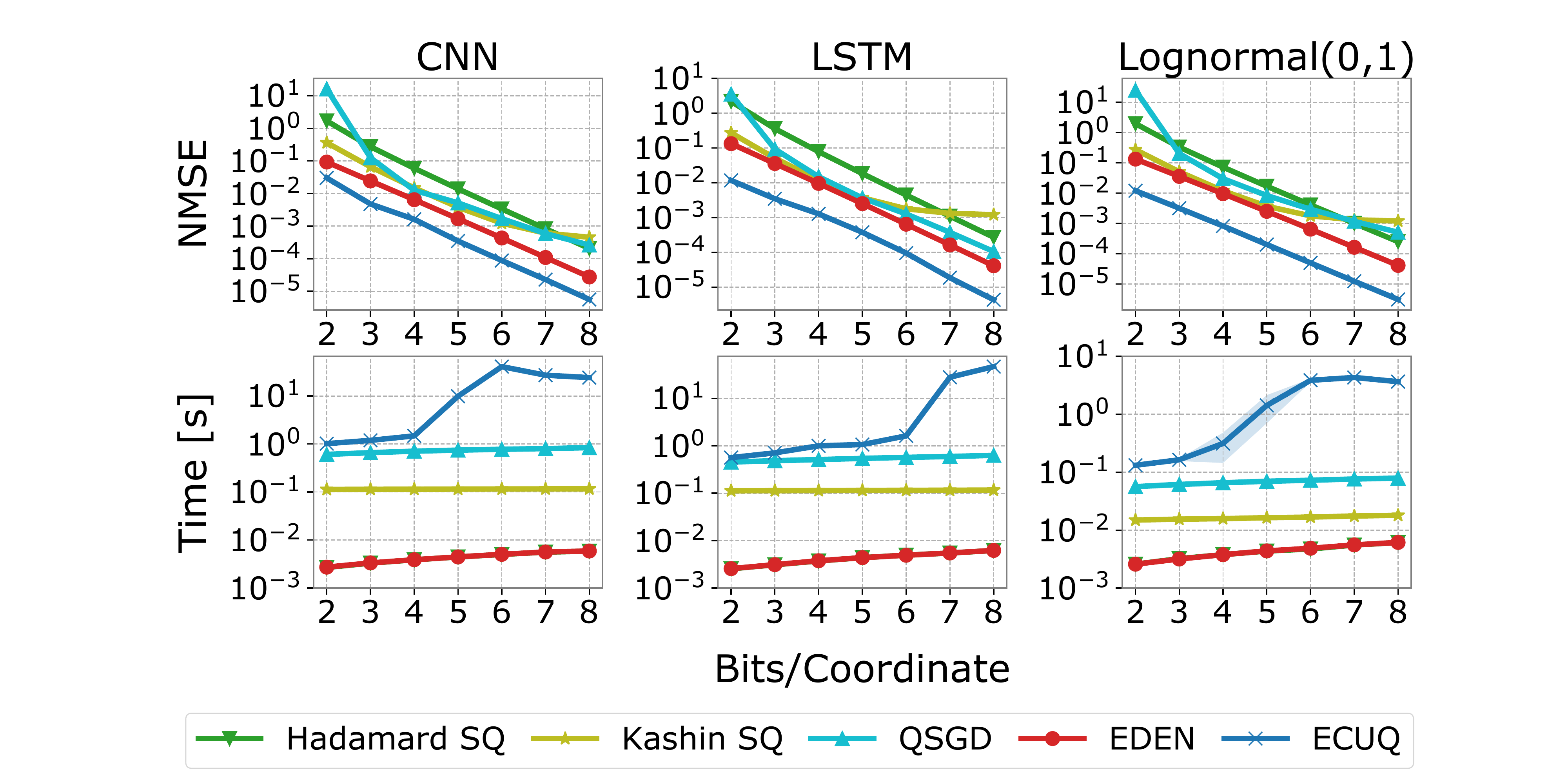}
    \caption{ECUQ vs.~gradient compression methods: NMSE (\textbf{top}) and encoding time (\textbf{bottom}) for three different cases -- recorded model parameters of a CNN (\textbf{left}); LSTM (\textbf{middle}); and vectors drawn from synthetic \textit{LogNormal}$(0, 1)$ distribution (\textbf{right}).} 
    \label{fig:ecuq_vs_gradient_compression}
\end{figure}

\blue{While our comparison here focuses on quantization-based methods, in \Cref{app:ecuq_vs_spars_vs_sketch} we compare ECUQ with three popular compression techniques that do not rely on quantization, namely, Rand-K, Top-K~\cite{alistarh2018convergence}, and Count-Sketch~\cite{charikar2002finding} and show similar trends. Nevertheless, we note that quantization is mostly orthogonal to such techniques and they can be used in conjunction\footnote{\blue{For example, \citet{vargaftik2022eden} use Rand-K as a subroutine alongside quantization to reach a sub-bit compression ratio.}}.}


\section{Experiments}\label{sec:experiments}
As previously mentioned, most prior downlink compression methods rely on repeated client participation and/or control variates and are, therefore, less suitable for large-scale cross-device FL where a client may participate only once or a handful of times during the training procedure. \blue{Also, there are prior methods that target model size reduction via sketching~\cite{rabbani2021comfetch} and sparsification~\cite{shah2021model}, but rely on restrictive assumptions and typically result in longer training times and lower accuracy with an increasing number of clients and decreasing participation ratio. Some other model size reduction methods, such as low-rank approximation~(e.g., \citet{DBLP:journals/corr/TaiXWE15}) are orthogonal to \methodName{} and they can be used in conjunction.}


\begin{table}[t]
\centering
\caption{Tasks configuration.}
\vspace{0.1in}
\resizebox{\linewidth}{!}{%
\begin{tabular}{lllll}
\hline
\textbf{Dataset} & \textbf{Net. (\# params)} &\textbf{\# clients ($S$)} & \textbf{Partition}  \\ 
\hline
CIFAR-100 & ResNet-9 ($4.9$M)  & $200~(10)$ & I.I.D \\
EMNIST & LeNet ($65$K) & $1000~(20)$ & Non-I.I.D \\
Amazon & LSTM ($8.3$M) & $500~(10)$ & I.I.D \\
Shakespeare & LSTM ($820$K) & $1129~(20)$ & Non-I.I.D \\ \hline
\end{tabular}%
}
\label{tab:exp_config}
\end{table}
\begin{table*}[h]
\centering
\caption{Best validation accuracy for different tasks. The configuration triplet $(b_w, b_c, b_g)$ means using $b_w, b_c$, and $b_g$ bits per coordinate for the anchor, correction, and gradient (uplink) compression, respectively. For all tasks, we use $K=10$ and $\V=3$.}
\vspace{0.1in}
\resizebox{\linewidth}{!}{%
\begin{tabular}{cc|clc|clc|clc|clc|}
\cline{3-14}
\multicolumn{1}{l}{}                          &          & \multicolumn{3}{c|}{\textbf{CIFAR-100}}                     & \multicolumn{3}{c|}{\textbf{EMNIST}}                     & \multicolumn{3}{c|}{\textbf{Amazon}}                     & \multicolumn{3}{c|}{\textbf{Shakespeare}}              \\ \cline{3-14} 
\multicolumn{1}{l}{}                          &          & \multicolumn{2}{c|}{$(b_w,b_c,b_g)$}    & Accuracy          & \multicolumn{2}{c|}{$(b_w,b_c,b_g)$}  & Accuracy         & \multicolumn{2}{c|}{$(b_w,b_c,b_g)$}  & Accuracy         & \multicolumn{2}{c|}{$(b_w,b_c,b_g)$}  & Accuracy       \\ \hline
\multicolumn{2}{|c|}{\fedavg{}}                             & \multicolumn{2}{c|}{--}        & $65.03$             & \multicolumn{2}{c|}{--}      & $85.85$            & \multicolumn{2}{c|}{--}      & $\pmb{92.59}$   & \multicolumn{2}{c|}{--}      & $46.10$              \\ \hline
\multicolumn{1}{|c|}{\multirow{2}{*}{\methodName{}}} & Config 1 & \multicolumn{2}{c|}{$(2,2,1)$}   & $64.94$             & \multicolumn{2}{c|}{$(4,4,3)$} & $85.94$            & \multicolumn{2}{c|}{$(6,6,2)$} & $92.51$            & \multicolumn{2}{c|}{$(4,4,4)$} & $45.86$              \\ \cline{2-14} 
\multicolumn{1}{|c|}{}                        & Config 2 & \multicolumn{2}{c|}{$(2,\nicefrac{1}{2},1)$} & $\pmb{65.81}$    & \multicolumn{2}{c|}{$(2,2,3)$} & $\pmb{86.83}$   & \multicolumn{2}{c|}{$(4,4,2)$} & $92.24$            & \multicolumn{2}{c|}{$(2,2,4)$} & $\pmb{46.55}$              \\ \hline
\end{tabular}
}
\label{tab:main_results}
\end{table*}

Accordingly, we compare \methodName{} with an uncompressed baseline obtained by running \fedavg{}~\cite{mcmahan2017communication} without any (i.e., uplink or downlink) compression, utilizing full precision (i.e., 32-bit floats) in both directions. Then, we perform an ablation study that shows the consistency of \methodName{} with respect to its hyperparameters.

We cover a wide range of use cases that include two image classification and two language processing tasks with different configurations and data partitioning, as shortly summarized in Table~\ref{tab:exp_config} and further detailed in \cref{app:experiments_config}. 

\textbf{Image Classification. } We use the CIFAR-100 and EMNIST datasets. For CIFAR-100~\cite{krizhevsky2009learning}, the data distribution among the clients is i.i.d. For EMNIST \cite{cohen2017emnist}, the dataset of each client is composed of $10\%$  i.i.d samples from the entire dataset and $90\%$  i.i.d samples of 2 out of 47 classes~\cite{karimireddy2020scaffold}. 

\textbf{Language Processing. } For language processing, we perform a sentiment analysis task on the Amazon Reviews dataset~\cite{zhang2015character} with i.i.d data partitioning; and a next-character prediction task on the Shakespeare dataset~\cite{mcmahan2017communication}, where each client holds data associated with a single role and play.  

In all simulations, we run \methodName{} with ECUQ for anchor compression (i.e., $\C_w$), and EDEN~\cite{vargaftik2022eden} for correction and uplink compression (i.e., $\C_c$ and $\C_g$).

\textbf{Main Results. } In \cref{tab:main_results}, we report the best validation accuracy achieved during training for \fedavg{} and two representative configurations of \methodName{}. It is evident that the validation accuracy of \methodName{} and \fedavg{} is always competitive; in some tasks, \methodName{} performs somewhat better. For example, for EMNIST, \methodName{} reduces the online and total downlink bandwidth by $16\times\!$ and $8\times\!$, respectively, while achieving higher validation accuracy. 

As is often the case in FL, our evaluation indicates that using more bandwidth does not necessarily lead to higher validation accuracy. While using less bandwidth usually impacts the train accuracy, as it implies a larger compression error, it may positively affect the model's generalization ability. We further reinforce these observations in \blue{\cref{app:additional_results}}.

\blue{\textbf{Hyperparameters Ablation. } 
In \cref{fig:anchor_deployment_rate_ablation}, we report the final \emph{train} accuracy of \methodName{} for the CIFAR-100 task with varying values of $K\in\{10,50,100,500\}$ and $\V\in\{3,5,10\}$ under two bandwidth configurations. The results indicate that our framework performs as expected for a wide range of anchor deployment rates and queue capacities. Additionally, in line with our theoretical findings, when the multiplication $K\V$ is too large, the norm of the correction becomes sizable, which can hinder the final accuracy and even convergence. To allow the use of large $K\V$, one may increase the correction bandwidth, trading online bandwidth for a larger anchor download time window.}
\blue{
We defer an ablation study of the anchor and correction bandwidth budgets to \Cref{subsec:bandwidth_budget_ablation}. These results indicate that \methodName{} performs well for a wide range of budgets and provide further intuition for configuring these parameters.  
}

\begin{figure}
    \centering
    \includegraphics[clip, trim=9.1cm 0.52cm 3.6cm 0.05cm ,width=\linewidth]{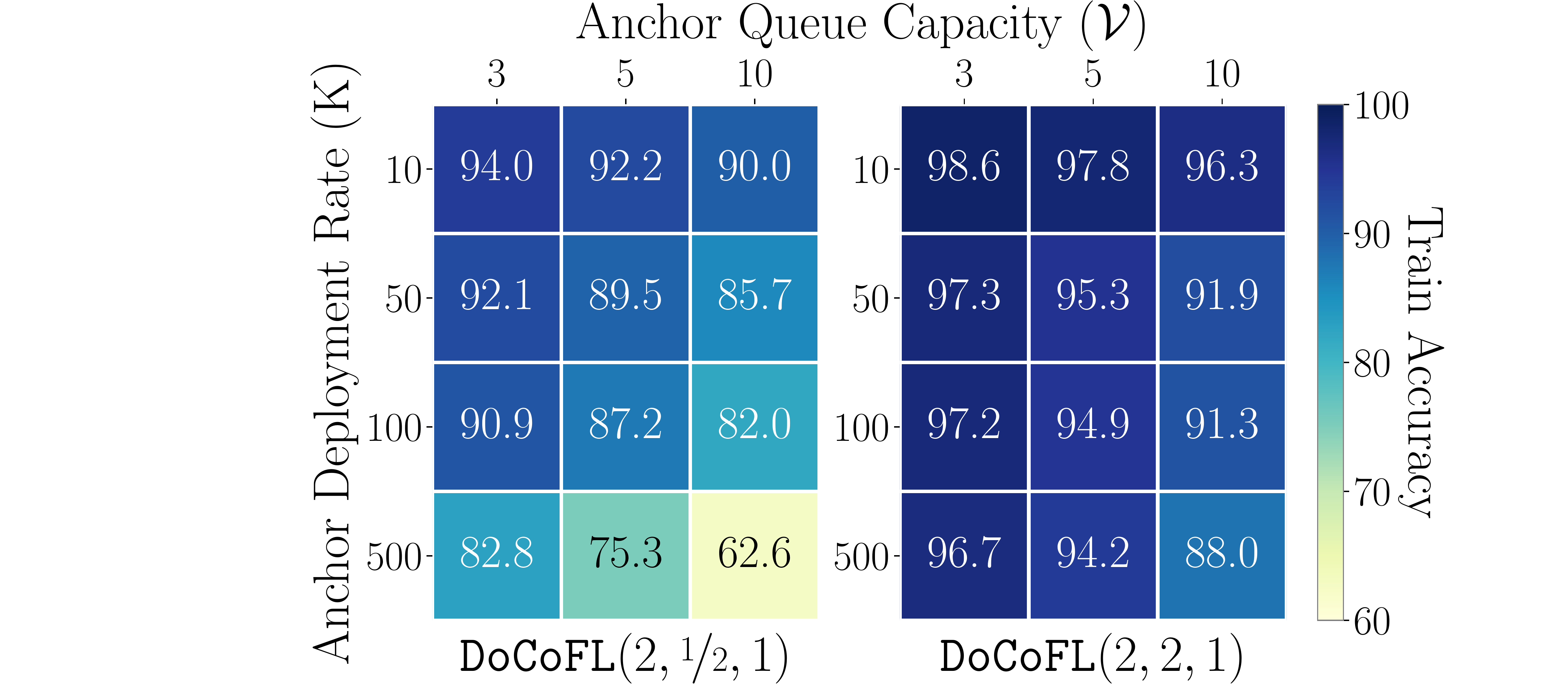}
    \caption{Final train accuracy of \methodName{} for two bandwidth configurations and various values of $K$ and $\V$ on the CIFAR-100 task.}
    \label{fig:anchor_deployment_rate_ablation}
\end{figure}


\blue{\textbf{The Value of the Correction Term. } 
When ignoring the correction, \methodName{} may resemble other frameworks such as delayed gradients (e.g., \citet{stich2019error}) and asynchronous SGD (e.g., \citet{lian2015asynchronous}). In \Cref{subsec:ignoring_correction} we discuss this similarity and convey that ignoring the correction leads to a significant performance drop.}


\blue{\textbf{\methodName{} and \textsf{EF21}.} In \Cref{subsec:docofl_vs_ef21}, we focus on recent advancements based on the \textsf{EF21} technique~\cite{richtarik2021ef21}, which relies on client-side memory. Specifically, we extend \textsf{EF21-PP}~\cite{fatkhullin2021ef21} to support downlink bandwidth reduction using \methodName{} while matching baseline accuracy, where naive model compression results in performance degradation. Also, we discuss some similarities with \textsf{EF21-P} + \textsf{DIANA}~\cite{gruntkowska2022ef21}.}

\section{Conclusion}\label{sec:conclusion}
In this work, we presented \methodName{}, a framework for downlink compression in the challenging cross-device FL setup. By enlarging the clients' model download time window, reducing total downlink bandwidth requirements, and allowing for uplink compression, \methodName{} is designed to allow more resource-constrained and diverse clients to participate in the training procedure. Experiments over various tasks indicate that \methodName{} indeed significantly reduces bi-directional bandwidth usage while performing competitively with an uncompressed baseline. 
\blue{In \cref{app:future_work}, we discuss some directions for future research.}

\section*{Acknowledgements}
We thank the reviewers and the area chair for their helpful suggestions. 
KYL is supported by the Israel Science Foundation (grant No.~447/20) and the Technion Center for Machine Learning and Intelligent Systems (MLIS).

\bibliography{icml2023}
\bibliographystyle{icml2023}

\appendix
\onecolumn

\section{Suboptimality of Gradient Descent with Weights Compression}\label{app:weights_compression_example}
In this section, we give an example of a (strongly) convex function on $\reals$, for which we show that running gradient descent with gradients computed at estimated (i.e., lossily compressed and then decompressed) iterates (rather than at the exact iterates) does not converge to the global minimum. Instead, it converges to a suboptimal solution. 

Let $f:\reals\to\reals$ be the following convex and smooth function:
\begin{equation*}
    f(w) = \frac{1}{2}(w-1)^2 + \frac{1}{2}\sbrac{w-1}_{+}^2\; ,
\end{equation*}
where $\sbrac{w}_{+} = \max\brac{0, w}$. Note that $w^*=1$ is the global minimizer of $f$. 
We analyze the following update rule:
\begin{align*}
    w_{t+1} &= w_t - \eta\nabla f(\hat{w}_t) \\
    \hat{w}_t &= \C_w(w_t)\; ,
\end{align*}
where $\eta>0$ is the step size, and  $\C:\reals\to\reals$ is a randomized, unbiased compression operator with bounded NMSE, i.e.,
\begin{align*}
    \bbE[\C_w(w)] = w, \quad \bbE\abs{\C_w(w) - w}^2\leq \omega_{w}^2\abs{w}^2, \quad \forall w\in\reals\; .
\end{align*}
We can alternatively write: $\hat{w}_t = w_t + \epsilon_t\abs{w_t}$, where $\bbE[\epsilon_t] = 0,\,\, \bbE[\epsilon_t^2]\leq \omega_{w}^2$, and $\cbrac{\epsilon_t}_{t}$ are independent. Thus, we can rewrite the above update rule as
\begin{equation}\label{eq:sfpi}
    w_{t+1} = w_t - \eta\nabla f(w_t + \epsilon_t\abs{w_t})\; .
\end{equation}
In Eq.~\eqref{eq:sfpi}, we repeatedly apply the stochastic mapping: $w\mapsto w - \eta\nabla f(w+\epsilon\abs{w})$. If this process converges in expectation, it converges to a point $\tilde{w}$ for which $\bbE[\nabla f(\tilde{w}+\epsilon\abs{\tilde{w}})]=0$. We show that $w^*=1$ does not satisfy this condition, which will imply that this process does not converge to $w^*$. First, note that $f$ is differentiable, and $\nabla f(w) = (w-1) + \sbrac{w-1}_{+}$. Thus,
\begin{align*}
    \bbE\sbrac{\nabla f(w^* + \epsilon\abs{w^*})} &= \bbE\sbrac{\nabla f(1 + \epsilon)} = \bbE\sbrac{\epsilon + \sbrac{\epsilon}_{+}} = \bbE\sbrac{\epsilon}_{+} \; ,
\end{align*}
where the last equality follows from the linearity of expectation, $\bbE[\epsilon] = 0$. Now, note that unless $\epsilon = 0$ almost surely, we necessarily have $\bbE[\max\cbrac{0,\epsilon}] > 0$, which implies that the iterative update in Eq.~\eqref{eq:sfpi} does not converge in expectation to $w^*=1$. 


\section{Why \methodName{} with Anchor Compression Works}\label{app:goal2_intuition}
To support \cref{thm:main_theorem}, in which we establish the convergence of \methodName{} when the anchor compression $\C_w$ is identity, in this section we give theoretical intuition and numerical results that convey as for why \methodName{} works when $\C_w$ is not the identity mapping.

Consider the framework we analyze in \cref{app:main_proof}, namely the generalization of \methodName{} given by \cref{alg:general_docoFL}. Adding an anchor compressor (i.e., $\C_w$) implies that each client now obtains a \textbf{compressed} outdated model $\hat{y}_t^i = \C_w(y_t^i)$ and a corresponding correction $\hat{\Delta}_t^i = \C_c(w_t - \hat{y}_t^i)$, and constructs $\hat{w}_t^{i} = \hat{y}_t^i + \hat{\Delta}_t^i$. Thus, adding an anchor compression affects the client's model estimation error $\bbE\norm{\hat{w}_t^i - w_t}^2$, which we bound in Eq.~\eqref{eq:model_estimation_err}. 

Denote by $e_{t, i}^2\coloneqq \norm{y_t^i + \hat{\Delta}_t^i - w_t}^2$ the client's squared estimation error when\textbf{ }not using anchor compression (i.e., when $\C_w$ is identity), and by $\hat{e}_{t,i}^2\coloneqq \norm{\hat{y}_t^i + \hat{\Delta}_t^i - w_t}^2$ the squared estimation error when using anchor compression. If one could show that the following condition holds: 
\begin{equation}\label{eq:condition}
    \bbE[\hat{e}_{t, i}^2] \leq C^2\bbE[e_{t, i}^2]\; ,
\end{equation}
for some moderate $C>0$, then we can simply bound $\bbE[\hat{e}_{t, i}^2]$ in the left-hand side of Eq.~\eqref{eq:model_estimation_err}, and the rest of our analysis holds. However, we know that, in general, this condition does not hold (recall the counter-example in \cref{app:weights_compression_example}).

Nevertheless, we empirically show that it holds in our evaluation where we have non-convex and noisy optimization. More generally, in cross-device FL, client sampling and stochastic gradient estimation add natural noise to the optimization process, and we empirically show that the additional estimation error due to anchor compression with ECUQ is sufficiently low and allows convergence, as conveyed above. 

In \cref{fig:client_stimation_err_ratio} we present the ratio $\rho_t\coloneqq \sum_{i\in\Scal_t}{\hat{e}_{t,i}^2}/\sum_{i\in\Scal_t}{e_{t,i}^2}$ for different anchor compression budgets in CIFAR-100 and Amazon Reviews experiments. First, note that the ratio is mostly stable throughout the entire training. Additionally, when we increase the bandwidth for anchor compression, the ratio decreases, to the extent that for $8$ bits per coordinate, the ratio is $\approx 1$; this is when the error induced by the correction compression dominates the estimation error. 

We note that our intuition gives rise to using an adaptive budget for anchor compression; since $\rho_t$ can be measured by the PS (i.e., it has access to $y_t^i, \hat{y}_t^i$, $w_t$ and the correction compressor $\C_c$), we can keep track of it, and increase the anchor compression budget if $\rho_t$ is too large. We leave such investigation to future work. 

\begin{figure}
    \centering
    \includegraphics[clip, trim=2.5cm 0.05cm 2.5cm 0.2cm, width=0.9\linewidth]{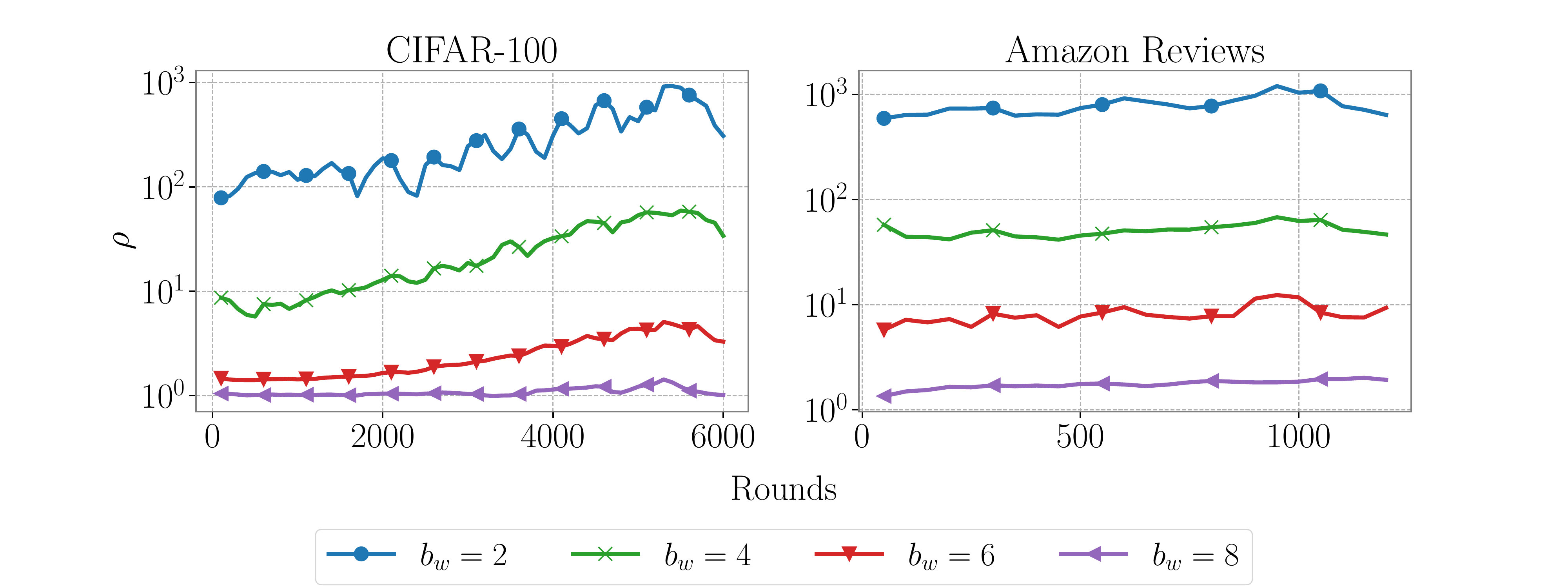}
    \caption{Client estimation error ratio $\rho_t$ on the CIFAR-100 and Amazon Reviews tasks for different anchor compression budgets. For both tasks, we used $2$ bits per coordinate for the correction and gradients (uplink) compression.}
    \label{fig:client_stimation_err_ratio}
\end{figure}

\section{Proof of Theorem~\ref{thm:main_theorem}}\label{app:main_proof}
In this section we prove Theorem~\ref{thm:main_theorem}, which we restate here for convenience,

\renewcommand{\thesection}{\arabic{section}}
\setcounter{section}{3}
\setcounter{theorem}{3}
\begin{theorem}
    Let $\tilde{\sigma}^2\coloneqq \sigma^2 + 4\brac{1 - \frac{S}{N}}G^2, \gamma\coloneqq 1 + \brac{1 - \frac{S}{N}}\frac{B^2}{S}$, and $M\coloneqq f(w_0) - f^*$. Then, running \methodName{} with $\C_w$ and $\C_g$ as the identity mappings (and with appropriately selected $\eta$) guarantees
    \begin{align}
        \bbE\sbrac{\frac{1}{T}\sum_{t=0}^{T-1}{\norm{\nabla f(w_t)}^2}}\in \Ocal\brac{\sqrt{\frac{M\beta\tilde{\sigma}^2}{TS}} + \frac{(M^2\beta^2\omega^2 K\V\tilde{\sigma}^2)^{1/3}}{T^{2/3}S^{1/3}} + \frac{\gamma M\beta(\omega K\V + 1)}{T}} \nonumber .
    \end{align}
\end{theorem}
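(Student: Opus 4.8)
The plan is to run a standard descent-lemma argument while carefully tracking the two error sources introduced by downlink compression: the \emph{bias} of the aggregated gradient, which arises because each client evaluates its stochastic gradient at the estimated iterate $\hat{w}_t^i$ rather than at $w_t$, and the inflated \emph{second moment} of that gradient. First I would combine $\beta$-smoothness of $f$ with the update $w_{t+1}=w_t-\eta\hat{g}_t$ to reach the one-step inequality in Eq.~\eqref{eq:smoothness_proofsketch}, decomposing the expected progress into the descent term $-\frac{\eta}{2}\bbE\norm{\nab_t}^2$, the second-moment term $\frac{\beta\eta^2}{2}\bbE\norm{\hat{g}_t}^2$, and the bias term $\frac{\eta}{2}\bbE\lVert\nabhat_t-\nab_t\rVert^2$. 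The structural point that makes the argument tractable is that both the bias and the second moment reduce to a single quantity, the clients' average model-estimation error $\frac{1}{S}\sum_{i\in\Scal_t}\bbE\lVert\hat{w}_t^i-w_t\rVert^2$: the bias via $\beta$-smoothness (Eq.~\eqref{eq:proof_sketch_gradient_bias}), and the second moment via a separate bound (Lemma~\ref{lem:second_moment_bound}, Eq.~\eqref{eq:grads_as_compress_err}) that simultaneously produces the noise floor $\tilde{\sigma}^2/S$ and the $\gamma$-weighted gradient term, with $\tilde{\sigma}^2$ and $\gamma$ absorbing client sampling and the heterogeneity constants of Assumptions~\ref{assump:stochastic_grad_oracle} and \ref{assump:bgd}. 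Substituting both into Eq.~\eqref{eq:smoothness_proofsketch} yields Eq.~\eqref{eq:proof_sketch_sum_over}.

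The next step is to bound the model-estimation error itself. Since the oldest available anchor is at most $K\V$ rounds stale, I would use $w_t-y_t^i=\eta\sum_{k=t-K\V}^{t-1}\hat{g}_k$ together with the bounded-NMSE property of the correction compressor (Assumption~\ref{assump:bounded_nmse}) to bound $e_t^i\coloneqq\bbE\lVert\hat{w}_t^i-w_t\rVert^2$ by $\omega^2\eta^2$ times the squared norm of this gradient window. Splitting each $\hat{g}_k=\nabhat_k+\hat{\xi}_k$ into its conditional mean and a zero-mean noise, and using orthogonality of the noises across rounds to discard cross terms, reduces $e_t^i$ to weighted sums of $\bbE\lVert\nabhat_k\rVert^2$ and $\bbE\lVert\hat{g}_k\rVert^2$, which I would again route through Eqs.~\eqref{eq:proof_sketch_gradient_bias} and \eqref{eq:grads_as_compress_err}.

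I expect the main obstacle to be precisely here: this produces a \emph{recursive} bound, since $e_t^i$ is expressed through gradient quantities at rounds $k<t$ that themselves depend on the earlier estimation errors $e_k^i$ — error accumulates because each correction sums gradients computed at inaccurate iterates. Unwinding this self-reference into the clean, non-recursive estimate (which I would isolate as a lemma, cf.~Lemma~\ref{lem:model_estimation_error}) is the crux of the proof. I anticipate it requires restricting the step size to roughly $\eta\lesssim 1/(\beta\omega K\V)$, up to constants, so that the coefficient premultiplying the accumulated window of errors is strictly below one; the recursion can then be unrolled and the resulting geometric-type sum over the length-$K\V$ window collapsed, leaving a bound in terms of $\sum_k\bbE\norm{\nab_k}^2$ and the noise floor only.

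Finally I would substitute the non-recursive error bound into Eq.~\eqref{eq:proof_sketch_sum_over}, sum over $t=0,\ldots,T-1$, and telescope the left-hand side using $\bbE[f(w_T)-f(w_0)]\geq -M$. Choosing $\eta$ small enough that the net coefficient of $\bbE\norm{\nab_t}^2$ is at most $-\eta/4$ (which subsumes both the $2\gamma\beta\eta^2\leq\eta/4$ requirement from the descent step and the contraction requirement above), rearranging gives $\frac{1}{T}\sum_t\bbE\norm{\nab_t}^2\lesssim\frac{M}{\eta T}+\beta\eta\frac{\tilde{\sigma}^2}{S}+\beta^2\omega^2 K\V\eta^2\frac{\tilde{\sigma}^2}{S}$. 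The three terms of the theorem then fall out of the standard step-size tuning lemma: balancing $\frac{M}{\eta T}$ against the $\eta$ term yields the statistical term $\sqrt{M\beta\tilde{\sigma}^2/(TS)}$; balancing it against the $\eta^2$ term yields the middle term $(M^2\beta^2\omega^2 K\V\tilde{\sigma}^2)^{1/3}/(T^{2/3}S^{1/3})$; and the stability ceiling $\eta\lesssim 1/(\beta\gamma(\omega K\V+1))$, when binding, yields the deterministic term $\gamma M\beta(\omega K\V+1)/T$. Taking $\eta$ as the minimum of these candidates gives the stated $\Ocal(\cdot)$ rate. $\QEDW$
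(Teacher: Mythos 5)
Your plan is correct and follows essentially the same route as the paper's proof: the same descent-lemma decomposition into descent, bias, and second-moment terms, the same reduction of both error sources to the clients' average model-estimation error (the paper's Lemma~\ref{lem:second_moment_bound}), the same unrolling of the recursive estimation-error bound under a step-size restriction $\eta\lesssim 1/(\gamma\beta(\omega K\V+1))$ (the paper's Lemma~\ref{lem:model_estimation_error}), and the same final bound $\frac{M}{\eta T}+\beta\eta\tilde{\sigma}^2/S+\beta^2\omega^2K\V\eta^2\tilde{\sigma}^2/S$ tuned by taking $\eta$ as the minimum of three candidates. The only detail you leave implicit is the bookkeeping needed to collapse the unrolled recursion after summing over $t$ (the paper's change-of-summation-order argument and Lemma~\ref{lem:double_sum_of_weighted_grads_to_single_sum}), but your plan correctly identifies this as the crux and the intended mechanism matches.
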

\renewcommand{\thesection}{\Alph{section}}

\begin{proof}
To simplify mathematical notations and computations, we analyze a more general framework than \methodName{}, where at each round, each client can download \textit{any} model from up to $\Tcal$ rounds prior to their participation round as anchor. This generalized policy is described in \cref{alg:general_docoFL}. 

\begin{algorithm}[h!]
\caption{Meta-Algorithm (generalization of \methodName{})}\label{alg:general_docoFL}
\begin{algorithmic}
    \STATE {\bfseries Input:} Initial weights $w_0\in\reals^d$, learning rate $\eta$, correction compression $\C_c$, client participation process $\P(\cdot)$
    \FOR{$t=0,\ldots,T-1$}
        \STATE Obtain participating clients, $\Scal_{t}\leftarrow\P(t)$ \algorithmiccomment{$\abs{\Scal_t} = S$}
        \FOR{client $i\in\Scal_{t}$ in parallel}
            \STATE Obtain model weights (anchor), $y_t^i = w_{t - \tau_t^i}$ \algorithmiccomment{$\tau_t^i\in\sbrac{0, \Tcal}$}
            \STATE Obtain compressed correction,
            $\hat{\Delta}_{t}^i = \C_c(w_t - y_t^i)$
            \STATE Construct model estimate, $\hat{w}_t^i = y_t^i + \hat{\Delta}_t^i$
            \STATE Compute local gradient, $\hat{g}_t^i = g_t^i(\hat{w}_t^i)$
            \STATE Communicate $\hat{g}_t^i$ back to server
        \ENDFOR
        \STATE Aggregate local gradients, $\hat{g}_t\coloneqq \frac{1}{S}\sum_{i\in\Scal_t}{\hat{g}_{t}^{i}}$
        \STATE Update weights, $w_{t+1} = w_{t} - \eta \hat{g}_t$
    \ENDFOR
\end{algorithmic}
\end{algorithm}

Using \cref{thm:general_docoFL}, that proves the convergence of \cref{alg:general_docoFL} (see \cref{app:thm:general_docoFL}), we prove \cref{thm:main_theorem}. Namely, \methodName{} with $\C_w$ and $\C_g$ as identity mappings is a private case of \cref{alg:general_docoFL} where $\Tcal=K\V$ and clients can only download models from specific prior rounds (multiplications of $K$). Thus, plugging-in $\Tcal=K\V$ to \cref{thm:general_docoFL} concludes the proof. 
\end{proof}

\subsection{Proof of Theorem~\ref{thm:general_docoFL}}\label{app:thm:general_docoFL}

\begin{Thm}\label{thm:general_docoFL}
    Suppose Assumptions~\ref{assump:stochastic_grad_oracle}-\ref{assump:bounded_nmse} are satisfied. Let $\tilde{\sigma}^2\coloneqq \sigma^2 + 4\brac{1 - \frac{S}{N}}G^2, \gamma\coloneqq 1 + \brac{1 - \frac{S}{N}}\frac{B^2}{S}, \theta\coloneqq\omega\Tcal + 1$, and $M\coloneqq f(w_0) - f^*$. Then, 
    running \cref{alg:general_docoFL} with $\eta = \min\cbrac{\frac{1}{30\gamma\beta\theta}, \sqrt{\frac{2MS}{\beta\tilde{\sigma}^2 T}}, \brac{\frac{MS}{12\beta^2\omega^2\Tcal\tilde{\sigma}^2 T}}^{1/3}}$ guarantees
    \begin{align}
        \bbE\sbrac{\frac{1}{T}\sum_{t=0}^{T-1}{\norm{\nabla f(w_t)}^2}} \leq 4\sqrt{\frac{2M\beta\tilde{\sigma}^2}{TS}} + 8\frac{(12M^2 \beta^2 \omega^2\Tcal\tilde{\sigma}^2)^{1/3}}{T^{2/3}S^{1/3}} + \frac{120\gamma M\beta\theta}{T}\; .
    \end{align}
\end{Thm}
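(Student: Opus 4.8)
The plan is to carry out the per-round descent-with-error-feedback argument sketched in the main text, but for the generalized anchor policy of \cref{alg:general_docoFL} with a free delay budget $\Tcal$ (the theorem then follows by setting $\Tcal=K\V$). First I would derive the one-step inequality \eqref{eq:smoothness_proofsketch} from $\beta$-smoothness of $f$ and the update $w_{t+1}=w_t-\eta\hat g_t$, isolating three quantities: the true gradient norm $\|\nabla_t\|^2$, the second moment $\bbE\|\hat g_t\|^2$, and the bias $\bbE\|\hat\nabla_t-\nabla_t\|^2$ with $\hat\nabla_t=\bbE[\hat g_t]$. The bias bound \eqref{eq:proof_sketch_gradient_bias} follows by combining oracle unbiasedness (Assumption~\ref{assump:stochastic_grad_oracle}), the sampling identity $\bbE_{\Scal_t}[\frac1S\sum_{i\in\Scal_t}h_i]=\frac1N\sum_i h_i$ (guaranteed by the $\prob(i\in\P(t))=S/N$ property), and smoothness of each $f_i$, passing $\nabla f_i(\hat w_t^i)-\nabla f_i(w_t)$ through Jensen twice to reach $\beta^2\,\bbE[\frac1S\sum_{i\in\Scal_t}\|\hat w_t^i-w_t\|^2]$. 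In parallel I would establish Lemma~\ref{lem:second_moment_bound}, i.e.\ \eqref{eq:grads_as_compress_err}, by splitting $\hat g_t$ into zero-mean oracle noise (controlled by independence, giving $\sigma^2/S$) and a true-gradient part, handling the client-sampling variance of the latter via Assumption~\ref{assump:bgd} (this produces the factor $\gamma=1+(1-\tfrac SN)\tfrac{B^2}{S}$ on $\|\nabla_t\|^2$ and the $4(1-\tfrac SN)G^2$ contribution inside $\tilde\sigma^2$) and absorbing the displacement $\hat w_t^i-w_t$ once more through smoothness. Substituting both bounds into \eqref{eq:smoothness_proofsketch} yields the clean per-round estimate \eqref{eq:proof_sketch_sum_over}.

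The core of the argument is controlling the accumulated model-estimation error $\bar e_t:=\bbE[\frac1S\sum_{i\in\Scal_t}\|\hat w_t^i-w_t\|^2]$. Since $\hat w_t^i=y_t^i+\C_c(w_t-y_t^i)$ with $y_t^i=w_{t-\tau_t^i}$ and $\tau_t^i\le\Tcal$, Assumption~\ref{assump:bounded_nmse} gives $\bbE\|\hat w_t^i-w_t\|^2\le\omega^2\bbE\|w_t-y_t^i\|^2$, and the telescoping identity $w_t-y_t^i=\eta\sum_{k=t-\tau_t^i}^{t-1}\hat g_k$ rewrites this displacement as a sum of at most $\Tcal$ recent aggregated gradients. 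Decomposing $\hat g_k=\hat\nabla_k+\hat\xi_k$ with zero-mean, mutually orthogonal noise, applying Cauchy--Schwarz on the drift part (costing a factor $\Tcal$) and orthogonality on the noise part, I reach $\bar e_t\lesssim \omega^2\eta^2\Tcal\sum_{k=t-\Tcal}^{t-1}\bbE\|\hat\nabla_k\|^2+\omega^2\eta^2\sum_{k=t-\Tcal}^{t-1}\bbE\|\hat g_k\|^2$. Re-inserting \eqref{eq:proof_sketch_gradient_bias} and \eqref{eq:grads_as_compress_err} to bound $\bbE\|\hat\nabla_k\|^2$ and $\bbE\|\hat g_k\|^2$ reintroduces the earlier errors $\bar e_k$ on the right-hand side, so $\bar e_t$ satisfies a linear recursion over a sliding window of length $\Tcal$. \textbf{This recursion is the step I expect to be the main obstacle}, since a naive unrolling would compound factors of $\Tcal$ uncontrollably.

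I would resolve it by summing over $t=0,\dots,T-1$ instead of unrolling term by term. Exchanging the order of the double summation, each $\bar e_k$ and each $\bbE\|\nabla_k\|^2$ appears in at most $\Tcal$ windows, so $\sum_t\sum_{k=t-\Tcal}^{t-1}a_k\le\Tcal\sum_k a_k$, converting window sums into global sums at the price of one extra factor $\Tcal$. This gives a self-referential inequality $\sum_t\bar e_t\le c_1\sum_k\bbE\|\nabla_k\|^2+c_2\sum_k\bar e_k+c_3\,T\tilde\sigma^2/S$ with $c_2=\Ocal(\omega^2\eta^2\Tcal^2\beta^2)$. The step-size restriction $\eta\le\tfrac1{30\gamma\beta\theta}$ with $\theta=\omega\Tcal+1$ forces $\omega^2\eta^2\Tcal^2\beta^2\le\eta^2\beta^2\theta^2\le\tfrac1{900}$, so $c_2<\tfrac12$ and the $\sum_k\bar e_k$ term is absorbed into the left-hand side; this produces the non-recursive Lemma~\ref{lem:model_estimation_error}, of the form $\sum_t\bar e_t\lesssim\omega^2\eta^2\Tcal(\Tcal+\gamma)\sum_k\bbE\|\nabla_k\|^2+\omega^2\eta^2\Tcal\,T\tilde\sigma^2/S$.

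Finally I would telescope \eqref{eq:proof_sketch_sum_over} over $t$, substitute the resolved bound on $\sum_t\bar e_t$, and invoke the step-size conditions to verify that the net coefficient of $\sum_t\bbE\|\nabla_t\|^2$ is at most $-\eta/4$ (the term $2\gamma\beta\eta^2$ is bounded by $\eta/15$ through $\eta\le\tfrac1{30\gamma\beta\theta}$, and the compression-feedback contribution from Lemma~\ref{lem:model_estimation_error} is similarly negligible). After using $f(w_0)-f(w_T)\le M$ and dividing by $\eta T/4$, I obtain $\frac1T\sum_t\bbE\|\nabla_t\|^2\le \tfrac{4M}{\eta T}+2\beta\eta\tfrac{\tilde\sigma^2}{S}+48\beta^2\omega^2\Tcal\eta^2\tfrac{\tilde\sigma^2}{S}$. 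Choosing $\eta$ as the stated minimum of three terms and using $\tfrac1\eta\le 30\gamma\beta\theta+\sqrt{\tfrac{\beta\tilde\sigma^2 T}{2MS}}+\big(\tfrac{12\beta^2\omega^2\Tcal\tilde\sigma^2 T}{MS}\big)^{1/3}$ to split $\tfrac{4M}{\eta T}$ into three pieces, each branch of the minimum balances $\tfrac{4M}{\eta T}$ against one remaining term, yielding respectively the deterministic $\Ocal(\gamma M\beta\theta/T)$ term, the statistical $\Ocal(\sqrt{M\beta\tilde\sigma^2/(TS)})$ term, and the intermediate $\Ocal((M^2\beta^2\omega^2\Tcal\tilde\sigma^2)^{1/3}/(T^{2/3}S^{1/3}))$ term, which is exactly the claimed rate after substituting $\Tcal=K\V$.
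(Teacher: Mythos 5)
Your proposal is correct, and it reaches the stated bound along the same skeleton as the paper (smoothness one-step bound, bias bound via per-client smoothness and the $\prob(i\in\P(t))=S/N$ sampling property, the second-moment bound yielding $\tilde\sigma^2$ and $\gamma$, NMSE plus the telescoped displacement $w_t-y_t^i=\eta\sum_{k=t-\tau_t^i}^{t-1}\hat g_k$), but it resolves the error recursion by a genuinely different and leaner device. The paper's Lemma~\ref{lem:model_estimation_error} handles the recursion \emph{per round} by strong induction, repeatedly unrolling with geometric weights $(\rho\Tcal)^k$ and then re-summing the resulting double sums with the auxiliary Lemmas~\ref{lem:double_sum_to_single_sum}, \ref{lem:term_by_term_differentiation}, and \ref{lem:double_sum_of_weighted_grads_to_single_sum}. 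You instead sum the windowed recursion over $t=0,\dots,T-1$ first, note that each $\bar e_k$ and $\bbE\|\nabla_k\|^2$ falls in at most $\Tcal$ windows, and absorb the aggregate error term on the left using $\rho\Tcal=20\beta^2\omega^2\Tcal^2\eta^2\le\frac{1}{2}$, which follows from $\eta\le\frac{1}{30\gamma\beta\theta}$ with $\theta=\omega\Tcal+1$. This is sufficient because the final telescoping argument only ever uses $\sum_t \bar e_t$, never the individual $e_t$; your constants ($c_2\le 1/45$, feedback contribution roughly $56\gamma\beta^2\omega^2\Tcal^2\eta^3\le 0.07\eta$, plus $2\gamma\beta\eta^2\le\eta/15$) comfortably leave the coefficient of $\sum_t\bbE\|\nabla_t\|^2$ below $-\eta/4$, and the resulting $\tilde\sigma^2$-coefficients are no larger than the paper's, so the stated bound with constants $4$, $8$, $120$ still holds after the same three-way step-size split. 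What the paper's heavier induction buys is a valid bound on the estimation error at every individual round $t$ (a pointwise statement that could be reused elsewhere); what your sum-then-absorb argument buys is the elimination of the strong induction and all three geometric-series lemmas at no cost to the final rate. One minor point to make explicit when writing it up: your $\bar e_t$ is averaged over the sampled cohort $\Scal_t$ while the recursion you re-insert (through the bias and second-moment bounds) involves the all-client average $\frac1N\sum_i\bbE\|\hat w_t^i-w_t\|^2$; these coincide in expectation precisely because the participation process is independent of the optimization trajectory, which is the same device the paper uses when bounding term $(A)$ in Lemma~\ref{lem:second_moment_bound}, but it deserves a sentence.
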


\begin{proof}
For the ease of notation, let $\nab_t\coloneqq \nabla f(w_t)$ and $\tilde{\sigma}^2_S\coloneqq \tilde{\sigma}^2/S$. Throughout our analysis, we sometimes use $\hat{w}_t^i$ even when $i\notin\Scal_t$, which is not well-defined. To resolve this, one can think about the following mathematically equivalent process, where at each round, all clients $i\in\sbrac{N}$ obtain some previous model (anchor) $y_t^i$ and the corresponding correction $\hat{\Delta}_t^i$, but only $i\in\Scal_t$ actually participate in the optimization. In that sense, for all $i\notin\Scal_t$, $\hat{w}_t^i$ is the estimated model of client $i$ if it were to participate in round $t$.



    Let $\nabhat_t\coloneqq\frac{1}{N}\sum_{i\in\sbrac{N}}{\nabla f_i(\hat{w}_t^i)} = \bbE[\hat{g}_t]$. From the $\beta$-smoothness of the objective, 
    \begin{align}\label{eq:one_step_improvement}
        \bbE[f(w_{t+1}) - f(w_t)] &\leq -\eta\bbE[\hat{g}_t^\top\nab_t] + \frac{\beta\eta^2}{2}\bbE\norm{\hat{g}_t}^2 \nonumber \\ &= - \eta\bbE[\nabhat_t^\top\nab_t] + \frac{\beta\eta^2}{2}\bbE\norm{\hat{g}_t}^2 \nonumber \\ &= - \eta\bbE\norm{\nab_t}^2 + \underbrace{\eta\bbE[\nab_t^\top(\nab_t - \nabhat_t)]}_{=(A)} + \frac{\beta\eta^2}{2}\bbE\norm{\hat{g}_t}^2 \; ,
    \end{align}
    where the first equality follows from the law of total expectation, and the second equality from the linearity of expectation. 

    \paragraph{Bounding $(A)$: } Using the inequality $a^\top b\leq \frac{1}{2}\norm{a}^2 + \frac{1}{2}\norm{b}^2$, we get that
    \begin{equation*}
        \eta\bbE[\nab_t^\top(\nab_t - \nabhat_t)] \leq \frac{\eta}{2}\bbE\norm{\nab_t}^2 + \frac{\eta}{2}\bbE\lVert\nab_t - \nabhat_t\rVert^2\; . 
    \end{equation*}
    Focusing on the second term in the right-hand side, we have:
    \begin{align}\label{eq:grad_bias_bound_by_estimation_err}
        \bbE\lVert\nab_t - \nabhat_t\rVert^2 &= \bbE\bnorm{\frac{1}{N}\sum_{i=1}^{N}{\brac{\nabla f_i(w_t) - \nabla f_i(\hat{w}_t^i)}}}^2 \nonumber \\ &\leq \frac{1}{N}\sum_{i=1}^{N}{\bbE\norm{\nabla f_i(w_t) - \nabla f_i(\hat{w}_t^i)}^2} \nonumber \\ &\leq \frac{\beta^2}{N}\sum_{i=1}^{N}{\bbE\lVert\hat{w}_t^i - w_t\rVert^2}\; ,
    \end{align}
    where in the first inequality we used Lemma~\ref{lem:technical_norm_of_sum_to_sum_of_norms}, and the second inequality follows from the $\beta$-smoothness of each $f_i$. Plugging back this bound, we get:
    \begin{equation*}
        \eta\bbE[\nab_t^\top(\nab_t - \nabhat_t)] \leq \frac{\eta}{2}\bbE\norm{\nab_t}^2 + \frac{\beta^2\eta}{2N}\sum_{i=1}^{N}{\bbE\norm{\hat{w}_t^i - w_t}^2}\; .
    \end{equation*}
    Using Lemma~\ref{lem:second_moment_bound} to bound $\bbE\lVert\hat{g}_t\rVert^2$ and the bound on $(A)$, we get from Eq.~\eqref{eq:one_step_improvement} that
    \begin{align*}
        \bbE[f(w_{t+1}) - f(w_t)] \leq& -\frac{\eta}{2}\bbE\norm{\nab_t}^2 +  \frac{\beta^2\eta}{2N}\sum_{i=1}^{N}{\bbE\norm{\hat{w}_t^i - w_t}^2} \\ &+ \frac{\beta\eta^2}{2}\brac{\tilde{\sigma}^2_S + 4\gamma\bbE\norm{\nab_t}^2 + \frac{2\beta^2}{N}\sum_{i=1}^{N}{\bbE\norm{\hat{w}_t^i - w_t}^2}} \nonumber \\ =& \brac{-\frac{\eta}{2} + 2\gamma\beta\eta^2}\bbE\norm{\nab_t}^2 + \frac{\beta\eta^2\tilde{\sigma}^2_S}{2} \\ &+ \brac{\frac{\beta^2\eta}{2} + \beta^3\eta^2}\cdot\frac{1}{N}\sum_{i=1}^{N}{\bbE\norm{\hat{w}_t^i - w_t}^2} \; .
    \end{align*}
    Applying Lemma~\ref{lem:model_estimation_error}, we can bound $\frac{1}{N}\sum_{i=1}^{N}{\bbE\lVert\hat{w}_t^i - w_t\rVert^2}$ to get that
    {\small\begin{align}\label{eq:one_step_improvement_2}
        \bbE[f(w_{t+1}) - f(w_t)] \leq& \brac{-\frac{\eta}{2} + 2\gamma\beta\eta^2}\bbE\norm{\nab_t}^2 + \frac{\beta\eta^2\tilde{\sigma}^2_S}{2} \nonumber \\ &+ \brac{\frac{\beta^2\eta}{2} + \beta^3\eta^2}\sbrac{\alpha\brac{1 + \sum_{k=1}^{t}{k(\rho\Tcal)^{k}}}\tilde{\sigma}_S^2 + \frac{2\gamma}{\beta^2\Tcal}\sum_{k=1}^{t}{(\rho\Tcal)^k\sum_{\ell=t-k\Tcal}^{t-k}{\bbE\lVert\nab_\ell\rVert^2}}} \nonumber \\ =& \brac{-\frac{\eta}{2} + 2\gamma\beta\eta^2}\bbE\norm{\nab_t}^2 + \sbrac{\frac{\beta\eta^2}{2} + \alpha\brac{\frac{\beta^2\eta}{2} + \beta^3\eta^2}\brac{1 + \sum_{k=1}^{t}{k(\rho\Tcal)^k}}}\tilde{\sigma}_S^2 \nonumber \\ &+ \brac{\frac{\beta^2\eta}{2} + \beta^3\eta^2}\frac{2\gamma}{\beta^2\Tcal}\underbrace{\sum_{k=1}^{t}{(\rho\Tcal)^k\sum_{\ell=t-k\Tcal}^{t-k}{\bbE\lVert\nab_\ell\rVert^2}}}_{=(B)}\; ,
    \end{align}}
    
    \noindent where $\alpha=4\omega^2\eta^2\Tcal$ and $\rho=20\beta^2\omega^2\eta^2\Tcal$. Since $\eta\leq \frac{1}{30\gamma\beta(\omega\Tcal + 1)}\leq \frac{1}{\sqrt{40}\beta\omega\Tcal}$, it holds that $\rho\Tcal = 20\beta^2\omega^2\Tcal^2\eta^2 \leq 1/2 < 1$, and thus we can bound the coefficient of $\tilde{\sigma}_S^2$ using Lemma~\ref{lem:term_by_term_differentiation} as 
    \begin{equation}\label{eq:linear_times_geomteric_series}
        \sum_{k=1}^{t}{k(\rho\Tcal)^k} \leq \sum_{k=1}^{\infty}{k(\rho\Tcal)^k} = \frac{\rho\Tcal}{(1 - \rho\Tcal)^2} \leq 4\rho\Tcal\leq 2\; ,
    \end{equation}
    where we used $\frac{1}{(1-\rho\Tcal)^2}\leq 4$, and $\rho\Tcal\leq 1/2$. 
    \paragraph{Bounding $(B)$: } To bound $(B)$, we change the summation order. Consider a fixed $\ell\in\naturals$. Note that $(\rho\Tcal)^k$ appears as a coefficient of $\bbE\norm{\nab_\ell}^2$ if and only if $t-k\Tcal\leq \ell\leq t-k$, which is equivalent to $\frac{t-\ell}{\Tcal}\leq k\leq t-\ell$. Therefore, we have 
    \begin{align*}
        \sum_{k=1}^{t}{(\rho\Tcal)^k\sum_{\ell=t-k\Tcal}^{t-k}{\bbE\norm{\nab_\ell}^2}} &= \sum_{\ell=0}^{t-1}{\brac{\sum_{k=\ceil{\frac{t-\ell}{\Tcal}}}^{t-\ell}{(\rho\Tcal)^k}}\bbE\norm{\nab_\ell}^2} \\ &\leq \sum_{\ell=0}^{t-1}{\brac{\sum_{k=\ceil{\frac{t-\ell}{\Tcal}}}^{\infty}{(\rho\Tcal)^k}}\bbE\norm{\nab_\ell}^2} \\ &= \frac{1}{1-\rho\Tcal}\sum_{\ell=0}^{t-1}{(\rho\Tcal)^{\ceil{\frac{t-\ell}{\Tcal}}}\bbE\norm{\nab_{\ell}}^2}
    \end{align*}
    Plugging this bound and Eq.~\eqref{eq:linear_times_geomteric_series} back to Eq.~\eqref{eq:one_step_improvement_2} gives
    \begin{align}
        \bbE[f(w_{t+1}) - f(w_t)] &\leq \brac{-\frac{\eta}{2} + 2\gamma\beta\eta^2}\bbE\norm{\nab_t}^2 + \brac{\frac{\beta\eta^2}{2} + 3\alpha\brac{\frac{\beta^2\eta}{2} + \beta^3\eta^2}}\tilde{\sigma}^2_S \nonumber \\ &\quad+ \brac{\eta + 2\beta\eta^2}\frac{\gamma}{(1-\rho\Tcal)\Tcal}\sum_{k=0}^{t-1}{(\rho\Tcal)^{\ceil{\frac{t-k}{\Tcal}}}\bbE\norm{\nab_k}^2} \nonumber \; .
    \end{align}
    Summing over $t=0,\ldots,T-1$, we obtain
    \begin{align}\label{eq:sum_over_T_err}
        \bbE[f(w_{T}) - f(w_0)] =& \sum_{t=0}^{T-1}{\bbE[f(w_{t+1}) - f(w_t)]} \nonumber \\ \leq& \brac{-\frac{\eta}{2} + 2\gamma\beta\eta^2}\sum_{t=0}^{T-1}{\bbE\norm{\nab_t}^2} + \brac{\frac{\beta\eta^2}{2} + 3\alpha\brac{\frac{\beta^2\eta}{2} + \beta^3\eta^2}}T\tilde{\sigma}^2_S \nonumber \\ &+ \brac{\eta + 2\beta\eta^2}\frac{\gamma}{(1-\rho\Tcal)\Tcal}\underbrace{\sum_{t=0}^{T-1}{\sum_{k=0}^{t-1}{(\rho\Tcal)^{\ceil{\frac{t-k}{\Tcal}}}\bbE\norm{\nab_k}^2}}}_{=(C)}\; .
    \end{align}
    Focusing on $(C)$, we can change the outer summation bounds as 
    \begin{equation}\label{eq:double_sum_to_be_bounded_by_lemma}
        \sum_{t=0}^{T-1}{\sum_{k=0}^{t-1}{(\rho\Tcal)^{\ceil{\frac{t-k}{\Tcal}}}\bbE\norm{\nab_k}^2}} = \sum_{t=1}^{T-1}{\sum_{k=0}^{t-1}{(\rho\Tcal)^{\ceil{\frac{t-k}{\Tcal}}}\bbE\norm{\nab_k}^2}} \leq \sum_{t=1}^{T}{\sum_{k=0}^{t-1}{(\rho\Tcal)^{\ceil{\frac{t-k}{\Tcal}}}\bbE\norm{\nab_k}^2}}\; .
    \end{equation}
    Now, we can bound the right-hand side using Lemma~\ref{lem:double_sum_of_weighted_grads_to_single_sum} with $a=\rho\Tcal<1$ and $x_k = \bbE\norm{\nab_k}^2\geq 0$ to get that
    \begin{equation*}
        \sum_{t=1}^{T}{\sum_{k=0}^{t-1}{(\rho\Tcal)^{\ceil{\frac{t-k}{\Tcal}}}\bbE\norm{\nab_k}^2}} \leq \Tcal\frac{\rho\Tcal}{1-\rho\Tcal}\sum_{t=0}^{T-1}{\bbE\norm{\nab_t}^2}\; .
    \end{equation*}
    Plugging this bound back to Eq.~\eqref{eq:sum_over_T_err} and using $\frac{1}{(1-\rho\Tcal)^2}\leq 4$ gives
    \begin{align*}
        \bbE[f(w_T) - f(w_0)] &\leq \brac{-\frac{\eta}{2} + 2\gamma\beta\eta^2}\sum_{t=0}^{T-1}{\bbE\norm{\nab_t}^2} + \brac{\frac{\beta\eta^2}{2} + 3\alpha\brac{\frac{\beta^2\eta}{2} + \beta^3\eta^2}}T\tilde{\sigma}^2_S \\ &\quad+ \brac{\eta + 2\beta\eta^2}\frac{\gamma\rho\Tcal}{(1-\rho\Tcal)^2}\sum_{t=0}^{T-1}{\bbE\norm{\nab_t}^2} \\ &\leq \brac{-\frac{\eta}{2} + 2\gamma\beta\eta^2+4\gamma\rho\Tcal\brac{\eta + 2\beta\eta^2}}\sum_{t=0}^{T-1}{\bbE\norm{\nab_t}^2} \\ &\quad+ \brac{\frac{\beta\eta^2}{2} + \frac{3\alpha\beta^2}{2}\brac{\eta + 2\beta\eta^2}}T\tilde{\sigma}^2_S \\ &\leq \brac{-\frac{\eta}{2} + 2\gamma\beta\eta^2+8\gamma\rho\Tcal\eta}\sum_{t=0}^{T-1}{\bbE\norm{\nab_t}^2} + \brac{\frac{\beta\eta^2}{2} + 3\alpha\beta^2\eta}T\tilde{\sigma}^2_S \; ,
    \end{align*}
    where in the last inequality we used the fact that $\eta\leq\frac{1}{30\gamma\beta\theta}\leq\frac{1}{30\beta}$ to bound $2\beta\eta^2\leq \eta$.

    Substituting $\alpha$ and $\rho$, we obtain:
    \begin{align*}
        \bbE[f(w_T) - f(w_0)] &\leq \brac{-\frac{\eta}{2} + 2\gamma\beta\eta^2+160\gamma\beta^2\omega^2\Tcal^2\eta^3}\sum_{t=0}^{T-1}{\bbE\norm{\nab_t}^2} \\ &\quad+ \brac{\frac{\beta\eta^2}{2} + 12\beta^2\omega^2\Tcal\eta^3} T\tilde{\sigma}^2_S\; .
    \end{align*}
    Since $\eta\leq\frac{1}{30\gamma\beta\theta}$, we can bound the coefficient of $\sum_{t=0}^{T-1}{\bbE\norm{\nab_t}^2}$ using Lemma~\ref{lem:sum_of_grads_coeff_bound}. We get:
    \begin{equation}\label{eq:final_bound_etas}
        \bbE[f(w_{T}) - f(w_0)] \leq -\frac{\eta}{4}\sum_{t=0}^{T-1}{\bbE\norm{\nab_t}^2} + \brac{\frac{\beta\eta^2}{2} + 12\beta^2\omega^2\Tcal\eta^3} T\tilde{\sigma}^2_S\; .
    \end{equation}
    Rearranging terms, multiplying by $4/\eta T$, and plugging $\tilde{\sigma}_S^2 = \tilde{\sigma}^2/S$ then gives
    \begin{align*}
        \bbE\sbrac{\frac{1}{T}\sum_{t=0}^{T-1}{\norm{\nab_t}^2}} &\leq \frac{4M}{\eta T} + \frac{2\beta\tilde{\sigma}^2}{S} \eta + \frac{48\beta^2\omega^2\Tcal\tilde{\sigma}^2}{S}\eta^2 \; ,
    \end{align*}
    where we also used $\bbE[f(w_0) - f(w_T)]\leq f(w_0) - f^*\leq M$. Applying Lemma~\ref{lem:learning_rate_bound} with our learning rate $\eta$, we finally obtain:
    \begin{align*}
        \bbE\sbrac{\frac{1}{T}\sum_{t=0}^{T-1}{\norm{\nab_t}^2}} \leq& \frac{4M}{T}\brac{30\gamma\beta\theta + \sqrt{\frac{\beta\tilde{\sigma}^2 T}{2MS}} + \brac{\frac{12\beta^2\omega^2\Tcal\tilde{\sigma}^2 T}{MS}}^{1/3}} + \frac{2\beta\tilde{\sigma}^2}{S}\cdot\sqrt{\frac{2MS}{\beta\tilde{\sigma}^2 T}} \\ &+ \frac{48\beta^2\omega^2\Tcal\tilde{\sigma}^2}{S}\cdot\brac{\frac{MS}{12\beta^2\omega^2\Tcal\tilde{\sigma}^2 T}}^{2/3} \\ =& 4\sqrt{\frac{2M\beta\tilde{\sigma}^2}{TS}} + 8\frac{(12M^2 \beta^2 \omega^2\Tcal\tilde{\sigma}^2)^{1/3}}{T^{2/3}S^{1/3}} + \frac{120\gamma\beta\theta}{T} \; ,
    \end{align*}
    which concludes the proof.
   
\end{proof} 

\subsection{Technical Lemmata}
In this section, we introduce some technical results used throughout our analysis. We start with the following lemma, yielding a bound on the second moment of the aggregated gradients that our PS uses to update its model. 

\setcounter{theorem}{0}

\begin{Lem}\label{lem:second_moment_bound}
    Consider the notations of \cref{thm:general_docoFL}. For every $t\in\sbrac{T}$, it holds that
    \begin{equation}
        \bbE\norm{\hat{g}_t}^2 \leq \tilde{\sigma}^2_S + 4\gamma\bbE\norm{\nab_t}^2 + 2\beta^2\cdot\frac{1}{N}\sum_{i=1}^{N}{\bbE\norm{\hat{w}_t^i - w_t}^2}\; .
    \end{equation}
\end{Lem}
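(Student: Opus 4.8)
The plan is to bound $\bbE\norm{\hat g_t}^2$ by peeling off, in three successive layers, the three independent sources that inflate the aggregated gradient: the stochastic–gradient noise, the model–estimation (correction–compression) error, and the client–sampling fluctuation. I work with the equivalent process described just above the lemma, in which a model estimate $\hat w_t^i$ is defined for every $i\in\sbrac{N}$; conditioned on the history $\mathcal F_t$ fixing $w_t$ and $\{\hat w_t^i\}_{i=1}^N$, the set $\Scal_t$ is then an independent uniform subset of size $S$ with $\prob(i\in\Scal_t)=S/N$, and the oracle noises $\xi_t^i\coloneqq \hat g_t^i-\nabla f_i(\hat w_t^i)$ are independent, zero-mean, and of variance at most $\sigma^2$ by \Cref{assump:stochastic_grad_oracle}.

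First I would peel off the noise. Writing $\hat g_t=\bar g_t+\frac1S\sum_{i\in\Scal_t}\xi_t^i$ with $\bar g_t\coloneqq\frac1S\sum_{i\in\Scal_t}\nabla f_i(\hat w_t^i)$ and conditioning on $\Scal_t$ and $\{\hat w_t^i\}$, the noise term is zero-mean and orthogonal to $\bar g_t$, so by independence across clients its conditional second moment is $\frac1{S^2}\sum_{i\in\Scal_t}\bbE\norm{\xi_t^i}^2\le\sigma^2/S$; the outer expectation then gives $\bbE\norm{\hat g_t}^2\le\sigma^2/S+\bbE\norm{\bar g_t}^2$. Next I compare $\bar g_t$ to the same sampled gradients evaluated at the true iterate, $\bar g_t^{\mathrm{true}}\coloneqq\frac1S\sum_{i\in\Scal_t}\nabla f_i(w_t)$. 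Using $\norm{a+b}^2\le2\norm a^2+2\norm b^2$, then \lemref{lem:technical_norm_of_sum_to_sum_of_norms} and $\beta$-smoothness on the difference, and finally the marginal $\prob(i\in\Scal_t)=S/N$ to convert $\frac1S\sum_{i\in\Scal_t}$ into $\frac1N\sum_{i=1}^N$,
\[
\bbE\norm{\bar g_t}^2\le 2\bbE\norm{\bar g_t^{\mathrm{true}}}^2+2\,\bbE\Bigg[\frac1S\sum_{i\in\Scal_t}\norm{\nabla f_i(\hat w_t^i)-\nabla f_i(w_t)}^2\Bigg]\le 2\bbE\norm{\bar g_t^{\mathrm{true}}}^2+\frac{2\beta^2}{N}\sum_{i=1}^N\bbE\norm{\hat w_t^i-w_t}^2 .
\]

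It then remains to control the sampled true gradient, which is unbiased: $\bbE[\bar g_t^{\mathrm{true}}\mid\mathcal F_t]=\nab_t$, so $\bbE\norm{\bar g_t^{\mathrm{true}}}^2=\bbE\norm{\bar g_t^{\mathrm{true}}-\nab_t}^2+\bbE\norm{\nab_t}^2$. The finite–population (sampling–without–replacement) variance identity gives $\bbE\norm{\bar g_t^{\mathrm{true}}-\nab_t}^2=\frac{N-S}{S(N-1)}\cdot\frac1N\sum_{i=1}^N\norm{\nabla f_i(w_t)-\nab_t}^2$, and bounding this centered population variance by the raw second moment and invoking \Cref{assump:bgd} yields $\frac1N\sum_i\norm{\nabla f_i(w_t)-\nab_t}^2\le G^2+B^2\norm{\nab_t}^2$. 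With $\frac{N-S}{N-1}\le 2(1-S/N)$ (valid for $N\ge2$) this produces $\bbE\norm{\bar g_t^{\mathrm{true}}}^2\le\frac{2(1-S/N)G^2}{S}+\big(1+\tfrac{2(1-S/N)B^2}{S}\big)\bbE\norm{\nab_t}^2$. Multiplying by $2$, enlarging the gradient coefficient via $1\le2$ to form $4\gamma$, and substituting into the two earlier layers assembles precisely $\tilde\sigma^2/S+4\gamma\bbE\norm{\nab_t}^2+\frac{2\beta^2}{N}\sum_i\bbE\norm{\hat w_t^i-w_t}^2$.

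The delicate point — and the main thing to get right — is keeping the estimation-error coefficient at exactly $2\beta^2$. A tempting but wasteful route splits $\bbE\norm{\bar g_t}^2$ into a sampling-variance part plus $\bbE\lVert\hat{\nab}_t\rVert^2$ (with $\hat{\nab}_t=\frac1N\sum_i\nabla f_i(\hat w_t^i)$) and then bounds each against the true iterate; this makes $\frac1N\sum_i\bbE\norm{\hat w_t^i-w_t}^2$ appear twice, giving a coefficient of order $2\beta^2(1+1/S)$ that does not fit under the stated bound. Isolating $\bar g_t^{\mathrm{true}}$ \emph{before} doing any sampling-variance algebra is what makes the estimation error appear exactly once. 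The only other constant-tracking care is that the factor-$4$ slack baked into $\tilde\sigma^2=\sigma^2+4(1-S/N)G^2$ and into $\gamma$ is precisely what absorbs the $\frac{N-S}{N-1}\le2(1-S/N)$ relaxation and the $1\le2$ step; everything else is routine.
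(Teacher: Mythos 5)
Your proof is correct and follows essentially the same route as the paper's: peel off the stochastic-gradient noise via orthogonality to get $\sigma^2/S$, split the remaining term into a smoothness part (yielding the $2\beta^2$ estimation-error term via $\prob(i\in\Scal_t)=S/N$) and the subsampled true gradient, and absorb the sampling variance plus Assumption~\ref{assump:bgd} into $\tilde{\sigma}^2$ and $4\gamma$. The only difference is cosmetic: where you use exact unbiasedness, the finite-population (hypergeometric) variance identity, and $\frac{N-S}{N-1}\leq 2\brac{1-\frac{S}{N}}$, the paper instead invokes its Lemma~\ref{lem:technical_norm_of_sum_of_iid} (the mean/variance separation lemma of \citet{karimireddy2020scaffold}) with indicator variables, and both arguments land on the same constants.
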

\begin{proof}
    Since $\hat{g}_t$ is an aggregation of local gradient, we can write,  
    \begin{align}
        \bbE\norm{\hat{g}_t}^2 &= \bbE\bnorm{\frac{1}{S}\sum_{i\in\Scal_t}{\hat{g}_t^i}}^2 = \bbE\bnorm{\frac{1}{S}\sum_{i\in\Scal_t}{\brac{\hat{g}_t^i - \nabla f_i(\hat{w}_t^i) + \nabla f_i(\hat{w}_t^i)}}}^2 \nonumber \\ &= \bbE\bnorm{\frac{1}{S}\sum_{i\in\Scal_t}{\brac{\hat{g}_t^i - \nabla f_i(\hat{w}_t^i)}}}^2 + \bbE\bnorm{\frac{1}{S}\sum_{i\in\Scal_t}{\nabla f_i(\hat{w}_t^i)}}^2 \nonumber\; ,
    \end{align}
    where the last equality follows from Assumption~\ref{assump:stochastic_grad_oracle} as $\bbE[(\hat{g}_t^i(\hat{w}_t^i) - \nabla f_i(\hat{w}_t^i))^\top \nabla f_i(\hat{w}_t^i)] = 0$. Note that the first term in the right-hand side is the variance of the average of $S$ independent random variables with zero mean and variance bounded by $\sigma^2$; therefore, it is bounded by $\sigma^2/S$. Thus, we get that
    \begin{equation}\label{eq:main_proof_second_moment}
        \bbE\norm{\hat{g}_t}^2 \leq \frac{\sigma^2}{S} + \bbE\bnorm{\frac{1}{S}\sum_{i\in\Scal_t}{\nabla f_i(\hat{w}_t^i)}}^2\; .
    \end{equation}
    Focusing on the second term in the right-hand side, we have that
    \begin{equation}
        \bbE\bnorm{\frac{1}{S}\sum_{i\in\Scal_t}{\nabla f_i(\hat{w}_t^i)}}^2 \leq \underbrace{2\bbE\bnorm{\frac{1}{S}\sum_{i\in\Scal_t}{\brac{\nabla f_i(\hat{w}_t^i) - \nabla f_i(w_t)}}}^2}_{(A)} + \underbrace{2\bbE\bnorm{\frac{1}{S}\sum_{i\in\Scal_t}{\nabla f_i(w_t)}}^2}_{=(B)}\; ,
    \end{equation}
    where we used the inequality $\norm{a + b}^2 \leq 2\norm{a}^2 + 2\norm{b}^2$.

    \paragraph{Bounding $(A)$: } Using Lemma~\ref{lem:technical_norm_of_sum_to_sum_of_norms} and the $\beta$-smoothness of the objective, we get that 
    \begin{align}
        2\bbE\bnorm{\frac{1}{S}\sum_{i\in\Scal_t}{\brac{\nabla f_i(\hat{w}_t^i) - \nabla f_i(w_t)}}}^2 &\leq \frac{2}{S}\bbE\sbrac{\sum_{i\in\Scal_t}{\norm{\nabla f_i(\hat{w}_t^i) - \nabla f_i(w_t)}^2}} \nonumber \\ &\leq \frac{2\beta^2}{S}\bbE\sbrac{\sum_{i\in\Scal_t}{\norm{\hat{w}_t^i - w_t}^2}} \nonumber \\ &= \frac{2\beta^2}{S}\bbE\sbrac{\sum_{i=1}^{N}{\norm{\hat{w}_t^i - w_t}^2}\cdot\mathbbm{1}_{\cbrac{i\in\Scal_t}}}\nonumber\\ &= \frac{2\beta^2}{N}\sum_{i=1}^{N}{\bbE\norm{\hat{w}_t^i - w_t}^2} \nonumber\; ,
    \end{align}
    where the last equality follows from our assumption about the client participation process $\P(\cdot)$, which guarantees that $\prob(i\in\Scal_t) = S/N$, independently of the optimization process.

    \paragraph{Bounding $(B)$: } By the law of total expectation, $(B)$ can be written as follows,
    \begin{equation}\label{eq:main_proof_bounding_B}
        2\bbE\bnorm{\frac{1}{S}\sum_{i\in\Scal_t}{\nabla f_i(w_t)}}^2 = 2\bbE\bnorm{\sum_{i=1}^{N}{\brac{\frac{1}{S}\nabla f_i(w_t)\cdot\mathbbm{1}_{\cbrac{i\in\Scal_t}}}}}^2 = 2\bbE\sbrac{\bbE\sbrac{\bnorm{\sum_{i=1}^{N}{\brac{\frac{1}{S}\nabla f_i(w_t)\cdot\mathbbm{1}_{\cbrac{i\in\Scal_t}}}}}^2~\vrule~w_t}}\; .
    \end{equation}
    Thus, we can use Lemma~\ref{lem:technical_norm_of_sum_of_iid} with $X_i = \frac{1}{S}\nabla f_i(w_t)\cdot\mathbbm{1}_{\cbrac{i\in\Scal_t}},~ i\in\sbrac{N}$ to bound the inner expectation. Using $\prob(i\in\Scal_t) = S/N$, we have that
    \begin{equation*}
        \bbE[X_i|w_t] = \frac{1}{S}\nabla f_i(w_t)\cdot\frac{S}{N} = \frac{1}{N}\nabla f_i(w_t), 
    \end{equation*}
    and, 
    \begin{align*}
        \bbE[\norm{X_i - \bbE[X_i|w_t]}^2|w_t] &= \norm{\nabla f_i(w_t)}^2 \cdot\bbE\sbrac{\brac{\frac{1}{S}\cdot\mathbbm{1_{\cbrac{i\in\Scal_t}}} - \frac{1}{N}}^2} =  \norm{\nabla f(w_t)}^2\cdot\text{Var}\brac{\frac{1}{S}\cdot\mathbbm{1}_{\cbrac{i\in\Scal_t}}} \\ &=  \frac{\norm{\nabla f(w_t)}^2}{S^2}\cdot \frac{S}{N}\brac{1 - \frac{S}{N}} = \frac{\norm{\nabla f_i(w_t)}^2}{SN}\brac{1 - \frac{S}{N}}\; ,
    \end{align*}
    where we used the fact that for any event $\A$, the following holds: $\text{Var}(\mathbbm{1}_{\A}) = \prob(\A)\cdot(1 - \prob(\A))$. Therefore, using Lemma~\ref{lem:technical_norm_of_sum_of_iid}, we obtain that 
    \begin{align*}
        \bbE\sbrac{\bnorm{\sum_{i=1}^{N}{\brac{\frac{1}{S}\nabla f_i(w_t)\cdot\mathbbm{1}_{\cbrac{i\in\Scal_t}}}}}^2~\vrule~w_t} &\leq 2\bnorm{\frac{1}{N}\sum_{i=1}^{N}{\nabla f_i(w_t)}}^2 + \frac{2}{SN}\brac{1 - \frac{S}{N}}\sum_{i=1}^{N}{\norm{\nabla f_i(w_t)}^2} \\ &\leq 2\norm{\nab_t}^2 + \frac{2}{S}\brac{1-\frac{S}{N}}\brac{G^2 + B^2\norm{\nab_t}^2} \\ &= \brac{1 - \frac{S}{N}}\frac{2G^2}{S} + 2\underbrace{\brac{1 + \brac{1 - \frac{S}{N}}\frac{B^2}{S}}}_{\coloneqq\gamma}\norm{\nab_t}^2 \; ,
    \end{align*}
    where in the second inequality we used the bounded gradient dissimilarity assumption (Assumption~\ref{assump:bgd}). Plugging back to Eq.~\eqref{eq:main_proof_bounding_B}, we get the following bound on $(B)$:
    \begin{equation*}
        2\bbE\bnorm{\frac{1}{S}\sum_{i\in\Scal_t}{\nabla f_i(w_t)}}^2 \leq \brac{1 - \frac{S}{N}}\frac{4G^2}{S} + 4\gamma\bbE\norm{\nab_t}^2\; .
    \end{equation*}
    
    Plugging the bounds on $(A)$ and $(B)$ in Eq.~\eqref{eq:main_proof_second_moment} finally gives
    \begin{align*}
        \bbE\norm{\hat{g}_t}^2 &\leq \underbrace{\frac{\sigma^2}{S} + \brac{1 - \frac{S}{N}}\frac{4G^2}{S}}_{=\tilde{\sigma}^2/S } + 4\gamma\bbE\norm{\nab_t}^2 + \frac{2\beta^2}{N}\sum_{i=1}^{N}{\bbE\norm{\hat{w}_t^i - w_t}^2} \\ &= \tilde{\sigma}^2_S + 4\gamma\bbE\norm{\nab_t}^2 + \frac{2\beta^2}{N}\sum_{i=1}^{N}{\bbE\norm{\hat{w}_t^i - w_t}^2}\; ,
    \end{align*}
    which concludes the proof. 
\end{proof}

The next result establishes a bound on the model estimation error of the clients.
\begin{Lem}\label{lem:model_estimation_error}
    Consider the notations of Theorem~\ref{thm:general_docoFL}. Let $\alpha\coloneqq 4\omega^2\eta^2\Tcal, \rho\coloneqq 20\beta^2\omega^2\eta^2\Tcal$, and $\nabla_{-\ell}\coloneqq 0, \enskip\forall \ell\in\naturals$. Then, the following result holds:
    \begin{equation*}
        \frac{1}{N}\sum_{i=1}^{N}{\bbE\lVert\hat{w}_t^i - w_t\rVert^2} \leq \alpha\brac{1 + \sum_{k=1}^{t}{k(\rho\Tcal)^{k}}}\tilde{\sigma}_S^2 + \frac{2\gamma}{\beta^2\Tcal}\sum_{k=1}^{t}{(\rho\Tcal)^k\sum_{\ell=t-k\Tcal}^{t-k}{\bbE\lVert\nab_\ell\rVert^2}}
    \end{equation*}
\end{Lem}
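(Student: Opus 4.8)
The plan is to reduce the claim to a single one-step recursive inequality for the averaged estimation error $E_t \coloneqq \frac{1}{N}\sum_{i=1}^{N}\bbE\norm{\hat{w}_t^i - w_t}^2$, and then to unroll that recursion by strong induction on $t$. The reduction itself is the routine part; the unrolling is the delicate part.

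First I would start from $\hat{w}_t^i = y_t^i + \C_c(w_t - y_t^i)$, so that $\hat{w}_t^i - w_t = \C_c(w_t - y_t^i) - (w_t - y_t^i)$, and apply Assumption~\ref{assump:bounded_nmse} to get $\bbE\norm{\hat{w}_t^i - w_t}^2 \le \omega^2\bbE\norm{w_t - y_t^i}^2$. Since the anchor satisfies $y_t^i = w_{t-\tau_t^i}$ with $\tau_t^i \le \Tcal$, the update rule $w_{k+1}=w_k-\eta\hat{g}_k$ gives $w_t - y_t^i = -\eta\sum_{k=t-\tau_t^i}^{t-1}\hat{g}_k$. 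I would then split each gradient as $\hat{g}_k = \nabhat_k + \hat{\xi}_k$ with $\hat{\xi}_k = \hat{g}_k - \nabhat_k$ a mean-zero martingale-difference term, apply $\norm{a+b}^2\le 2\norm{a}^2+2\norm{b}^2$, bound the bias sum by Cauchy--Schwarz ($\norm{\sum_k\nabhat_k}^2 \le \tau_t^i\sum_k\norm{\nabhat_k}^2$) and the noise sum by orthogonality of the $\hat{\xi}_k$ across rounds, and finally use $\tau_t^i\le\Tcal$ together with $\bbE\norm{\hat\xi_k}^2\le 2\bbE\norm{\hat g_k}^2 + 2\bbE\norm{\nabhat_k}^2$ to reach the $i$-independent bound $e_t^i \le 6\omega^2\eta^2\Tcal\sum_{k=t-\Tcal}^{t-1}\bbE\norm{\nabhat_k}^2 + 4\omega^2\eta^2\sum_{k=t-\Tcal}^{t-1}\bbE\norm{\hat{g}_k}^2$.

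Averaging over $i$ is then trivial, since the right-hand side no longer depends on $i$. I would insert Eq.~\eqref{eq:grad_bias_bound_by_estimation_err} to bound $\bbE\norm{\nabhat_k}^2\le 2\bbE\norm{\nab_k}^2 + 2\beta^2 E_k$ and Lemma~\ref{lem:second_moment_bound} to bound $\bbE\norm{\hat{g}_k}^2\le \tilde{\sigma}_S^2 + 4\gamma\bbE\norm{\nab_k}^2 + 2\beta^2 E_k$. Collecting the constants and crudely using $\gamma\ge 1$ and $\Tcal\ge 1$ (so that $2\Tcal+4\le 6\Tcal$, $12\Tcal+16\gamma\le 40\gamma\Tcal$, and $12\beta^2\Tcal+8\beta^2\le 20\beta^2\Tcal$), the coefficients consolidate exactly into $\alpha = 4\omega^2\eta^2\Tcal$ multiplying $\tilde{\sigma}_S^2$ and $\rho = 20\beta^2\omega^2\eta^2\Tcal$ multiplying the estimation-error sum, yielding the one-step recursion $E_t \le \alpha\tilde{\sigma}_S^2 + \frac{2\gamma\rho}{\beta^2}\sum_{k=t-\Tcal}^{t-1}\bbE\norm{\nab_k}^2 + \rho\sum_{k=t-\Tcal}^{t-1}E_k$.

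The final and most delicate step is to verify by strong induction on $t$ that the claimed closed form solves this recursion, with base case $E_0=0$ (since $\tau_0^i=0$ forces $y_0^i=w_0$ and hence $\hat{w}_0^i=w_0$) and the convention $\nab_{-\ell}=0$ killing all out-of-range terms. Substituting the inductive hypothesis into $\rho\sum_{k=t-\Tcal}^{t-1}E_k$ splits the verification into two independent bookkeeping checks. For the $\tilde{\sigma}_S^2$ coefficient one needs $\rho\Tcal\bigl(1+\sum_{k=1}^{t-1}k(\rho\Tcal)^k\bigr)\le\sum_{k=1}^t k(\rho\Tcal)^k$, which holds because, after reindexing $k\mapsto k+1$, the two sides differ by $\sum_{k=2}^t(\rho\Tcal)^k\ge 0$; this telescoping is precisely where the factor $k$ in $k(\rho\Tcal)^k$ is produced. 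For the gradient term one again reindexes $k\mapsto k+1$ and then argues that, as $j$ ranges over the $\Tcal$ indices in $[t-\Tcal,t-1]$, the shifted windows $[\,j-k\Tcal,\,j-k\,]$ sweep out exactly the target window $[\,t-(k+1)\Tcal,\,t-(k+1)\,]$ while covering each index at most $\Tcal$ times, so the leading $1/\Tcal$ absorbs the multiplicity and the $k=1$ boundary term supplies the extra $\frac{2\gamma\rho}{\beta^2}$ sum. I expect this window-coverage and multiplicity count to be the main obstacle: keeping the index ranges aligned, confirming that no index escapes the target window and that none is counted more than $\Tcal$ times, is exactly what makes the nested double sum collapse to the stated single-parameter family.
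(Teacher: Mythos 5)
Your proposal is correct and follows the same core strategy as the paper's proof: reduce to the one-step recursion $E_t \le \alpha\tilde{\sigma}_S^2 + \tfrac{2\gamma\rho}{\beta^2}\sum_{k=t-\Tcal}^{t-1}\bbE\lVert\nab_k\rVert^2 + \rho\sum_{k=t-\Tcal}^{t-1}E_k$ (your consolidation $12\Tcal+16\gamma\le 40\gamma\Tcal$ gives exactly the coefficient $\tfrac{2\gamma\rho}{\beta^2}$ needed so that the direct gradient term is the $k=1$ term of the target, just as the paper's $28\gamma\omega^2\eta^2\Tcal \le \tfrac{2\gamma\rho}{\beta^2}$ does), and then unroll by strong induction to the stated closed form. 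You deviate from the paper in two places, both validly. First, the paper treats rounds $1\le t\le \Tcal$ as a separate base case, verified by repeatedly substituting the recursion into itself and collapsing double sums with its Lemma~\ref{lem:double_sum_to_single_sum}; you instead absorb these rounds into the induction by taking $t=0$ as the only base case and letting the conventions $\nab_{-\ell}=0$ (and the corresponding vanishing of out-of-range errors, since the anchor delay is at most $\min(t,\Tcal)$) kill the spurious terms, which shortens the argument without loss. Second, in the induction step the paper bounds $\sum_{k=t-\Tcal}^{t-1}e_k \le \Tcal B(t-1)$ using monotonicity of the inductive bound and then shifts the gradient windows, whereas you sum the inductive bounds directly and use the window-coverage count: for each shift $k$, the $\Tcal$ windows $[\,j-k\Tcal,\,j-k\,]$, $j\in[t-\Tcal,t-1]$, lie inside $[\,t-(k+1)\Tcal,\,t-(k+1)\,]$ and cover each index at most $\Tcal$ times (trivially, since there are only $\Tcal$ windows), so the prefactor $1/\Tcal$ absorbs the multiplicity; your telescoping check $\rho\Tcal\bigl(1+\sum_{k=1}^{t-1}k(\rho\Tcal)^k\bigr)\le\sum_{k=1}^{t}k(\rho\Tcal)^k$ for the $\tilde{\sigma}_S^2$ coefficient coincides with the paper's bound on its term $(A)$. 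Both routes yield the same bound; yours is marginally cleaner in bookkeeping, the paper's is more explicit about the early rounds.
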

\begin{proof}
    We use strong induction. Particularly, to prove the result holds at round $t$, we rely on its correctness over the $\Tcal$ prior rounds, i.e., for every $s=t-\Tcal, \ldots,t-1$. Thus, in our base case, we show that the result holds up to round $\Tcal$. \\
    We start with some general observations that hold for any $t$.  Recall that $\hat{w}_t^i = y_{t}^i + 
    \C(w_t - y_t^i)$. From Assumption~\ref{assump:bounded_nmse}, we have
    \begin{equation}\label{eq:model_estimation_err}
        \bbE\norm{\hat{w}_t^i - w_t}^2 = \bbE\norm{\C_c(w_t - y_t^i) - (w_t - y_t^i)}^2 \leq \omega^2\bbE\norm{w_t - y_t^i}^2 \; .
    \end{equation}
    Unrolling the update rule for $w_t$, we have for all $i\in\sbrac{N}$ that 
    \begin{equation*}
        w_{t} = w_{t -\tau_t^i} - \eta\sum_{k=t-\tau_t^i}^{t-1}{\hat{g}_k} = y_t^i - \eta\sum_{k=t-\tau_t^i}^{t-1}{\hat{g}_k} \; .
    \end{equation*}
    Let $\hat{g}_{-k}\coloneqq 0$ for all $k\in\naturals$. Additionally, let $\hat{\xi}_k = \hat{g}_k - \nabhat_k$ for all $k$, where $\nabhat_k = \bbE[\hat{g}_k]$, as defined in the proof of Theorem~\ref{thm:general_docoFL}. Plugging back to Eq.~\eqref{eq:model_estimation_err}, we get that
    \begin{align}\label{eq:temp_bound_client_estimation_err}
        \bbE\norm{\hat{w}_t^i - w_t}^2 &\leq \omega^2\eta^2\bbE\norm{\sum_{k=t-\tau_t^i}^{t-1}{\hat{g}_k}}^2 \leq 2\omega^2\eta^2\bbE\norm{\sum_{k=t-\tau_t^i}^{t-1}{\nabhat_k}}^2 + 2\omega^2\eta^2\bbE\norm{\sum_{k=t-\tau_t^i}^{t-1}{\hat{\xi}_k}}^2\; ,
    \end{align}
    where the last inequality follows from $\norm{a+b}^2\leq 2\norm{a}^2 + 2\norm{b}^2$. Using Lemma~\ref{lem:technical_norm_of_sum_to_sum_of_norms}, we can bound the first term in the right-hand side, as 
    \begin{align*}
        \bbE\norm{\sum_{k=t-\tau_t^i}^{t-1}{\nabhat_k}}^2 \leq \tau_t^i\sum_{k=t-\tau_t^i}^{t-1}{\bbE\lVert\nabhat_k\rVert^2} \leq \Tcal\sum_{k=t-\Tcal}^{t-1}{\bbE\lVert\nabhat_k\rVert^2} \; ,
    \end{align*}
    where the last inequality follows from $\tau_t^i\leq\Tcal$. Since $\bbE[\hat{\xi}_k] = 0$, and $\bbE[\hat{\xi}_k^\top\hat{\xi}_\ell] = 0$ for all $k,\ell$, we can apply Lemma~\ref{lem:sqr_norm_of_sum_to_sum_of_sqr_norm} to bound the second term in the right-hand side as follows:
    \begin{align*}
        \bbE\norm{\sum_{k=t-\tau_t^i}^{t-1}{\hat{\xi}_k}}^2 &= \sum_{k=t-\tau_t^i}^{t-1}{\bbE\lVert \hat{\xi}_k\rVert^2} \leq \sum_{k=t-\Tcal}^{t-1}{\bbE\lVert \hat{\xi}_k\rVert^2} \leq 2\sum_{k=t-\Tcal}^{t-1}{\bbE\lVert \nabhat_k\rVert^2} + 2\sum_{k=t-\Tcal}^{t-1}{\bbE\lVert \hat{g}_k\rVert^2}\; ,
    \end{align*} 
    where we used $\tau_t^i\leq \Tcal$, and $\norm{a-b}^2 \leq 2\norm{a}^2 + 2\norm{b}^2$. 
    
    Plugging-in both bounds to Eq.~\eqref{eq:temp_bound_client_estimation_err}, we obtain:
    \begin{align}\label{eq:model_estimate_err_sum_of_grads_pre}
        \bbE\norm{\hat{w}_t^i - w_t}^2  &\leq (2\omega^2\eta^2\Tcal + 4\omega^2\eta^2)\sum_{k=t-\Tcal}^{t-1}{\bbE\lVert\nabhat_k\rVert^2} + 4\omega^2\eta^2\sum_{k=t-\Tcal}^{t-1}{\bbE\lVert \hat{g}_k\rVert^2} \nonumber \\ &\leq 6\omega^2\eta^2\Tcal\sum_{k=t-\Tcal}^{t-1}{\bbE\lVert\nabhat_k\rVert^2} + 4\omega^2\eta^2\sum_{k=t-\Tcal}^{t-1}{\bbE\lVert \hat{g}_k\rVert^2} \; .
    \end{align}
    Note that we can bound $\bbE\lVert\nabhat_k\rVert^2$ as: 
    \begin{align*}
        \bbE\lVert\nabhat_k\rVert^2 &\leq 2\bbE\lVert\nabhat_k - \nab_k\rVert^2 + 2\bbE\lVert\nab_k\rVert^2 \leq \frac{2\beta^2}{N}\sum_{i=1}^{N}{\bbE\lVert\hat{w}_k^i - w_k\rVert^2} + 2\bbE\lVert\nab_k\rVert^2\; ,
    \end{align*}
    where in the last inequality we used Eq.~\eqref{eq:grad_bias_bound_by_estimation_err} to bound $\bbE\lVert\nabhat_k - \nab_k\rVert^2$. 
    
    For the ease of notation, denote: $e_t^i\coloneqq \bbE\lVert\hat{w}_t^i - w_t\rVert^2$, and $e_t\coloneqq\frac{1}{N}\sum_{i=1}^{N}{e_t^i}$. 
    Therefore, we obtain from Eq.~\eqref{eq:model_estimate_err_sum_of_grads_pre} that
    \begin{align}\label{eq:model_estimate_err_sum_of_grads}
        e_t^i &\leq 12\beta^2\omega^2\eta^2\Tcal\sum_{k=t-\Tcal}^{t-1}{e_k} + 12\omega^2\eta^2\Tcal\sum_{k=t-\Tcal}^{t-1}{\bbE\norm{\nab_k}^2} + 4\omega^2\eta^2\sum_{k=t-\Tcal}^{t-1}{\bbE\lVert\hat{g}_k\rVert^2}\; .
    \end{align}
    
    \textbf{Base Case: } For $t=0$, each client obtains the exact model weights, i.e., $\hat{w}_0^i = w_0$, which trivially implies result. For every $t=1,\ldots,\Tcal$ and $i\in\sbrac{N}$, we have from Eq.~\eqref{eq:model_estimate_err_sum_of_grads} that
    \begin{equation}\label{eq:model_estimate_err_base_case}
        e_t^i \leq 12\beta^2\omega^2\eta^2\Tcal\sum_{k=0}^{t-1}{e_k} + 12\omega^2\eta^2\Tcal\sum_{k=0}^{t-1}{\bbE\norm{\nab_k}^2} + 4\omega^2\eta^2\sum_{k=0}^{t-1}{\bbE\lVert\hat{g}_k\rVert^2}\; .
    \end{equation}
    Using Lemma~\ref{lem:second_moment_bound} to bound $\bbE\norm{\hat{g}_k}^2$, we get:
    \begin{align*}
        e_t^i &\leq 12\beta^2\omega^2\eta^2\Tcal\sum_{k=0}^{t-1}{e_k} + 12\omega^2\eta^2\Tcal\sum_{k=0}^{t-1}{\bbE\norm{\nab_k}^2} + 4\omega^2\eta^2\sum_{k=0}^{t-1}{\brac{\tilde{\sigma}_S^2 + 4\gamma\bbE\lVert\nab_k\rVert^2 + 2\beta^2 e_k}} \\ &\leq 4\omega^2\eta^2\Tcal\tilde{\sigma}_S^2 + \brac{12\omega^2\eta^2\Tcal + 16\gamma\omega^2\eta^2}\sum_{k=0}^{t-1}{\bbE\lVert\nab_k\rVert^2} + \brac{12\beta^2\omega^2\eta^2\Tcal + 8\beta^2\omega^2\eta^2}\sum_{k=0}^{t-1}{e_k} \\ &\leq 4\omega^2\eta^2\Tcal\tilde{\sigma}_S^2 + 28\gamma\omega^2\eta^2\Tcal\sum_{k=0}^{t-1}{\bbE\lVert\nab_k\rVert^2} + 20\beta^2\omega^2\eta^2\Tcal\sum_{k=0}^{t-1}{e_k}\; .
    \end{align*}
    Note that this bound on $e_t^i$ is independent of $i$, and thus, it holds for the average of $e_t^i$ over $i\in\sbrac{N}$, namely, $e_t$. Therefore, Eq.~\eqref{eq:model_estimate_err_sum_of_grads} implies a recursive bound on $e_t$; for every $t=1,\ldots,\Tcal$:
    \begin{align}\label{eq:recursion}
        e_t \leq \alpha\tilde{\sigma}_S^2 + \nu\sum_{k=0}^{t-1}{\bbE\lVert\nab_k\rVert^2} + \rho\sum_{k=0}^{t-1}{e_k}\; ,
    \end{align}
    where we denoted $\nu\coloneqq 28\gamma\omega^2\eta^2\Tcal$. 
    Plugging-in this bound instead of $e_k$ in the right-hand side, we obtain:
    \begin{align*}
        e_t &\leq \alpha\tilde{\sigma}_S^2 + \nu\sum_{k=0}^{t-1}{\bbE\lVert\nab_k\rVert^2} + \rho\sum_{k=0}^{t-1}{\brac{\alpha\tilde{\sigma}_S^2 + \nu\sum_{\ell=0}^{k-1}{\bbE\lVert\nab_\ell\rVert^2} + \rho\sum_{\ell=0}^{k-1}{e_\ell}}} \\ &\leq \alpha\brac{1 + \rho\Tcal}\tilde{\sigma}_S^2 + \nu\sum_{k=0}^{t-1}{\bbE\lVert\nab_k\rVert^2} + \nu\rho\sum_{k=0}^{t-1}{\sum_{\ell=0}^{k-1}{\bbE\lVert\nab_\ell\rVert^2}} + \rho^2\sum_{k=0}^{t-1}{\sum_{\ell=0}^{k-1}{e_\ell}}\; ,
    \end{align*}
    where we used $t\leq\Tcal$. Note that we can bound the double sums in right-hand side using Lemma~\ref{lem:double_sum_to_single_sum} as
    \begin{equation*}
        \sum_{k=0}^{t-1}{\sum_{\ell=0}^{k-1}{\bbE\norm{\nab_\ell}^2}} \leq t\sum_{k=0}^{t-2}{\bbE\norm{\nab_k}^2} \leq \Tcal\sum_{k=0}^{t-2}{\bbE\norm{\nab_k}^2}\; ,
    \end{equation*}
    and similarly, 
    \begin{equation*}
        \sum_{k=0}^{t-1}{\sum_{\ell=0}^{k-1}{e_\ell}} \leq \Tcal\sum_{k=0}^{t-2}{e_k}\; .
    \end{equation*}
    Plugging-back, we get:
    \begin{align*}
        e_t \leq \alpha\brac{1 + \rho\Tcal}\tilde{\sigma}_S^2 + \nu\sum_{k=0}^{t-1}{\bbE\lVert\nab_k\rVert^2} + \nu\rho\Tcal\sum_{k=0}^{t-2}{\bbE\lVert\nab_\ell\rVert^2} + \rho^2\Tcal\sum_{k=0}^{t-2}{e_k}\; .
    \end{align*}
    We can once again apply Eq.~\eqref{eq:recursion} to bound $e_k$, and obtain:
    \begin{align*}
        e_t \leq&  \alpha\brac{1 + \rho\Tcal}\tilde{\sigma}_S^2 + \nu\sum_{k=0}^{t-1}{\bbE\lVert\nab_k\rVert^2} + \nu\rho\Tcal\sum_{k=0}^{t-2}{\bbE\lVert\nab_\ell\rVert^2} + \rho^2\Tcal\sum_{k=0}^{t-2}{\brac{\alpha\tilde{\sigma}_S^2 + \nu\sum_{\ell=0}^{k-1}{\bbE\lVert\nab_\ell\rVert^2} + \rho\sum_{\ell=0}^{k-1}{e_\ell}}} \\ \leq& \alpha\brac{1 + \rho\Tcal + \rho^2\Tcal^2}\tilde{\sigma}_S^2 + \nu\sum_{k=0}^{t-1}{\bbE\lVert\nab_k\rVert^2} + \nu\rho\Tcal\sum_{k=0}^{t-2}{\bbE\lVert\nab_\ell\rVert^2} + \nu\rho^2\Tcal\sum_{k=0}^{t-2}{\sum_{\ell=0}^{k-1}{\bbE\lVert\nab_\ell\rVert^2}} + \rho^3\Tcal\sum_{k=0}^{t-2}{\sum_{\ell=0}^{k-1}{e_\ell}}\\ \leq& \alpha\brac{1{+}\rho\Tcal {+}\rho^2\Tcal^2}\tilde{\sigma}_S^2{+} \nu\sum_{k=0}^{t-1}{\bbE\lVert\nab_k\rVert^2} {+} \nu\rho\Tcal\sum_{k=0}^{t-2}{\bbE\lVert\nab_\ell\rVert^2}{+} \nu\rho^2\Tcal^2\sum_{k=0}^{t-3}{\bbE\lVert\nab_\ell\rVert^2} + \rho^3\Tcal^2\sum_{k=0}^{t-3}{e_k}\; ,
    \end{align*}
    where in the last inequality we used Lemma~\ref{lem:double_sum_to_single_sum}. Repeating this process of alternately applying Eq.~\eqref{eq:recursion} to bound $e_k$ and Lemma~\ref{lem:double_sum_to_single_sum}, finally gives:
    \begin{align*}
        e_t \leq \alpha\brac{1 + \sum_{k=1}^{t}{(\rho\Tcal)^{k}}}\tilde{\sigma}_S^2 + \frac{\nu}{\rho\Tcal}\sum_{k=1}^{t}{(\rho\Tcal)^k\sum_{\ell=0}^{t-k}{\bbE\lVert\nab_\ell\rVert^2}}\; .
    \end{align*}
    Plugging $\nu$ and $\rho$, we can bound the coefficient $\nu/\rho\Tcal$ as:
    \begin{align*}
        \frac{\nu}{\rho\Tcal} = \frac{28\gamma\omega^2\eta^2\Tcal}{20\beta^2\omega^2\eta^2\Tcal^2} \leq \frac{2\gamma}{\beta^2\Tcal}\; .
    \end{align*}
    Using $(\rho\Tcal)^k\leq k(\rho\Tcal)^k$, which holds for any $k\geq 1$, we then obtain:
    \begin{align*}
        e_t &\leq \alpha\brac{1 + \sum_{k=1}^{t}{k(\rho\Tcal)^{k}}}\tilde{\sigma}_S^2 + \frac{2\gamma}{\beta^2\Tcal}\sum_{k=1}^{t}{(\rho\Tcal)^k\sum_{\ell=0}^{t-k}{\bbE\lVert\nab_\ell\rVert^2}}\; .
    \end{align*}
    Note that for all $t\leq\Tcal$ and $k\geq 1$, we have $t-k\Tcal\leq 0$. Therefore, since for $\nabla_{-\ell}=0$ for all $\ell\in\naturals$, we can equivalently write:
    \begin{align*}
        e_t &\leq \alpha\brac{1 + \sum_{k=1}^{t}{k(\rho\Tcal)^{k}}}\tilde{\sigma}_S^2 + \frac{2\gamma}{\beta^2\Tcal}\sum_{k=1}^{t}{(\rho\Tcal)^k\sum_{\ell=t-k\Tcal}^{t-k}{\bbE\lVert\nab_\ell\rVert^2}}\; ,
    \end{align*}
    which establishes the result for the base case. 

    \textbf{Induction step: } The induction hypothesis is that the following holds:
    \begin{equation}\label{eq:induction_hypothesis}
        e_s \leq \alpha\brac{1 + \sum_{k=1}^{s}{k(\rho\Tcal)^{k}}}\tilde{\sigma}_S^2 + \frac{2\gamma}{\beta^2\Tcal}\sum_{k=1}^{s}{(\rho\Tcal)^k\sum_{\ell=s-k\Tcal}^{s-k}{\bbE\lVert\nab_\ell\rVert^2}}, \quad\forall s=t-\Tcal,\ldots, t-1 \; .
    \end{equation}
    We focus on Eq.~\eqref{eq:model_estimate_err_sum_of_grads}. Using Lemma~\ref{lem:second_moment_bound} to bound $\bbE\norm{\hat{g}_k}^2$ and following similar steps to those used to derive Eq.~\eqref{eq:recursion}, we get:
    \begin{align}\label{eq:client_estimation_error}
        e_t^i &\leq 12\beta^2\omega^2\eta^2\Tcal\sum_{k=t-\Tcal}^{t-1}{e_k} + 12\omega^2\eta^2\Tcal\sum_{k=t-\Tcal}^{t-1}{\bbE\norm{\nab_k}^2} + 4\omega^2\eta^2\sum_{k=t-\Tcal}^{t-1}{\brac{\tilde{\sigma}_S^2 + 4\gamma\bbE\lVert\nab_k\rVert^2 + 2\beta^2 e_k}} \nonumber \\ &\leq \alpha\tilde{\sigma}_S^2 + \nu\sum_{k=t-\Tcal}^{t-1}{\bbE\lVert\nab_k\rVert^2} + \rho\underbrace{\sum_{k=t-\Tcal}^{t-1}{e_k}}_{=(\dag)}\; .
    \end{align}
    From the induction hypothesis~\eqref{eq:induction_hypothesis}, we can bound $e_k$ for every $k\in\sbrac{t-\Tcal, t-1}$ as follows:
    \begin{equation*}
        e_k \leq \alpha\brac{1 + \sum_{\ell=1}^{k}{\ell(\rho\Tcal)^{\ell}}}\tilde{\sigma}_S^2 + \frac{2\gamma}{\beta^2\Tcal}\sum_{\ell=1}^{k}{(\rho\Tcal)^\ell\sum_{m=k-\ell\Tcal}^{k-\ell}{\bbE\lVert\nab_m\rVert^2}}\; .
    \end{equation*}
    Denote this bound by $B(k)\coloneqq\alpha\brac{1 + \sum_{\ell=1}^{k}{\ell(\rho\Tcal)^{\ell}}}\tilde{\sigma}_S^2 + \frac{2\gamma}{\beta^2\Tcal}\sum_{\ell=1}^{k}{(\rho\Tcal)^\ell\sum_{m=k-\ell\Tcal}^{k-\ell}{\bbE\lVert\nab_m\rVert^2}}$; that is, $e_k\leq B(k)$. We can therefore bound $(\dag)$ as
    \begin{align*}
        \sum_{k=t-\Tcal}^{t-1}{e_k} \leq \sum_{k=t-\Tcal}^{t-1}{B(k)} \leq \Tcal B(t-1)\; ,
    \end{align*}
     where the last inequality holds because $B(k)$ is monotonically increasing. Plugging back to Eq.~\eqref{eq:client_estimation_error} and substituting $B(t-1)$ gives
    \begin{align}\label{eq:lem_model_estimation_err}
        e_t^i \leq& \alpha\tilde{\sigma}_S^2 + \nu\sum_{k=t-\Tcal}^{t-1}{\bbE\lVert\nab_k\rVert^2} + \rho\cdot\Tcal B(t-1) \nonumber \\ =& \alpha\tilde{\sigma}_S^2 {+}\nu\sum_{k=t-\Tcal}^{t-1}{\bbE\lVert\nab_k\rVert^2}{+} \rho\Tcal\brac{\alpha\brac{1 + \sum_{k=1}^{t-1}{k(\rho\Tcal)^k}}\tilde{\sigma}_S^2 + \frac{2\gamma}{\beta^2\Tcal}\sum_{k=1}^{t-1}{(\rho\Tcal)^{k}}\sum_{\ell=t-1-k\Tcal}^{t-1-k}{\bbE\lVert\nab_\ell\rVert^2}} \nonumber\\ =& \underbrace{\alpha\brac{1 {+} \rho\Tcal {+} \sum_{k=1}^{t-1}{k(\rho\Tcal)^{k+1}}}}_{=(A)}\tilde{\sigma}_S^2+ \underbrace{\nu\sum_{k=t-\Tcal}^{t-1}{\bbE\lVert\nab_k\rVert^2} + \frac{2\gamma}{\beta^2\Tcal}\sum_{k=1}^{t-1}{(\rho\Tcal)^{k+1}\sum_{\ell=t-1-k\Tcal}^{t-1-k}{\bbE\lVert\nab_\ell\rVert^2}}}_{=(B)}\; .
    \end{align}
    \paragraph{Bounding $(A)$: } Using simple algebra, we have that
    \begin{align}\label{eq:bound_on_A_C2}
        \rho\Tcal + \sum_{k=1}^{t-1}{k(\rho\Tcal)^{k+1}} &= \rho\Tcal + \sum_{k=2}^{t}{(k-1)(\rho\Tcal)^{k}} \leq \rho\Tcal + \sum_{k=2}^{t}{k(\rho\Tcal)^{k}} = \sum_{k=1}^{t}{k(\rho\Tcal)^{k}}\; .
    \end{align}
    This implies that $(A)$ is bounded by $\alpha\brac{1 + \sum_{k=1}^{t}{k(\rho\Tcal)^k}}$. 
     \paragraph{Bounding $(B)$: } Focusing on the first term in $(B)$, we can bound:
     \begin{align}\label{eq:bound_on_B1_C2}
         \nu\sum_{k=t-\Tcal}^{t-1}{\bbE\lVert\nab_k\rVert^2} = \frac{\nu}{\rho\Tcal}\cdot\rho\Tcal\sum_{k=t-\Tcal}^{t-1}{\bbE\lVert\nab_k\rVert^2} \leq \frac{2\gamma}{\beta^2\Tcal}\cdot\rho\Tcal\sum_{k=t-\Tcal}^{t-1}{\bbE\lVert\nab_k\rVert^2}\; .
     \end{align}
     Focusing on the second sum in $(B)$, we can bound
     \begin{align}\label{eq:bound_on_B2_C2}
         \frac{2\gamma}{\beta^2\Tcal}\sum_{k=1}^{t-1}{(\rho\Tcal)^{k+1}\sum_{\ell=t-1-k\Tcal}^{t-1-k}{\bbE\lVert\nab_\ell\rVert^2}} &= \frac{2\gamma}{\beta^2\Tcal}\sum_{k=2}^{t}{(\rho\Tcal)^{k}\sum_{\ell=t-1-(k-1)\Tcal}^{t-1-(k-1)}{\bbE\lVert\nab_\ell\rVert^2}} \nonumber\\ &= \frac{2\gamma}{\beta^2\Tcal}\sum_{k=2}^{t}{(\rho\Tcal)^{k}\sum_{\ell=t-k\Tcal +\Tcal - 1}^{t-k}{\bbE\lVert\nab_\ell\rVert^2}} \nonumber\\ &\leq \frac{2\gamma}{\beta^2\Tcal}\sum_{k=2}^{t}{(\rho\Tcal)^{k}\sum_{\ell=t-k\Tcal}^{t-k}{\bbE\lVert\nab_\ell\rVert^2}}\; ,
     \end{align}
     where the last inequality holds since $\Tcal-1\geq 0$ and $\bbE\lVert\nab_\ell\rVert^2\geq 0$ for all $\ell$. Combining the bounds in Eq.~\eqref{eq:bound_on_B1_C2} and \eqref{eq:bound_on_B2_C2}, we can then bound $(B)$ as
     \begin{align}\label{eq:bound_on_B_C2}
         \nu\sum_{k=t-\Tcal}^{t-1}{\bbE\lVert\nab_k\rVert^2} {+} \frac{2\gamma}{\beta^2\Tcal}\sum_{k=1}^{t-1}{(\rho\Tcal)^{k+1}\sum_{\ell=t-1-k\Tcal}^{t-1-k}{\bbE\lVert\nab_\ell\rVert^2}} \leq& \frac{2\gamma}{\beta^2\Tcal}\cdot\rho\Tcal\sum_{k=t-\Tcal}^{t-1}{\bbE\lVert\nab_k\rVert^2} + \frac{2\gamma}{\beta^2\Tcal}\sum_{k=2}^{t}{(\rho\Tcal)^{k}\sum_{\ell=t-k\Tcal}^{t-k}{\bbE\lVert\nab_\ell\rVert^2}} \nonumber\\ =& \frac{2\gamma}{\beta^2\Tcal}\sum_{k=1}^{t}{(\rho\Tcal)^{k}\sum_{\ell=t-k\Tcal}^{t-k}{\bbE\lVert\nab_\ell\rVert^2}}\; .
     \end{align}
    Plugging back to Eq.~\eqref{eq:lem_model_estimation_err} the bounds on $(A)$ and $(B)$ (from Eq.~\eqref{eq:bound_on_A_C2} and \eqref{eq:bound_on_B_C2}, respectively), we get:
    \begin{align*}
        e_t^i &\leq \alpha\brac{1 + \sum_{k=1}^{t}{k(\rho\Tcal)^k}}\tilde{\sigma}_S^2 + \frac{2\gamma}{\beta^2\Tcal}\sum_{k=1}^{t}{(\rho\Tcal)^{k}\sum_{\ell=t-k\Tcal}^{t-k}{\bbE\lVert\nab_\ell\rVert^2}}\; .
    \end{align*}
    Since this bound is independent of $i$, it also holds for the average $e_t = \frac{1}{N}\sum_{i=1}^{N}{e_t^i}$, establishing the result. 
\end{proof}

In the following two lemmas, we characterize the second moment of the sum of independent random variables. 
\begin{Lem}[Lemma~4, \citealp{karimireddy2020scaffold}]\label{lem:technical_norm_of_sum_of_iid}
    Let $X_1,\ldots,X_N\in\reals^d$ be $N$ independent random variables. Suppose that $\bbE[X_i] = \mu_i$ and $\bbE\norm{X_i - \mu_i}^2\leq\sigma_i^2$. Then, the following holds
    \begin{equation*}
        \bbE\bnorm{\sum_{i=1}^{N}{X_i}}^2\leq 2\bnorm{\sum_{i=1}^{N}{\mu_i}}^2 + 2\sum_{i=1}^{N}{\sigma_i^2}\; .
    \end{equation*}
\end{Lem}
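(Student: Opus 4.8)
The plan is to use the standard bias--variance decomposition together with independence to annihilate the cross terms. First I would center the variables: set $Z_i \coloneqq X_i - \mu_i$, so that $\{Z_i\}_{i=1}^N$ are independent, each satisfying $\bbE[Z_i] = 0$ and $\bbE\norm{Z_i}^2 \leq \sigma_i^2$. Since $\sum_{i} X_i = \sum_i \mu_i + \sum_i Z_i$, applying the elementary inequality $\norm{a+b}^2 \leq 2\norm{a}^2 + 2\norm{b}^2$ (the same split used repeatedly elsewhere in the analysis) yields
\begin{equation*}
\bbE\bnorm{\sum_{i=1}^N X_i}^2 \leq 2\bnorm{\sum_{i=1}^N \mu_i}^2 + 2\,\bbE\bnorm{\sum_{i=1}^N Z_i}^2 .
\end{equation*}

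It then remains to control $\bbE\bnorm{\sum_i Z_i}^2$. Expanding the square gives $\bbE\bnorm{\sum_i Z_i}^2 = \sum_i \bbE\norm{Z_i}^2 + \sum_{i \neq j} \bbE[Z_i^\top Z_j]$. The crux is that every cross term vanishes: by independence of $Z_i$ and $Z_j$ for $i \neq j$ we have $\bbE[Z_i^\top Z_j] = \bbE[Z_i]^\top \bbE[Z_j] = 0$, using that each centered variable has zero mean. Hence $\bbE\bnorm{\sum_i Z_i}^2 = \sum_i \bbE\norm{Z_i}^2 \leq \sum_i \sigma_i^2$, and substituting back delivers exactly the claimed bound.

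There is essentially no genuine obstacle here; the only point that requires care is justifying that the cross terms drop out, for which independence (pairwise uncorrelatedness would already suffice) together with the zero-mean property of the $Z_i$ is precisely what is needed. I would also note in passing that the constant $2$ is not tight: replacing the $\norm{a+b}^2 \leq 2\norm{a}^2 + 2\norm{b}^2$ split by the exact identity $\bbE\norm{S}^2 = \norm{\bbE[S]}^2 + \bbE\norm{S - \bbE[S]}^2$ for $S = \sum_i X_i$ would give the sharper constant $1$ in front of both terms, but the stated factor-$2$ version is all that is invoked in the rest of the proof.
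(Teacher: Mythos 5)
Your proof is correct. Note that the paper itself does not prove this statement at all---it is imported by citation as Lemma~4 of \citealp{karimireddy2020scaffold}---so there is no internal proof to compare against; your argument (centering, the $\norm{a+b}^2 \leq 2\norm{a}^2 + 2\norm{b}^2$ split, and killing the cross terms $\bbE[Z_i^\top Z_j]=0$ via independence and zero mean) is exactly the standard derivation of that cited result, and your closing remark that the exact decomposition $\bbE\norm{S}^2 = \norm{\bbE[S]}^2 + \bbE\norm{S-\bbE[S]}^2$ would sharpen both constants to $1$ is also accurate, though the factor-$2$ form is all the paper's analysis uses.
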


\begin{Lem}\label{lem:sqr_norm_of_sum_to_sum_of_sqr_norm}
    Let $X_1,\ldots,X_N\in\reals^d$ be $N$ orthogonal, zero mean random variables, i.e., $\bbE[X_i] = 0$ for all $i\in\sbrac{N}$, and $\bbE[X_i^\top X_j] = 0$ for all $i\neq j$. Then, the following holds:
    \begin{align*}
        \bbE\bnorm{\sum_{i=1}^{N}{X_i}}^2 = \sum_{i=1}^{N}{\bbE\lVert X_i \rVert^2}\; .
    \end{align*}
\end{Lem}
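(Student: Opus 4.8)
The plan is to reduce the statement to the elementary bilinear expansion of the Euclidean norm and then invoke the orthogonality hypothesis to discard cross terms. First I would write, for any fixed realization of the random vectors, the pointwise identity
\[
    \bnorm{\sum_{i=1}^{N}{X_i}}^2 = \brac{\sum_{i=1}^{N}{X_i}}^\top\brac{\sum_{j=1}^{N}{X_j}} = \sum_{i=1}^{N}\sum_{j=1}^{N}{X_i^\top X_j}\; ,
\]
which follows purely from the bilinearity of the inner product and holds before any expectation is taken.

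Next I would take expectations of both sides. Since this is a finite double sum, linearity of expectation applies directly (the only implicit requirement being that each $\bbE\norm{X_i}^2$ is finite, so that every term is integrable), yielding
\[
    \bbE\bnorm{\sum_{i=1}^{N}{X_i}}^2 = \sum_{i=1}^{N}\sum_{j=1}^{N}{\bbE[X_i^\top X_j]}\; .
\]
I would then split this double sum into its diagonal part $(i=j)$ and its off-diagonal part $(i\neq j)$, and apply the orthogonality hypothesis $\bbE[X_i^\top X_j]=0$ for $i\neq j$ to annihilate every off-diagonal contribution. What remains is exactly
\[
    \sum_{i=1}^{N}{\bbE[X_i^\top X_i]} = \sum_{i=1}^{N}{\bbE\norm{X_i}^2}\; ,
\]
which is the claimed identity.

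There is essentially no technical obstacle here: this is the standard Pythagorean (orthogonality) identity, and the whole argument is one expansion followed by the hypothesis. The only point worth flagging is which assumptions are actually used. The derivation relies solely on the pairwise orthogonality $\bbE[X_i^\top X_j]=0$ together with linearity of expectation; in particular it does \emph{not} require independence of the $X_i$, and the zero-mean hypothesis $\bbE[X_i]=0$ is not invoked in the conclusion itself. The zero-mean condition is included only to match the intended application, where the $X_i$ play the role of the martingale-type noise terms $\hat{\xi}_k$ (for which orthogonality is precisely the statement that the centered noises are uncorrelated); I would note this so the reader sees why both hypotheses appear even though orthogonality alone suffices for the stated equality.
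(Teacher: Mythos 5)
Your proof is correct and follows essentially the same route as the paper's: expand the squared norm into the double sum $\sum_{i,j}\bbE[X_i^\top X_j]$ by linearity of expectation, then kill the off-diagonal terms using the orthogonality hypothesis. Your side remark that the zero-mean assumption is never actually invoked (orthogonality alone suffices) is accurate and does not change the argument.
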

\begin{proof}
    By the linearity of expectation, and the following property: $\bbE[X_i^\top X_j]=0, \enskip\forall i\neq j$, we immediately get that $\bbE\norm{\sum_{i=1}^{N}{X_i}}^2 = \bbE\sbrac{\sum_{i=1}^{N}{\sum_{j=1}^{N}{X_i^\top X_j}}} = \sum_{i=1}^{N}{\bbE\lVert X_i\rVert^2}$.
\end{proof}

Next, we state a simple result about the squared norm of the sum of vectors.
\begin{Lem}\label{lem:technical_norm_of_sum_to_sum_of_norms}
    For any $u_1,\ldots,u_N\in\reals^d$, it holds that $\norm{\sum_{i=1}^{N}{u_i}}^2 \leq N\sum_{i=1}^{N}{\norm{u_i}^2}$. 
\end{Lem}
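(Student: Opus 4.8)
The plan is to prove this standard inequality via the convexity of the squared Euclidean norm, which gives a one-line argument. First I would observe that the map $x\mapsto\norm{x}^2$ is convex on $\reals^d$, so Jensen's inequality applied to the uniform average of the $u_i$ yields
\[
    \bnorm{\frac{1}{N}\sum_{i=1}^{N}{u_i}}^2 \leq \frac{1}{N}\sum_{i=1}^{N}{\norm{u_i}^2}\; .
\]
Multiplying both sides by $N^2$ and pulling the $1/N$ out of the norm on the left (it squares to $1/N^2$) gives exactly $\norm{\sum_{i=1}^{N}{u_i}}^2 \leq N\sum_{i=1}^{N}{\norm{u_i}^2}$, which is the claim.

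For a fully self-contained argument that avoids invoking Jensen, I would instead expand the left-hand side as an inner product and bound the cross terms. Writing $\norm{\sum_{i}u_i}^2 = \sum_{i=1}^{N}\sum_{j=1}^{N}\inner{u_i, u_j}$, I would bound each term by $\inner{u_i,u_j}\leq \tfrac12\norm{u_i}^2 + \tfrac12\norm{u_j}^2$, which is just the rearrangement of $\norm{u_i-u_j}^2\geq 0$. Summing over all $N^2$ ordered pairs, each $\norm{u_i}^2$ is counted with total weight $N$ (once as the $i$-index across $j$, once as the $j$-index across $i$, each contributing weight $N/2$), so the double sum is at most $N\sum_{i}\norm{u_i}^2$.

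There is essentially no genuine obstacle here: the statement is a discrete power-mean (equivalently Cauchy--Schwarz) inequality, being $\brac{\sum_i a_i}^2 \leq N\sum_i a_i^2$ applied to $a_i = \norm{u_i}$, combined with the triangle inequality $\norm{\sum_i u_i}\leq\sum_i\norm{u_i}$. I would present the convexity route as the primary proof for its brevity, noting that equality holds precisely when all the $u_i$ coincide.
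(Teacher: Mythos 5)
Your primary argument—convexity of $\lVert\cdot\rVert^2$ together with Jensen's inequality applied to the uniform average, then rescaling by $N^2$—is exactly the proof given in the paper, and it is correct. The alternative inner-product expansion you sketch is also valid, but it is not needed beyond the one-line Jensen argument.
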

\begin{proof}
    By the convexity of $\lVert\cdot\rVert^2$ and Jensen's inequality: $\norm{\frac{1}{N}\sum_{i=1}^{N}{u_i}}^2 {\leq} \frac{1}{N}\sum_{i=1}^{N}{\norm{u_i}^2}$, which implies the result. 
\end{proof}
The next result is a simple bound on a double sum of non-negative numbers. 
\begin{Lem}\label{lem:double_sum_to_single_sum}
    Let $t,\tau\in\naturals$ such that $t\geq \tau+1$. For any sequence of non-negative numbers $x_0,x_1,\ldots,x_{t-\tau-1}$, the following holds
    \[
        \sum_{k=0}^{t-\tau}{\sum_{\ell=0}^{k-1}{x_\ell}}\leq t\cdot\sum_{k=0}^{t-\tau-1}{x_k}\; .
    \]
\end{Lem}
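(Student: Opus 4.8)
The plan is to bound the double sum by treating the outer index term-by-term and exploiting that the $x_\ell$ are non-negative. First I would observe that the innermost sum is empty when $k=0$ (the index range $\ell=0,\dots,-1$ is void), so $\sum_{\ell=0}^{-1}x_\ell=0$ and the outer summation effectively starts at $k=1$:
\[
\sum_{k=0}^{t-\tau}\sum_{\ell=0}^{k-1}x_\ell = \sum_{k=1}^{t-\tau}\sum_{\ell=0}^{k-1}x_\ell .
\]
This leaves exactly $t-\tau$ nonvanishing outer terms, which is the key accounting that makes the stated constant sharp.

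Next, for each $k$ with $1\le k\le t-\tau$ we have $k-1\le t-\tau-1$, so the index set $\{0,\dots,k-1\}$ is contained in $\{0,\dots,t-\tau-1\}$; since every $x_\ell\ge 0$, this gives the uniform bound
\[
\sum_{\ell=0}^{k-1}x_\ell \le \sum_{\ell=0}^{t-\tau-1}x_\ell .
\]
Summing this over the $t-\tau$ values of $k$ yields $\sum_{k=1}^{t-\tau}\sum_{\ell=0}^{k-1}x_\ell \le (t-\tau)\sum_{\ell=0}^{t-\tau-1}x_\ell$. Finally, since $\tau\in\naturals$ is non-negative we have $t-\tau\le t$, and as the remaining sum is non-negative I conclude $(t-\tau)\sum_{\ell=0}^{t-\tau-1}x_\ell\le t\sum_{k=0}^{t-\tau-1}x_k$, which is the claim.

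The one place I would be careful — and the only real obstacle — is the bookkeeping of the outer range. A crude bound that retains the $k=0$ term counts $t-\tau+1$ outer terms and produces the weaker factor $t-\tau+1$, which exceeds $t$ exactly in the boundary case $\tau=0$; recognizing that the $k=0$ contribution vanishes is precisely what recovers the factor $t$. As a cross-check I would swap the order of summation: each $x_\ell$ with $0\le\ell\le t-\tau-1$ is counted once for every $k$ satisfying $\ell+1\le k\le t-\tau$, hence with multiplicity $t-\tau-\ell$, giving the identity
\[
\sum_{k=0}^{t-\tau}\sum_{\ell=0}^{k-1}x_\ell = \sum_{\ell=0}^{t-\tau-1}(t-\tau-\ell)\,x_\ell .
\]
Since $0\le t-\tau-\ell\le t-\tau\le t$ and $x_\ell\ge 0$, the same bound follows, confirming that $t$ dominates every multiplicity and that non-negativity of the $x_\ell$ is the sole hypothesis that cannot be dropped.
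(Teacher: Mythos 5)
Your proof is correct, and your cross-check via swapping the order of summation --- obtaining $\sum_{\ell=0}^{t-\tau-1}(t-\tau-\ell)x_\ell$ and bounding every multiplicity by $t$ --- is exactly the paper's own one-line argument. Your primary route (dropping the empty $k=0$ term and bounding each of the $t-\tau$ remaining inner sums by $\sum_{\ell=0}^{t-\tau-1}x_\ell$ using non-negativity, then $t-\tau\leq t$) is an equally elementary, equivalent bookkeeping, so the proposal matches the paper's proof in substance.
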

\begin{proof}
    Immediately: $\sum_{k=0}^{t-\tau}{\sum_{\ell=0}^{k-1}{x_\ell}} = \sum_{k=0}^{t-\tau-1}{(t-\tau-k)x_k} \leq t\cdot\sum_{k=0}^{t-\tau-1}{x_k}$.
\end{proof}
The following lemma gives a bound on the derivative of a power series.  
\begin{Lem}\label{lem:term_by_term_differentiation}
Let $a<1$. Then, 
    \begin{equation*}
        \sum_{k=1}^{\infty}{k a^{k}} =\frac{a}{(1-a)^2}\; .
    \end{equation*}
\end{Lem}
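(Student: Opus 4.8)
The plan is to derive this from the geometric series formula. First I would recall that for $|a|<1$ (which is the regime we actually need; in the paper's application $a=\rho\Tcal\le 1/2$, so this holds) the geometric series converges with $\sum_{k=0}^{\infty}a^k=\frac{1}{1-a}$. Since a power series may be differentiated term by term within its radius of convergence, and this series has radius $1$, differentiating both sides with respect to $a$ gives $\sum_{k=1}^{\infty}k a^{k-1}=\frac{d}{da}\!\left(\frac{1}{1-a}\right)=\frac{1}{(1-a)^2}$. Multiplying through by $a$ then yields the claim $\sum_{k=1}^{\infty}k a^k=\frac{a}{(1-a)^2}$.

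Alternatively, and avoiding any appeal to the analytic legitimacy of term-by-term differentiation, I would use a self-contained algebraic manipulation, which is the route I would actually present. Set $S\coloneqq\sum_{k=1}^{\infty}k a^k$; this converges absolutely for $|a|<1$ (e.g.\ by the ratio test, since $\frac{(k+1)|a|^{k+1}}{k|a|^k}\to|a|<1$), so rearrangements are justified. I would then compute $(1-a)S$ by index-shifting: writing $aS=\sum_{k=2}^{\infty}(k-1)a^k$ and subtracting, the coefficient of $a^k$ for $k\ge 2$ collapses to $k-(k-1)=1$, while the $k=1$ term contributes $a$, so $(1-a)S=a+\sum_{k=2}^{\infty}a^k=\sum_{k=1}^{\infty}a^k=\frac{a}{1-a}$. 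Dividing by $1-a$ (nonzero since $a<1$) gives $S=\frac{a}{(1-a)^2}$.

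There is no real obstacle here — the result is a standard identity. The only point requiring a word of care is the justification of the manipulation itself: in the first approach one must invoke that power series differentiate term by term inside the disk of convergence, and in the second approach one must know the series converges absolutely so that the telescoping rearrangement is valid. Both are licensed precisely by $|a|<1$, which is why I would state the hypothesis as $|a|<1$ rather than the literal $a<1$ (the two coincide for the nonnegative values $a=\rho\Tcal$ arising in \cref{lem:model_estimation_error}). I would favor the algebraic route since it keeps the lemma elementary and self-contained, matching the level of rigor needed where it is applied.
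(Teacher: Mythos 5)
Your proposal is correct, and your preferred route differs from the paper's. The paper's proof is exactly your first sketch: it writes $\sum_{k\geq 1} k a^k = a\sum_{k\geq 1} f_k'(a)$ with $f_k(a)=a^k$, invokes term-by-term differentiation of the geometric series to get $\bigl(\tfrac{a}{1-a}\bigr)' = \tfrac{1}{(1-a)^2}$, and multiplies by $a$. Your second, preferred argument — setting $S=\sum_{k\geq 1}k a^k$, checking absolute convergence by the ratio test, and computing $(1-a)S = a + \sum_{k\geq 2}a^k = \tfrac{a}{1-a}$ via an index shift — reaches the same identity without appealing to the analytic fact about differentiating power series inside the disk of convergence, so it is more elementary and self-contained; the cost is that you must explicitly justify the rearrangement via absolute convergence, whereas the paper outsources the delicate step to a cited calculus reference. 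Your remark that the hypothesis should really read $|a|<1$ (rather than the literal $a<1$, under which the series need not converge for $a\leq -1$) is a fair observation; it is immaterial in the paper's application, where $a=\rho\Tcal\in(0,1/2]$, but your statement is the more careful one.
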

\begin{proof}
    Let $f_k(a) = a^k$. 
    \begin{align*}
        \sum_{k=1}^{\infty}{k a^{k}} = a\sum_{k=1}^{\infty}{k a^{k-1}} = a\sum_{k=1}^{\infty}{f'_k(a)}\; .
    \end{align*}    
    Using term-by-term differentiation~\cite{stewart2015calculus}, we have that
    \begin{equation*}
        \sum_{k=1}^{\infty}{f'_k(a)} = \brac{\sum_{k=1}^{\infty}{f_k(a)}}' = \brac{\sum_{k=1}^{\infty}{a^k}}' = \brac{\frac{a}{1-a}}' = \frac{1}{(1-a)^2}\; .
    \end{equation*}
    Multiplying by $a$ gives the result. 
\end{proof}
Next, we state a non-trivial inequality to bound the double sum that appears on the right-hand side of Eq.~\eqref{eq:double_sum_to_be_bounded_by_lemma}. 
\begin{Lem}\label{lem:double_sum_of_weighted_grads_to_single_sum}
    Let $a\in(0,1)$ and $T, \Tcal\in\naturals$. Moreover, let $x_0,\ldots,x_{T-1}$ be a sequence of non-negative numbers. Then,
    \begin{equation*}
        \sum_{t=1}^{T}{\sum_{k=0}^{t-1}{a^{\ceil{\frac{t-k}{\Tcal}}} x_k}} \leq \Tcal\frac{a}{1-a}\sum_{k=0}^{T-1}{x_k}\; .
    \end{equation*}
\end{Lem}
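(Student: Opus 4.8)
The plan is to exchange the order of summation and reduce the double sum to a single series that is independent of the outer index. First I would swap the sums: the constraints are $1\le t\le T$ and $0\le k\le t-1$, so for a fixed $k\in\{0,\dots,T-1\}$ the index $t$ runs over $k+1,\dots,T$. Hence
\[
\sum_{t=1}^{T}\sum_{k=0}^{t-1} a^{\ceil{\frac{t-k}{\Tcal}}} x_k = \sum_{k=0}^{T-1} x_k \sum_{t=k+1}^{T} a^{\ceil{\frac{t-k}{\Tcal}}}\;,
\]
and the goal becomes bounding the inner sum by a constant that does not depend on $k$.

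Next I would substitute $j=t-k$, turning the inner sum into $\sum_{j=1}^{T-k} a^{\ceil{j/\Tcal}}$. Since $a>0$ and every summand is non-negative, I can extend this finite sum to the full series $\sum_{j=1}^{\infty} a^{\ceil{j/\Tcal}}$, which only increases its value and removes the dependence on $k$. The key combinatorial observation is that for each integer $m\ge 1$, the equation $\ceil{j/\Tcal}=m$ holds for exactly the $\Tcal$ consecutive integers $j=(m-1)\Tcal+1,\dots,m\Tcal$. Grouping the terms of the series by the common value $m$ of the ceiling therefore gives
\[
\sum_{j=1}^{\infty} a^{\ceil{j/\Tcal}} = \sum_{m=1}^{\infty} \Tcal\, a^{m} = \Tcal\sum_{m=1}^{\infty} a^m = \Tcal\,\frac{a}{1-a}\;,
\]
where the last equality is the standard geometric series, valid because $a\in(0,1)$.

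Finally, inserting this uniform bound on the inner sum back into the swapped double sum yields
\[
\sum_{t=1}^{T}\sum_{k=0}^{t-1} a^{\ceil{\frac{t-k}{\Tcal}}} x_k \le \Tcal\,\frac{a}{1-a}\sum_{k=0}^{T-1} x_k\;,
\]
which is exactly the claim. The only step requiring care—and the main (mild) obstacle—is the counting argument: one must verify that the level sets $\{\,j : \ceil{j/\Tcal}=m\,\}$ each contain precisely $\Tcal$ integers, since this is what converts the awkward ceiling exponent into a clean geometric series in $m$. Everything else is routine reindexing together with the monotone extension of a non-negative series.
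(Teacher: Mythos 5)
Your proposal is correct and follows essentially the same route as the paper's proof: change the order of summation, reduce to $\sum_{j} a^{\ceil{j/\Tcal}}$, group the terms into blocks of $\Tcal$ consecutive indices sharing the same ceiling value, and sum the resulting geometric series to obtain $\Tcal\frac{a}{1-a}$. The only cosmetic difference is that you extend the inner sum to an infinite series before grouping, whereas the paper groups the finite sum into (at most) $\ceil{\frac{T-k}{\Tcal}}$ full blocks and then bounds the geometric sum by its infinite counterpart; the two steps are interchangeable and yield the same bound.
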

\begin{proof}
    We start with changing the order of summation in the left-hand side. Note that for any fixed $k$, the element $x_k$ appears in the inner sum if $k\leq t-1$, or equivalently, $t\geq k+1$. Therefore,
    \begin{align}\label{eq:change_summation_order_lemma_C5}
        \sum_{t=1}^{T}{\sum_{k=0}^{t-1}{a^{\ceil{\frac{t-k}{\Tcal}}} x_k}} &= \sum_{k=0}^{T-1}{\brac{\sum_{t=k+1}^{T}{a^{\ceil{\frac{t-k}{\Tcal}}}}}x_k} = \sum_{k=0}^{T-1}{\brac{\sum_{t=1}^{T-k}{a^{\ceil{\frac{t}{\Tcal}}}}}x_k}\; .
    \end{align}
    Focusing on the inner sum in the right-hand side, $\sum_{t=1}^{T-k}{a^{\ceil{t/\Tcal}}}$, we can divide the interval of integers from $1$ to $T-k$ into non-overlapping intervals of length $\Tcal$ (and possibly a small residual) and get that
    \begin{equation*}
        \sum_{t=1}^{T-k}{a^{\ceil{\frac{t}{\Tcal}}}} \leq \sum_{m=1}^{\ceil{\frac{T-k}{\Tcal}}}{\sum_{\ell=1}^{\Tcal}{a^{\ceil{\frac{(m-1)\Tcal+\ell}{\Tcal}}}}} \overset{(\dag)}{=} \sum_{m=1}^{\ceil{\frac{T-k}{\Tcal}}}{\sum_{\ell=1}^{\Tcal}{a^{m}}} = \Tcal\sum_{m=1}^{\ceil{\frac{T-k}{\Tcal}}}{a^m} \leq \Tcal\frac{a}{1-a}\; .
    \end{equation*}
    where $(\dag)$ holds because for every $\ell=1,\ldots,\Tcal$ we have $\ceil{\frac{(m-1)\Tcal + \ell}{\Tcal}} = m$, and the last inequality follows from $\sum_{m=1}^{\ceil{\frac{T-k}{\Tcal}}}{a^m}\leq \sum_{m=1}^{\infty}{a^m} = \frac{a}{1-a}$ as $a<1$. Plugging back to Eq.~\eqref{eq:change_summation_order_lemma_C5} concludes the proof.
\end{proof}
The next lemma establishes that for small enough $\eta$, we have $-\eta/2 + \Ocal(\eta^2)\leq -\eta/4$.
\begin{Lem}\label{lem:sum_of_grads_coeff_bound}
    Let $\gamma,\theta\geq 1$. For every $\eta\leq\frac{1}{30\gamma\beta\theta}$, it holds that
    \begin{equation*}
        -\frac{\eta}{2} + 2\gamma\beta\eta^2 + 160\gamma\beta^2\theta^2\eta^3 \leq -\frac{\eta}{4}\; .
    \end{equation*}
\end{Lem}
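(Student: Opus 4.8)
The plan is to show that the two positive terms together are dominated by $\eta/4$, so that once the leading term $-\eta/2$ is included the whole expression is at most $-\eta/4$. Since $\eta>0$, I would first divide the claimed inequality through by $\eta$, reducing it to the equivalent polynomial inequality $2\gamma\beta\eta + 160\gamma\beta^2\theta^2\eta^2 \le \tfrac14$. Both terms on the left are monotonically increasing in $\eta$ on $(0,\infty)$, so substituting the hypothesis $\eta \le \frac{1}{30\gamma\beta\theta}$ into each term produces an upper bound on the left-hand side.

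Carrying out the two substitutions: for the linear term, $2\gamma\beta\eta \le 2\gamma\beta\cdot\frac{1}{30\gamma\beta\theta} = \frac{1}{15\theta} \le \frac{1}{15}$, where the last step uses $\theta\ge 1$. For the quadratic term, $160\gamma\beta^2\theta^2\eta^2 \le 160\gamma\beta^2\theta^2\cdot\frac{1}{900\gamma^2\beta^2\theta^2} = \frac{8}{45\gamma} \le \frac{8}{45}$, using $\gamma\ge 1$. Adding the two bounds gives $\frac{1}{15}+\frac{8}{45} = \frac{11}{45}$, and a direct check shows $\frac{11}{45}\le \frac14$ (cross-multiplying, $44\le 45$). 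Multiplying back through by $\eta$ then yields $2\gamma\beta\eta^2 + 160\gamma\beta^2\theta^2\eta^3 \le \frac{\eta}{4}$, from which the desired inequality follows.

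There is essentially no obstacle here; the content is purely arithmetic bookkeeping. The only points worth stating carefully are that dividing by $\eta$ is legitimate because $\eta>0$, and that the termwise substitution is valid precisely because both remaining terms are increasing in $\eta$, so replacing $\eta$ by its upper bound can only increase them. It is worth noting that the constant $30$ in the step-size cap is calibrated exactly so that the final sum lands just below $\frac14$ (with the thin slack $\frac{11}{45}$ versus $\frac{1}{4}$), so no sharper estimate of the individual terms is required.
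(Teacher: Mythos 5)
Your proof is correct and follows essentially the same route as the paper's: after rearranging, you bound each positive term separately using $\eta\leq\frac{1}{30\gamma\beta\theta}$ together with $\gamma,\theta\geq 1$, arriving at the same slack ($\frac{11}{45}\leq\frac{1}{4}$, which is exactly the paper's $\frac{44}{45}\cdot\frac{\eta}{4}$). The only cosmetic difference is that you divide by $\eta>0$ and apply $\gamma,\theta\geq 1$ after substituting the bound on $\eta$, while the paper keeps the factor $\eta$, enlarges the coefficients via $\gamma,\theta\geq 1$ first, and then factors out $\frac{\eta}{4}$; the substance is identical.
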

\begin{proof}
    We equivalently prove that
    \begin{equation*}
        2\gamma\beta\eta^2 + 160\gamma\beta^2\theta^2\eta^3 \leq \frac{\eta}{4}\; .
    \end{equation*}
    Since both $\gamma\geq 1$ and $\theta\geq 1$, we have 
    \begin{align*} 
        2\gamma\beta\eta^2 + 160\gamma\beta^2\theta^2\eta^3 &\leq 2\gamma\beta\theta\eta^2 + 160\gamma^2\beta^2\theta^2\eta^3 \\ &= \frac{\eta}{4}\brac{8\gamma\beta\theta\eta + 640\gamma^2\beta^2\theta^2\eta^2} \\ &\leq \frac{\eta}{4}\brac{\frac{8\gamma\beta\theta}{30\gamma\beta\theta} + \frac{640\gamma^2\beta^2\theta^2}{900\gamma^2\beta^2\theta^2}} = \frac{\eta}{4}\cdot\frac{44}{45} \leq \frac{\eta}{4}\; ,  
    \end{align*}
    where the second inequality follows from the upper bound on $\eta$. 
\end{proof}
We also make use of the following result, which we prove using simple algebra.
\begin{Lem}\label{lem:learning_rate_bound}
    Suppose $\eta = \min\cbrac{\eta_1,\eta_2,\eta_3}$ for some $\eta_1,\eta_2,\eta_3>0$, and let $A,B,C>0$. Then, the following holds:
    \begin{align*}
        \frac{A}{\eta} + B\eta + C\eta^2 \leq A\brac{\frac{1}{\eta_1} + \frac{1}{\eta_2} + \frac{1}{\eta_3}} + B\eta_2 + C\eta_3^2\; .
    \end{align*}
\end{Lem}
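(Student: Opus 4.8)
The plan is to bound each of the three summands on the left-hand side separately and then add the resulting inequalities. The only structural fact I need is that $\eta=\min\{\eta_1,\eta_2,\eta_3\}$ implies $\eta\le\eta_i$ for every $i\in\{1,2,3\}$, together with the strict positivity of $A,B,C$ and of each $\eta_i$. No deeper property of the minimum is required.

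First I would handle the term $A/\eta$, which is the only one whose treatment is not entirely trivial. Since $\eta$ equals one of the $\eta_j$, we have $1/\eta = 1/\eta_j$ for some index $j$; because all three reciprocals $1/\eta_1,1/\eta_2,1/\eta_3$ are strictly positive, dropping the other two can only decrease the sum, so
\begin{equation*}
    \frac{1}{\eta} = \frac{1}{\eta_j} \leq \frac{1}{\eta_1} + \frac{1}{\eta_2} + \frac{1}{\eta_3}\; .
\end{equation*}
Multiplying through by $A>0$ gives $A/\eta \le A\big(1/\eta_1 + 1/\eta_2 + 1/\eta_3\big)$. The conceptual point here is that we do not know a priori which $\eta_j$ attains the minimum, so we bound by the full sum of reciprocals, which is valid no matter which index realizes it.

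Next I would dispatch the remaining two terms by single-index comparisons. From $\eta\le\eta_2$ and $B>0$ we get $B\eta\le B\eta_2$ immediately. From $\eta\le\eta_3$ with both quantities positive, squaring preserves the order so $\eta^2\le\eta_3^2$, and multiplying by $C>0$ yields $C\eta^2\le C\eta_3^2$. Summing the three inequalities produces exactly the claimed bound. The main ``obstacle'' is merely to notice the asymmetry in how the three terms are bounded: the $1/\eta$ term must be controlled by the entire sum of reciprocals (since the minimizer is unknown), whereas the $B\eta$ and $C\eta^2$ terms are each bounded by a single designated $\eta_i$ — precisely the form in which the lemma is invoked in the proof of \cref{thm:general_docoFL}, with $\eta_1=1/(30\gamma\beta\theta)$ supplying the deterministic term and $\eta_2,\eta_3$ the two statistical step-size choices.
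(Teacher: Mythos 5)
Your proof is correct and follows essentially the same route as the paper's: bound $A/\eta$ by the full sum of reciprocals (the paper phrases this via $1/\eta = \max\{1/\eta_1,1/\eta_2,1/\eta_3\}$, you via $1/\eta = 1/\eta_j$ for the minimizing index — the same observation), then bound $B\eta$ by $B\eta_2$ and $C\eta^2$ by $C\eta_3^2$ using monotonicity. No gaps; nothing further needed.
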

\begin{proof}
    Since $\eta$ is the minimum of three terms, $1/\eta$ is the maximum of their inverses. Thus, we can bound $1/\eta$ by the sum of the inverses as follows:
    \begin{align*}
        \frac{A}{\eta} = A\max\cbrac{\frac{1}{\eta_1}, \frac{1}{\eta_2}, \frac{1}{\eta_3}} \leq A\brac{\frac{1}{\eta_1} + \frac{1}{\eta_2} + \frac{1}{\eta_3}}\; .
    \end{align*}
    The terms $B\eta$ and $C\eta^2$ are monotonically increasing with $\eta$. We can therefore bound $\eta$ by $\eta_2$ and $\eta^2$ by $\eta_3^2$. 
\end{proof}

\section{Entropy-Constrained Uniform Quantization}\label{app:ecuq}
In this section, we describe a new compression technique entitled Entropy-Constrained Uniform Quantization (ECUQ), which we developed for anchor compression, although it can be of independent interest. ECUQ is described in Algorithm~\ref{alg:ecuq}. 

\vspace{-2mm}
Let $x = \brac{x(1),\ldots,x(d)}\in\reals^d$ be some input vector we wish to compress using ECUQ. Denote: $x_{\min}\coloneqq \min_{i}{x(i)}$, $x_{\max}\coloneqq \max_{i}{x(i)}$. Given some bandwidth budget of $b$ bits/coordinate, ECUQ initially divided the interval $\sbrac{x_{\min}, x_{\max}}$ into $K=2^b$ non-overlapping bins of equal size $\Delta = (x_{\max} - x_{\min})/K$. Then, it sets the quantization values, which we denote by $\Q$, to be the centers of these bins. Afterwards, the vector $x$ is quantized into elements of $\Q$, that is, each element $x(i)$ is assigned to its closest quantization value $q\in\Q$ to generate the quantized vector $\hat{x}_\Q$, whose elements are all in $\Q$. We then compute the empirical distribution of the quantized vector by counting for every $q\in\Q$ the number of times it appears in $\hat{x}_\Q$, and the entropy of the resulting distribution. Note that the entropy is upper bounded by $\log{K} = b$. Finally, for some small tolerance parameter $\epsilon$ (we use $\epsilon=0.1$), we check whether the entropy is within $\epsilon$ distance from the budget $b$: if it is not the case, then we perform a double binary search, repeating the above procedure with increased number of quantization values $K$, to find the maximal number of uniformly spaced quantization values such that the entropy of the empirical distribution of the resulting quantized vector is within $\epsilon$ distance from $b$. Only after this entropy condition is satisfied, we encode $\hat{x}_\Q$ using some entropy encoding (we use Huffman coding).  

\vspace{-2mm}
\begin{algorithm}
\caption{Entropy-Constrained Uniform Quantization (ECUQ)}
\begin{algorithmic}\label{alg:ecuq}
    \STATE{\bfseries Input:} Vector $x\in\reals^d$, bandwidth budget $b$ (bits/coordinate), tolerance $\epsilon$. 
    \STATE $x_{\max}\leftarrow \max_{i}{x(i)}, \enskip x_{\min}\leftarrow \min_{i}{x(i)}$ \algorithmiccomment{Get max/min values of input vector}
    \STATE $K\leftarrow 2^{b}, \enskip \Delta\leftarrow(x_{\max} - x_{\min})/K$ \algorithmiccomment{Initialize \# of quantization values and bin length}
    \STATE $\Q \leftarrow\cbrac{x_{\min} + \brac{k+\frac{1}{2}}\cdot\Delta: k=0,\ldots,K-1}$ \algorithmiccomment{Set uniformly spaced quantization values}
    \STATE $\hat{x}_\Q\leftarrow\texttt{Quantize}(x, \Q)$ \algorithmiccomment{$\hat{x}_{\Q}(i) = \arg\min_{q\in\Q}{\norm{x(i) - q}}$}
    \STATE $p_{\mathcal{Q}}\leftarrow\texttt{Empirical\_Density}(\hat{x}_\Q)$ \algorithmiccomment{$p_{\mathcal{Q}}(q) = \frac{1}{N}\sum_{i\in\sbrac{N}}{\mathbbm{1}\cbrac{\hat{x}_\Q(i)=q}}, \enskip \forall q\in\Q$}
    \STATE $\entropy(p_{\mathcal{Q}})\leftarrow\texttt{Entropy}(p_\Q)$ \algorithmiccomment{$\entropy(p_{\mathcal{Q}}) = -\sum_{q\in\Q}{p_{\mathcal{Q}}(q)\log{p_{\mathcal{Q}}(q)}}$} 
    \IF{$\entropy(p_{\mathcal{Q}})< b-\epsilon$}
        \STATE $\hat{x}_\Q\leftarrow\textsc{Double\_Binary\_Search\_Num\_Quantization\_Levels}(x, b)$
    \ENDIF
    \STATE $\hat{x}_e\leftarrow\texttt{Huffman\_Coding}(\hat{x}_\Q)$ \algorithmiccomment{Entropy encoding of the quantized vector}
    \STATE{\bfseries Return:} $\hat{x}_e$
    \vspace{0.05in}
    \STATE \textbf{Procedure} \textsc{Double\_Binary\_Search\_Num\_Quantization\_Levels}$(x, b)$ 
        \STATE \hspace{1em} Initialize: $\text{low}\leftarrow 2^b,\enskip \text{high}\leftarrow\infty,\enskip p\leftarrow -1$
        \STATE \hspace{1em} {\bfseries while} $\text{low}\leq\text{high}$ {\bfseries do}
        \STATE \hspace{2em} {\bfseries if} $\text{high} == \infty$ {\bfseries then}
        \STATE \hspace{3em} $p\leftarrow p+1$
        \STATE \hspace{3em} $\text{mid}\leftarrow 2^b + 2^p$ \algorithmiccomment{Increase \# of levels exponentially}
        \STATE \hspace{2em} {\bfseries else} 
        \STATE \hspace{3em} $\text{mid}\leftarrow (\text{low} + \text{high})/2$
        \STATE \hspace{2em} {\bfseries end if}
        \STATE \hspace{2em} $K\leftarrow \text{mid}, \enskip \Delta\leftarrow(x_{\max} - x_{\min})/\text{mid}$
        \STATE \hspace{2em} $\Q \leftarrow\cbrac{x_{\min} + \brac{k+\frac{1}{2}}\cdot\Delta: k=0,\ldots,K-1}$
        \STATE \hspace{2em} $\hat{x}_\Q\leftarrow\texttt{Quantize}(x, \Q)$ 
        \STATE \hspace{2em} $p_{\mathcal{Q}}\leftarrow\texttt{Empirical\_Density}(\hat{x}_\Q)$ 
        \STATE \hspace{2em} $\entropy(p_{\mathcal{Q}})\leftarrow\texttt{Entropy}(p_\Q)$ 
        \STATE \hspace{2em} {\bfseries if} $\entropy(p_{\mathcal{Q}})>b$ {\bfseries then}
        \STATE \hspace{3em} $\text{high}\leftarrow\text{mid}-1$
        \STATE \hspace{2em} {\bfseries else if} $\entropy(p_{\mathcal{Q}})<b-\epsilon$ {\bfseries then}
        \STATE \hspace{3em} $\text{low}\leftarrow\text{mid}+1$
        \STATE \hspace{2em} {\bfseries else}
        \STATE \hspace{3em} return $\hat{x}_\Q$
        \STATE \hspace{2em} {\bfseries end if}
\end{algorithmic}
\end{algorithm}

Figure~\ref{fig:ecuq} illustrates ECUQ's encoder, as described in the text above. The corresponding decoder is fairly simple as it only performs entropy decoding in linear time (Huffman decoding). 

\begin{wrapfigure}{r}{0.45\textwidth} 
    \centering
    \includegraphics[width=\linewidth]{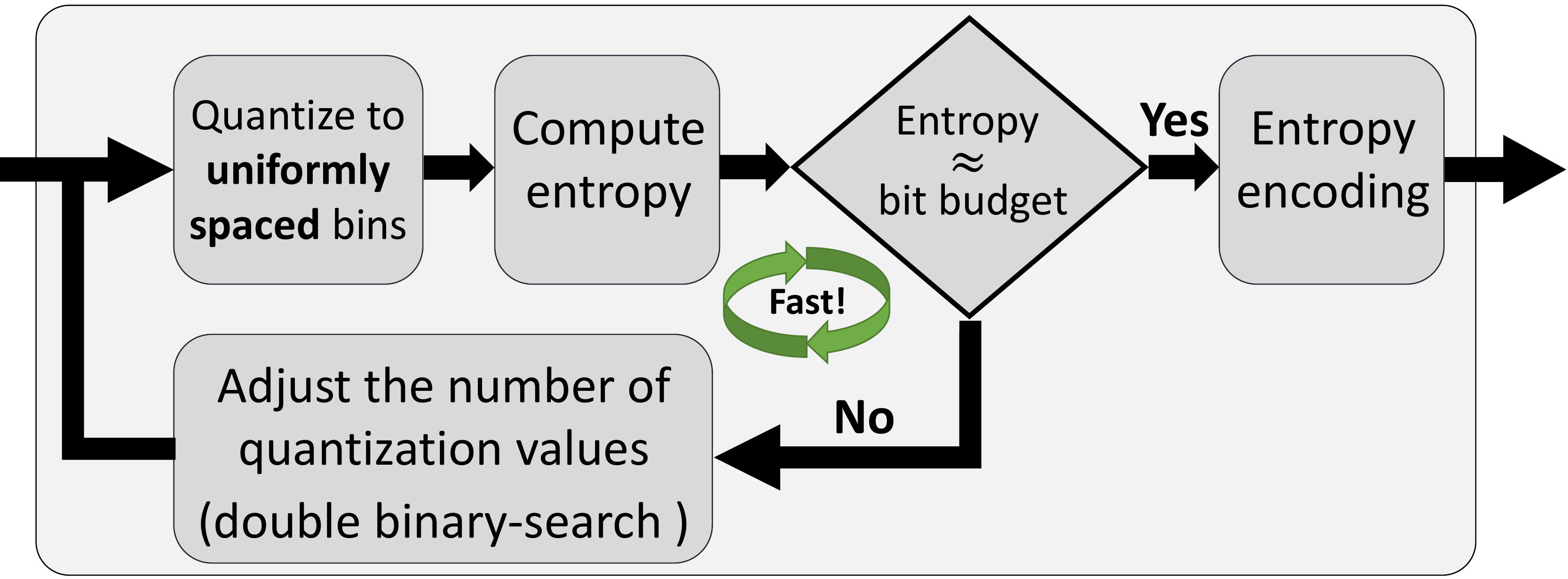}
    \vspace{-5mm}
    \caption{ECUQ encoder's illustration.}
    \vspace{-3mm}
    \label{fig:ecuq}
\end{wrapfigure}

While devising ECUQ, we also considered an additional method to approximate ECQ. It is similar to ECUQ, but instead of using uniformly spaced quantization values, it uses K-Means clustering to find the quantization values that minimize the overall squared error. We used a double binary search to find the largest number of levels $K$ such that, after entropy encoding, the bandwidth constraint is satisfied. We termed this method \emph{Entropy-Constrained K-Means (ECK-Means)}.

We compare the performance of ECUQ with ECQ and ECK-Means in terms of their NMSE, and we also measure their encoding time. As we mentioned in the Section~\ref{sec:anchor_compression}, ECQ is sensitive to hyperparameters; thus, we implemented it using a grid search over its hyperparameters to guarantee near-optimal performance.\footnote{While such implementation may increase the encoding time, we are not aware of any other approach to guarantee an optimal performance. ECQ aims at solving a hard non-convex problem, and different hyperparameters may result in different local minima.} We evaluate the three methods on vectors drawn from three different synthetic distributions: \textbf{(1)} \textit{LogNormal}$(0, 1)$; \textbf{(2)} \textit{Normal}$(0, 1)$; and \textbf{(3)} \textit{Normal}$(1, 0.1)$. In Figure~\ref{fig:synth} (\textbf{top}) we show the NMSE and encoding time for different sizes of input vectors when the budget constraint is $b=2$ bits/coordinate. As a complementary result, in Figure~\ref{fig:synth} (\textbf{bottom}) we fix the dimension of the input vectors to $d=2^{12}=4096$ and vary the bandwidth budget constraint from $2$ to $5$ bits/coordinate. The results imply that ECUQ exhibits a good speed-accuracy trade-off: it consistently outperforms ECK-Means while being an order of magnitude faster, and it is competitive with ECQ but about three orders of magnitude faster. Note additionally that it takes $\approx 20$ minutes for ECQ to encode even a small vectors of size $2^{12}$ with budget constraint of $4$ bits/coordinate; this means that ECQ without some acceleration is not suitable for compressing neural networks with millions and even billions of parameters.      

\begin{figure}[h]
\centering
\vspace{-6mm}
\begin{subfigure}{0.74\linewidth}
  \centering
    \includegraphics[clip, trim=1cm 0cm 1cm 0.5cm, width=\textwidth]{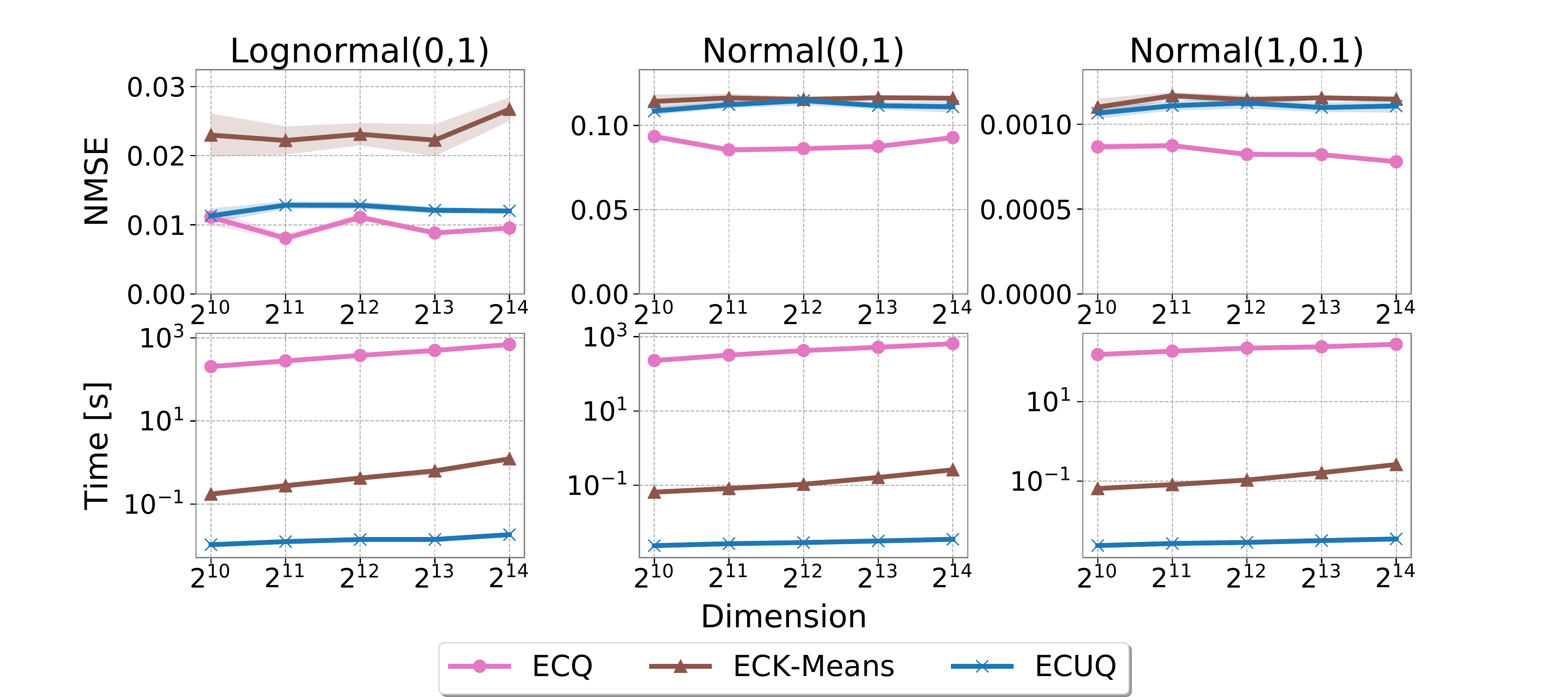}
  \label{fig:synth_dim}
  \vspace{-3mm}
\end{subfigure}
\begin{subfigure}{0.74\linewidth}
  \centering
    \includegraphics[clip, trim=1cm 0cm 1cm 0.5cm, width=\textwidth]{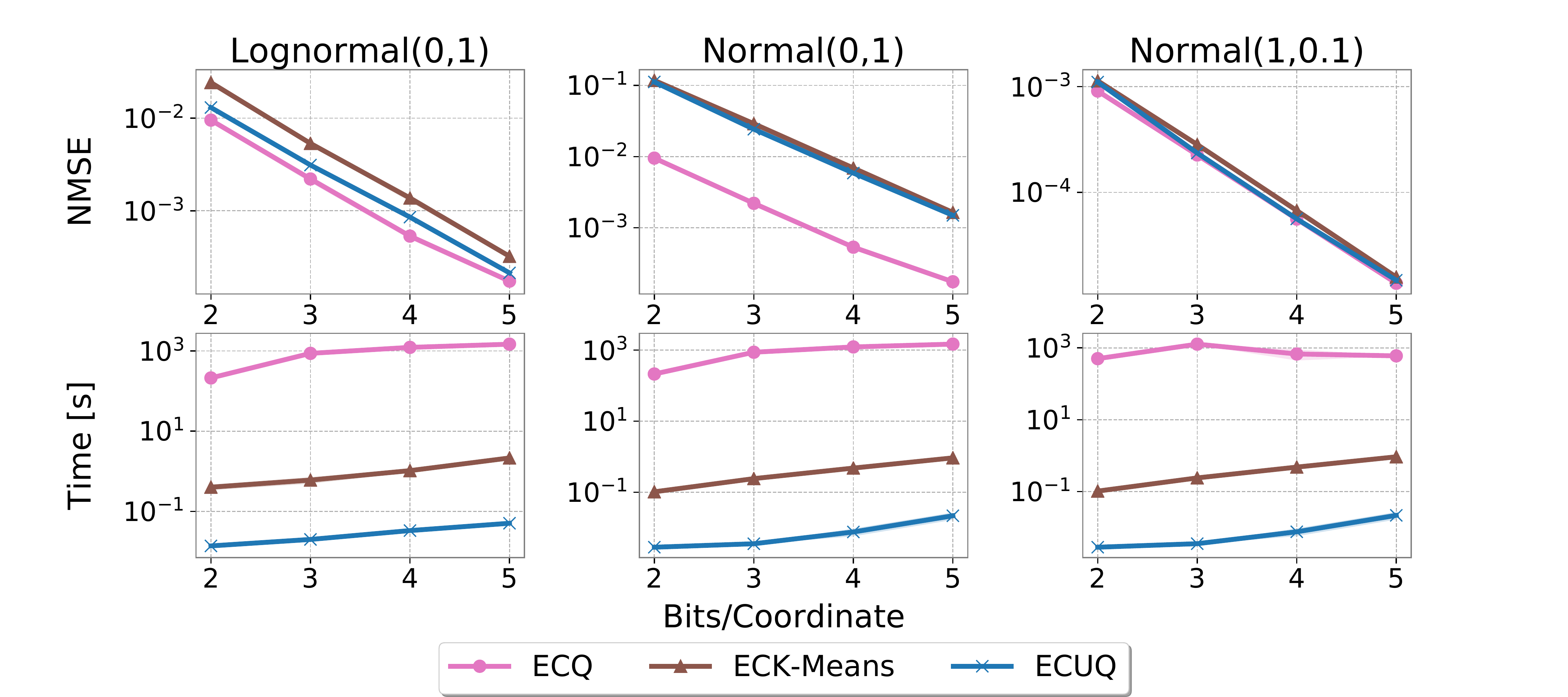}
  \label{fig:synth_bits}
\end{subfigure}
\vspace{-4mm}
\caption{ECQ vs. ECK-Means vs. ECUQ: NMSE and encoding time for different input distributions: \textbf{(top)} as a function of the bandwith budget for fixed input dimension of $d=2^{12}$; and \textbf{(bottom)} as a function of the input dimension for fixed bandwidth budget of $2$ bits/coordinate.}
\label{fig:synth}
\end{figure}


\blue{\section{Additional ECUQ Evaluations}\label{app:ecuq_vs_spars_vs_sketch}
Since ECUQ is a quantization-based method, in \Cref{sec:anchor_compression} and \Cref{app:ecuq} we compare it with quantization-based techniques. In \Cref{fig:ecuq_vs_spars_vs_sketch}, we give a complementary result comparing it with sparsification methods (Rand-K, Top-K) and sketching (Count-Sketch), where similar trends are observed. Note, however, that such techniques are mostly orthogonal to quantization-based methods and they can be used in conjunction.}
\begin{figure}[h]
    \centering
    \vspace{-3mm}
    \includegraphics[clip, trim=1cm 0cm 1cm 0.3cm, width=0.7\textwidth]{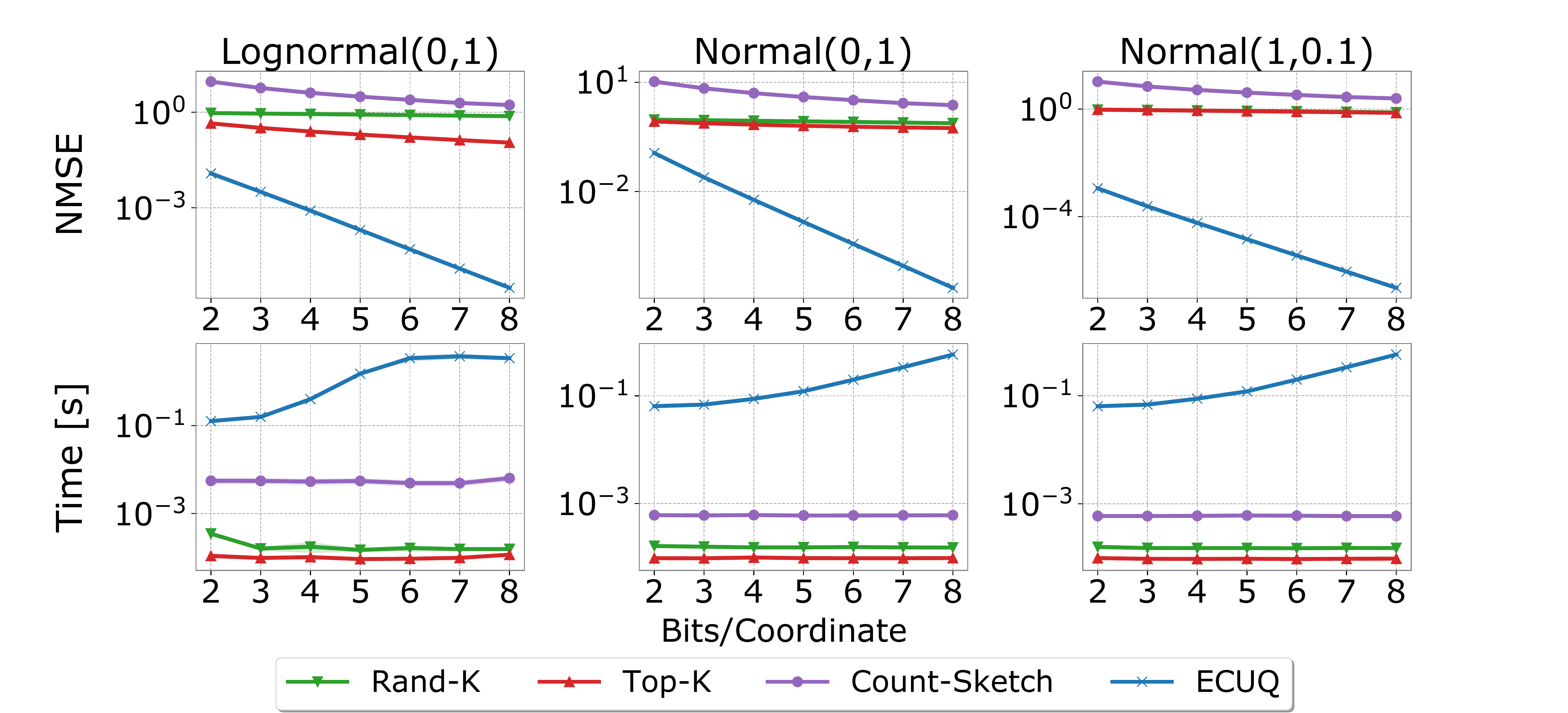}
    \vspace{-2mm}
    \caption{ECUQ vs. sparsification and sketching: NMSE (\textbf{top}) and encoding time (\textbf{bottom}) as a function of the bandwidth budget for different input distributions and a fixed dimension of $d=2^{20}$.}
    \vspace{-5mm}
    \label{fig:ecuq_vs_spars_vs_sketch}
\end{figure}

\section{Experimental Details}\label{app:experiments_config}
We implemented \methodName{} in PyTorch~\cite{paszke2019pytorch}. In all experiments, the PS uses Momentum SGD as optimizer with a momentum of $0.9$ and $L_2$ regularization (i.e., weight decay) with parameter $10^{-5}$. The clients, on the other hand, use vanilla SGD for all tasks but Amazon Reviews, for which Adam provided better results. In~\cref{tab:exp_details} we report the hyperparameters used in our experiments. To ease the computational burden and long training times, in the Shakespeare task we reduced the amount of train and validation data for each speaker (i.e., client) by a factor of $10$ by using only the first $10\%$ of train and validation data, but no less than $2$ samples per speaker. 

\begin{table*}[h]
\vspace{-5mm}
\centering
\caption{Hyperparameters for our experiments.}
\vspace{0.1in}
\begin{tabular}{lllll}
\hline
\textbf{Task} & \textbf{Batch size} & \textbf{Client optimizer} & \textbf{Client lr} &\textbf{Server lr}  \\ 
\hline
EMNIST & $64$ & SGD & $0.05$ & $1$ \\
CIFAR-100 & $128$ & SGD & $0.05$ & $1$ \\
Amazon Review & $64$ & Adam & $0.005$ & $0.1$ \\
Shakespeare & $4$ & SGD & $0.5$ & $1$ \\ \hline
\end{tabular}
\vspace{-3mm}
\label{tab:exp_details}
\end{table*}

\section{Additional Results}\label{app:additional_results}
In this section we present additional results that were deferred from the main text.
\subsection{\blue{Learning Curves}}\label{subsec:learning_curves}
We next provide the learning curves for the experiments we conducted in \S~\ref{sec:experiments}. In \cref{fig:valid,fig:train} we show the validation and train accuracy throughout training, respectively. We measure train and validation accuracy every $50$ rounds for EMNIST and Amazon Reviews, every $500$ rounds for CIFAR-100, and every $1000$ rounds for Shakespeare.

\begin{figure}[h]
    \centering
    \includegraphics[clip, trim=3.5cm 1cm 2.8cm 0.1cm,width=0.95\linewidth]{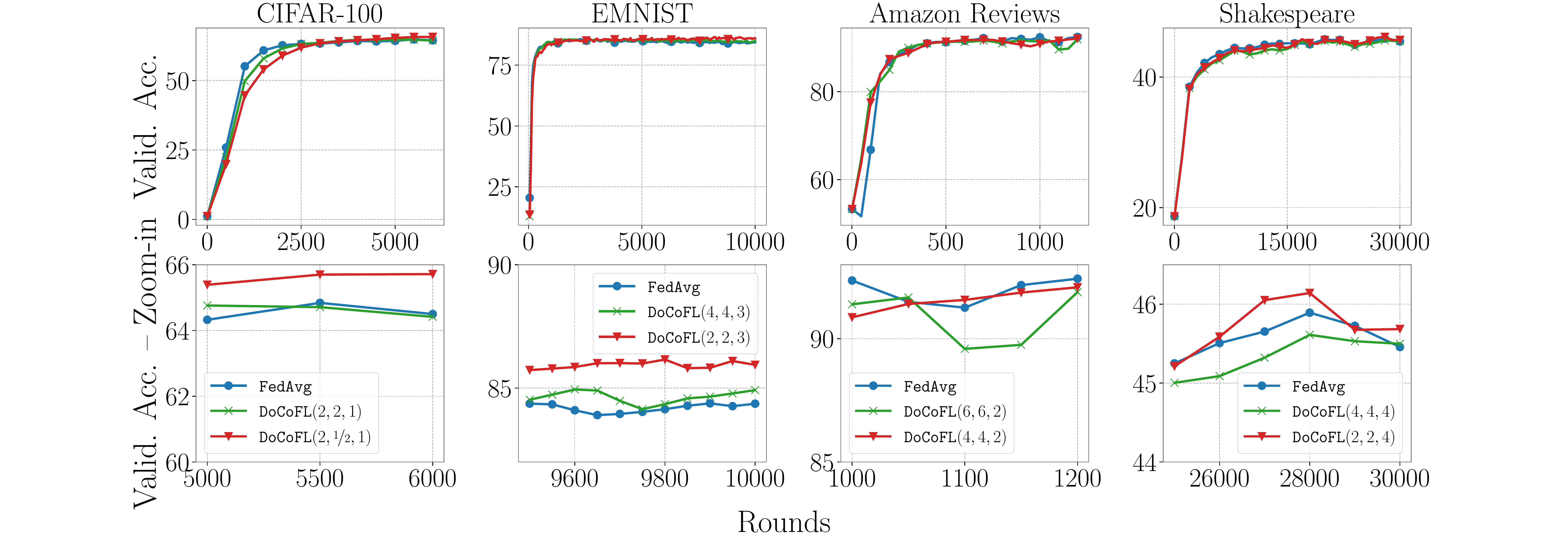}
    \vspace{-3mm}
    \caption{Validation accuracy for different tasks. }
    \label{fig:valid}
\end{figure}
\begin{figure}[h]
    \centering
    \vspace{2mm}
    \includegraphics[clip, trim=3.5cm 1cm 2.8cm 0.15cm,width=0.95\linewidth]{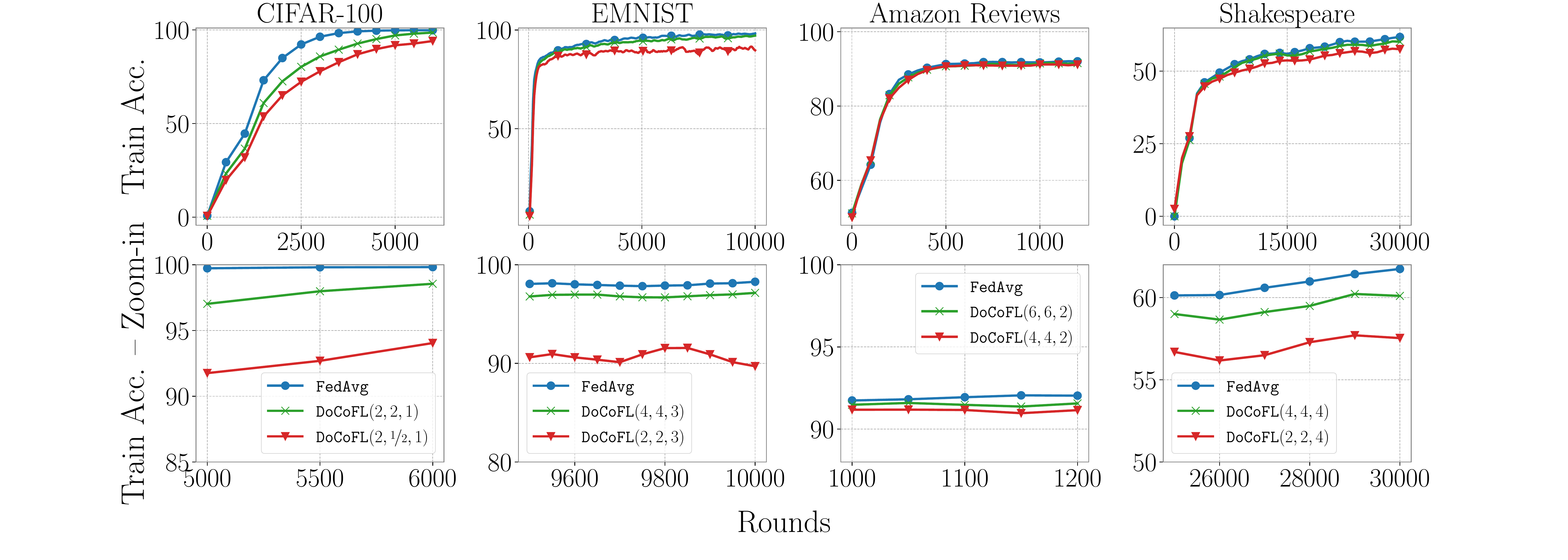}
    \vspace{-3mm}
    \caption{Train accuracy for different tasks. }
    \label{fig:train}
\end{figure}

Following the discussion in \S~\ref{sec:experiments}, \cref{fig:valid} demonstrates that using less bandwidth may improve the generalization ability as it can serve as a form of regularization. For example, consider the EMNIST task, where \methodName{}$(2,2,3)$ (i.e., $2$ bits per coordinate for anchor and correction compression, and $3$ bits per coordinate for uplink compression) outperforms both \fedavg{} and \methodName{}$(4,4,3)$. Unsurprisingly, examining \cref{fig:train} reveals a reverse image -- less bandwidth implies lower train accuracy. This suggests that in some settings using less bandwidth (but not too little) may help to prevent overfitting.

\subsection{\blue{Bandwidth Budget Ablation}}\label{subsec:bandwidth_budget_ablation}
\blue{Next, we provide numerical results that demonstrate the effect of the downlink (anchor and correction) bandwidth budget on \methodName{}'s performance. We consider the CIFAR-100 with ResNet-9 experiment with anchor deployment rate $K=10$ and anchor queue capacity $\V=3$.} 

\begin{figure}[H]
\centering
\begin{subfigure}{0.48\linewidth}
  \centering
  \includegraphics[clip, trim=1.5cm 0cm 0cm 0.1cm,width=\linewidth]{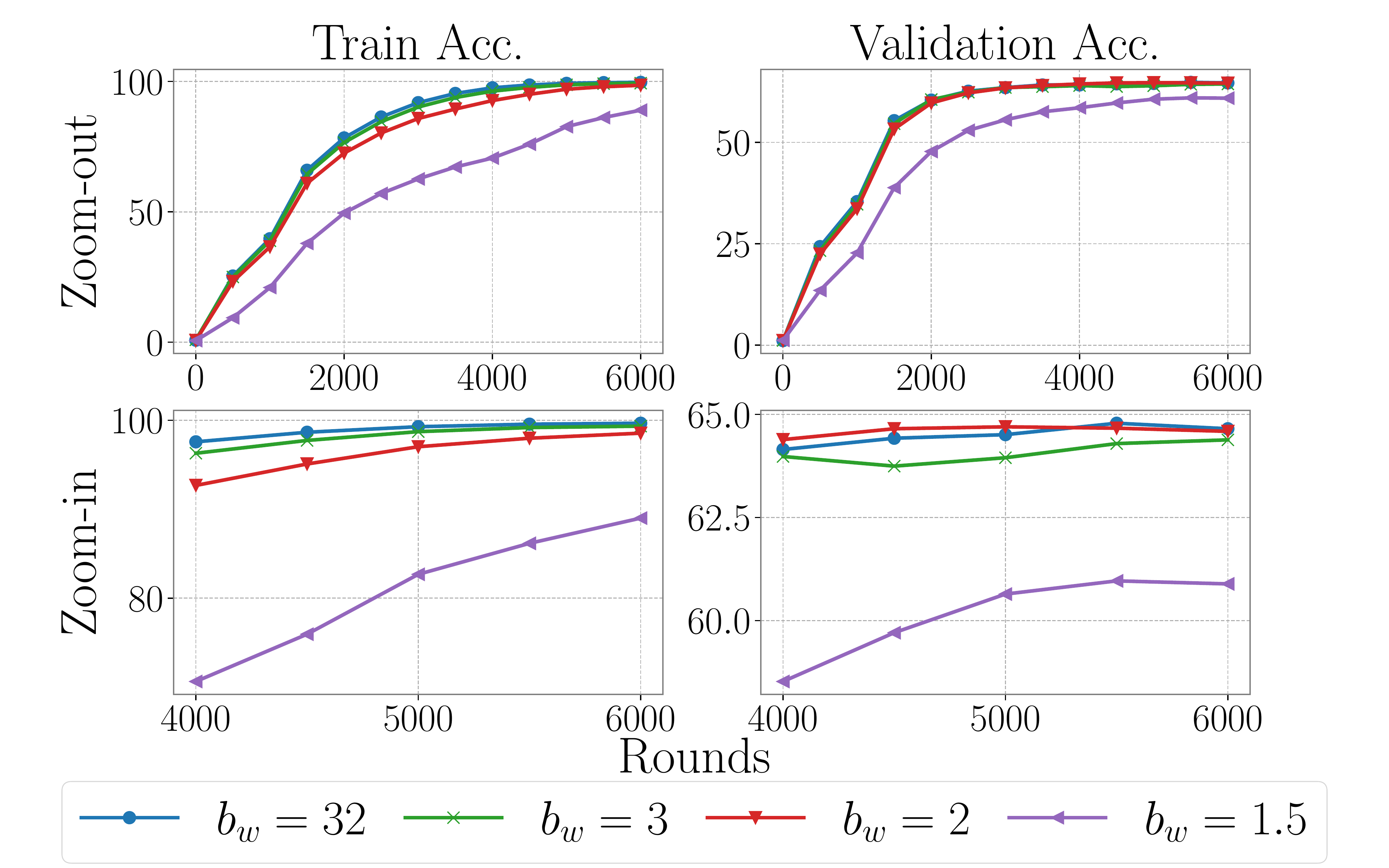}  
  \label{fig:varying_b_w_rounds}
\end{subfigure}
\hfill
\begin{subfigure}{0.48\linewidth}
  \centering
  \includegraphics[clip, trim=1.5cm 0cm 0cm 0.1cm, width=\linewidth]{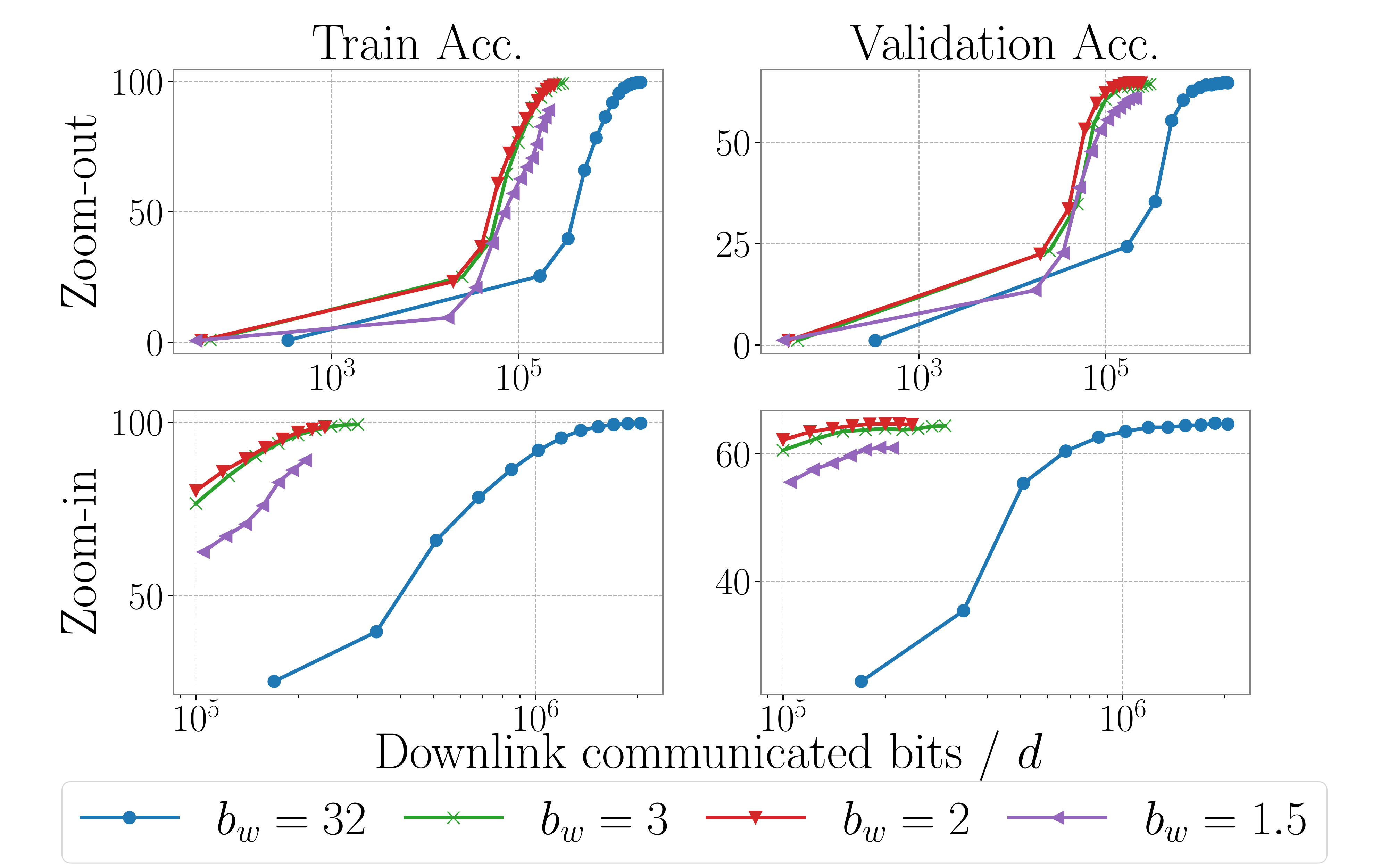}  
  \label{fig:varying_b_w_comm}
\end{subfigure}
\vspace{-3mm}
\caption{Train and validation accuracy for different anchor bandwidth budgets ($32, 3, 2$, and $1.5$ bits per coordinate) as a function of the number of rounds (\textbf{left}) and the number of communicated bits in the downlink direction (\textbf{right}).}
\label{fig:varying_b_w}
\end{figure}

\blue{In \Cref{fig:varying_b_w} we show the train and validation accuracy for different anchor bandwidth budgets $b_w$, namely, $32$ (full-precision), $3, 2$ and $1.5$ bits per coordinate, while the correction budget is fixed and equals $b_c=2$ bits per coordinate, as a function of both number of rounds and number of communicated bits in the downlink direction. The results indicate that one can significantly reduce the bandwidth used for communicating the anchors and use as low as $b_w=2$ bits per coordinate for anchor compression ($16\times $ reduction), without degrading validation accuracy. Again, similarly to evidence from the previous section, less bandwidth typically results in lower train accuracy.}

\blue{In \Cref{fig:varying_b_c} we show the train and validation accuracy for different correction bandwidth budgets $b_c$ ($32,4,2,1$ and $0.5$ bits per coordinate), while the anchor budget is fixed and equals $b_w=2$ bits per coordinate. We observe similar trends, where less bandwidth leads to lower train accuracy but possibly higher validation accuracy. Additionally, we see that one may even use a sub-bit compression ratio for the correction term, allowing for significant \emph{online} bandwidth reduction, which is especially important in our context.}

\begin{figure}[t]
\centering
\begin{subfigure}{0.48\linewidth}
  \centering
  \includegraphics[clip, trim=0.84cm 0cm 0cm 0.1cm,width=\linewidth]{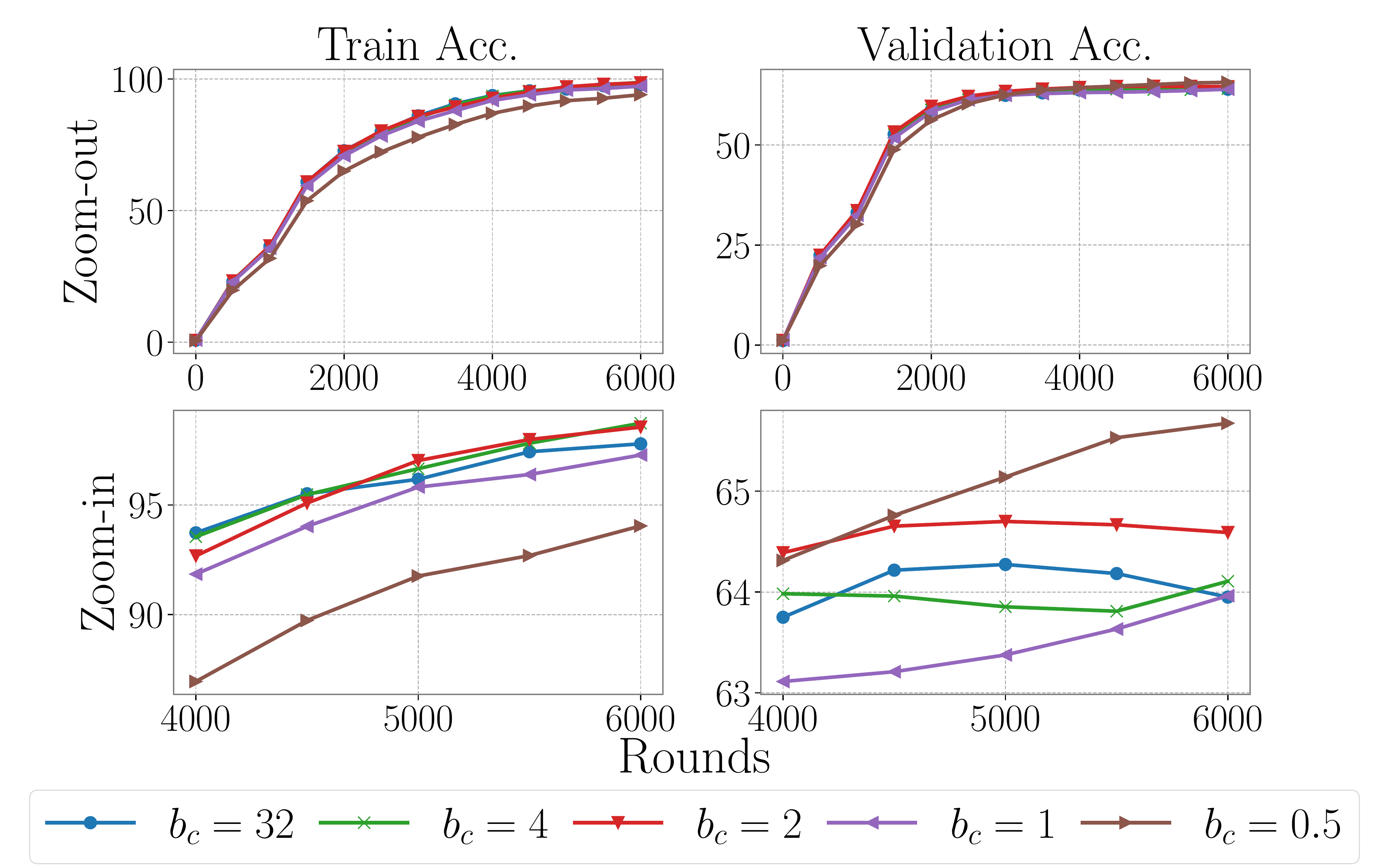}  
  \label{fig:varying_b_c_rounds}
\end{subfigure}
\hfill
\begin{subfigure}{0.48\linewidth}
  \centering
  \includegraphics[clip, trim=0.84cm 0cm 0cm 0.1cm, width=\linewidth]{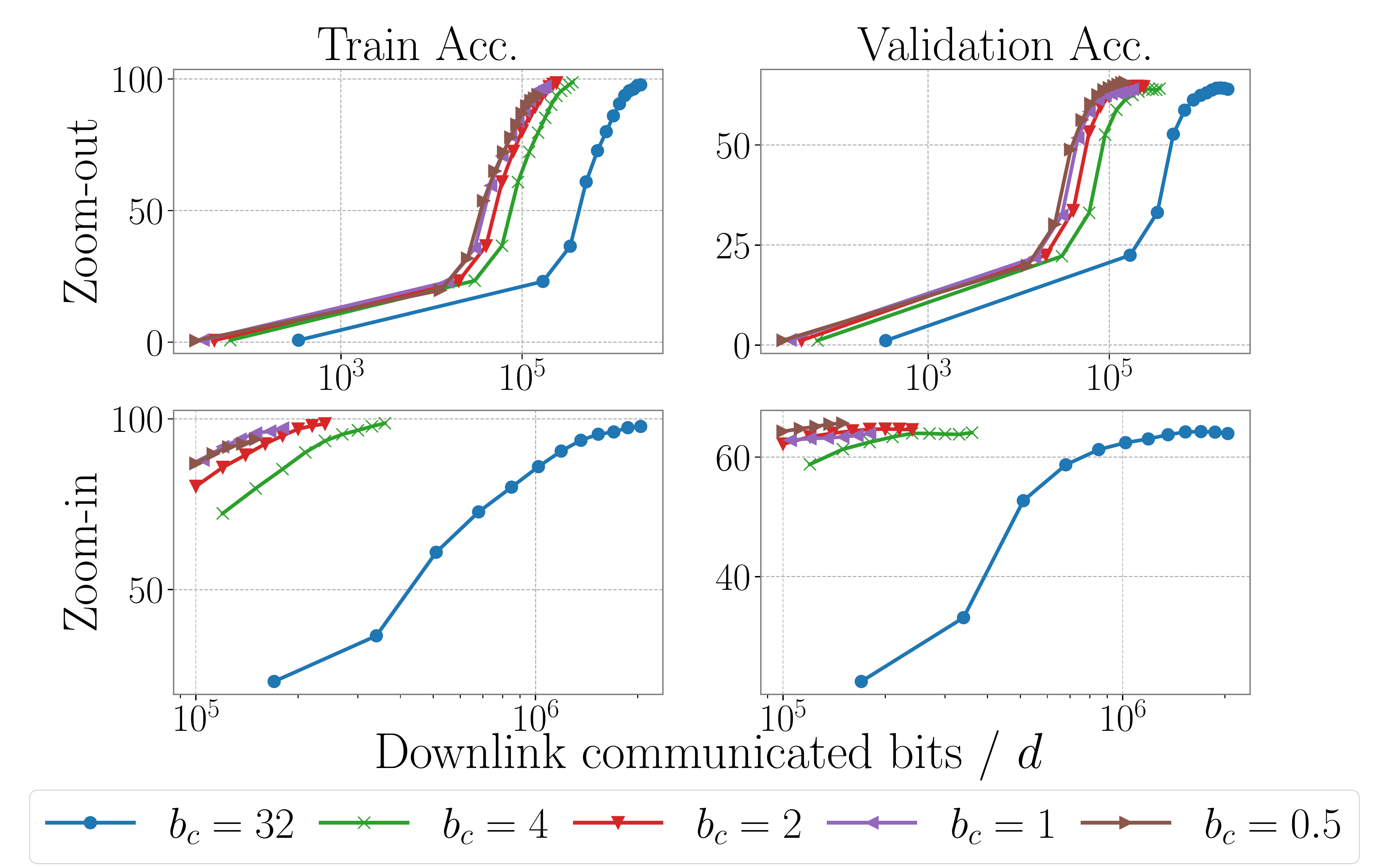}  
  \label{fig:varying_b_c_comm}
\end{subfigure}
\vspace{-3mm}
\caption{Train and validation accuracy for different correction bandwidth budgets ($32, 4, 2, 1$, and $0.5$ bits per coordinate) as a function of the number of rounds (\textbf{left}) and the number of communicated bits in the downlink direction (\textbf{right}).}
\label{fig:varying_b_c}
\end{figure}


\subsection{\blue{The Value of the Correction term}}\label{subsec:ignoring_correction}
\blue{In this section, we discuss the effect of ignoring the correction term on \methodName{}'s performance, namely, we consider the case where clients only obtain an anchor (i.e., a previous model) and use it perform local optimization. As mentioned in \S\ref{sec:experiments}, ignoring the correction may resemble other frameworks such as delayed gradients. Delayed SGD (DSGD, \citet{arjevani2020tight}) is well-studied in the literature both theoretically and empirically. Indeed, theory supports that optimization with delay can work, e.g., \citet{stich2019error} showed that as long as the maximal delay is bounded by $\Ocal(\sqrt{T})$, DSGD enjoys the same asymptotic convergence rate as SGD; \citet{cohen2021asynchronous} later improved the dependence on the maximal delay to average delay with a variant of DSGD, allowing for arbitrary delays. However, it has also been observed that, in practice, introducing delay can slow down and even destabilize convergence, and as a result hyperparameters should be chosen with great care to ensure stability~\cite{DBLP:conf/iclr/GiladiNHS20}. We thus convey that sending the correction is crucial and allows for improved performance.}

\begin{figure}[H]
    \centering
    \vspace{-5mm}
    \includegraphics[clip, trim=1.5cm 0cm 0cm 0.1cm,width=0.75\linewidth]{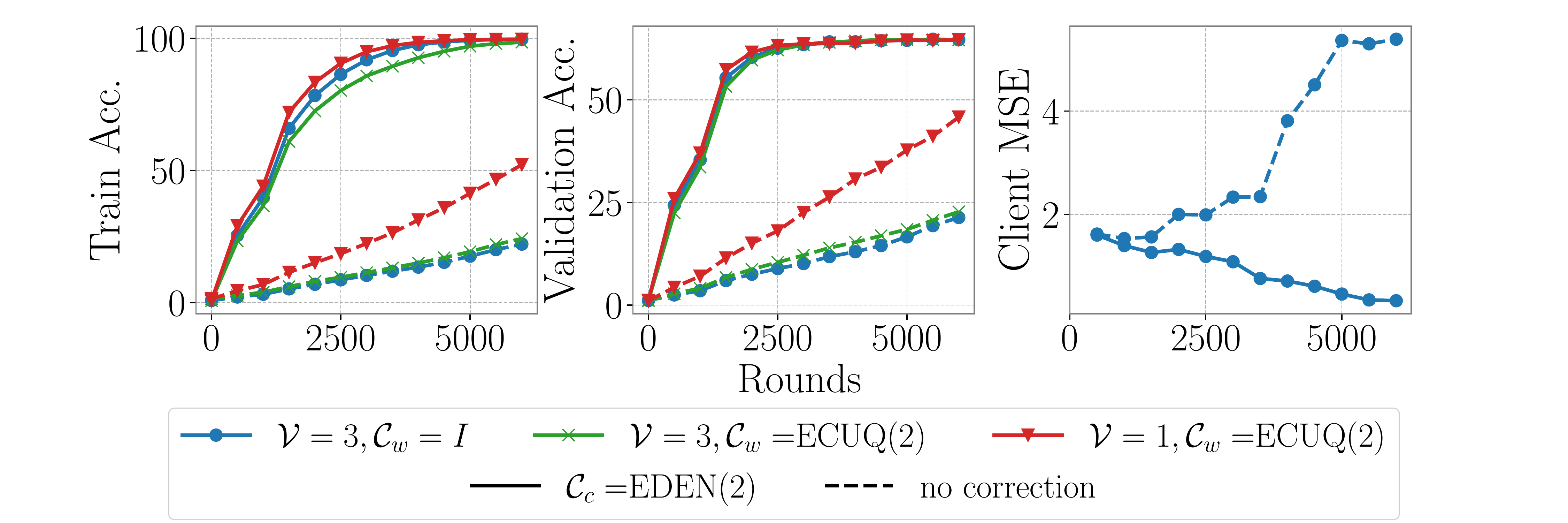}
    \vspace{-3mm}
    \caption{The effect of ignoring the correction term. Train (\textbf{left}) and validation (\textbf{middle}) accuracy for different configurations, and average client's model estimation error (\textbf{right}).}
    \label{fig:ignore_corr}
\end{figure}

\blue{To reinforce this, we conducted an experiment to numerically evaluate the effect of ignoring the correction. We consider the CIFAR-100 with ResNet-9 experiment. We test \methodName{} with and without sending the correction to the clients for three different configurations: \textbf{(1)} $\V=3$ and no anchor compression (i.e., 32 bits per coordinate); \textbf{(2)} $\V=3$ and $\C_w$ is ECUQ with $2$ bits per coordinate; and \textbf{(3)} $\V=1$ with $\C_w$ as in the second configuration. For all configurations, we use an anchor deployment rate of $K=10$. In \Cref{fig:ignore_corr} we present the train and validation accuracy, and also the average client squared model estimation error, i.e., $\frac{1}{S}\sum_{i\in\mathcal{S}_t}{\lVert w_t - \hat{w}_t^i\rVert^2}$, for the first configuration. The results clearly indicate that accounting for the correction term results in faster convergence. While ignoring the correction may eventually still result in similar performance, it is expected to take significantly more communication rounds; this is evident even when the anchor is sent with full precision. Examining the rightmost plot, we observe that ignoring the correction leads to larger model estimation error, which provides insight into why the performance deteriorates when the correction is ignored.} 




\subsection{\blue{\methodName{} and \textsf{EF21}}}\label{subsec:docofl_vs_ef21}
\blue{While our focus in on setups where a client may participate in training only once or a few times, in some setups, partial but repeated participation can be expected. For such setups, we consider some additional related work. Specifically, we focus on \textsf{EF21}~\cite{richtarik2021ef21} and some of its extensions. To assess the value of \methodName{} in this context, we attempted to extend \textsf{EF21-BC}~(Algorithm 5 of \citet{fatkhullin2021ef21}) to the partial participation setting, but were not able to achieve convergence. We suspect that it is attributed to an accumulated discrepancy between the models of the clients and the server, and thus a more sophisticated extension is required, which is out of scope. Instead, we extended \textsf{EF21-PP}~(Algorithm 4 of \citet{fatkhullin2021ef21}) to support downlink compression in two different ways: \textbf{(1)} direct compression of the model parameters using EDEN; and \textbf{(2)} using \methodName{}. We compare these approaches with a baseline that sends the exact model to the clients (i.e., no downlink compression).} 
\begin{figure}[h]
\centering
    \vspace{3mm}
  \includegraphics[clip, trim=0.58cm 0cm 0cm 1cm,width=0.9\linewidth]{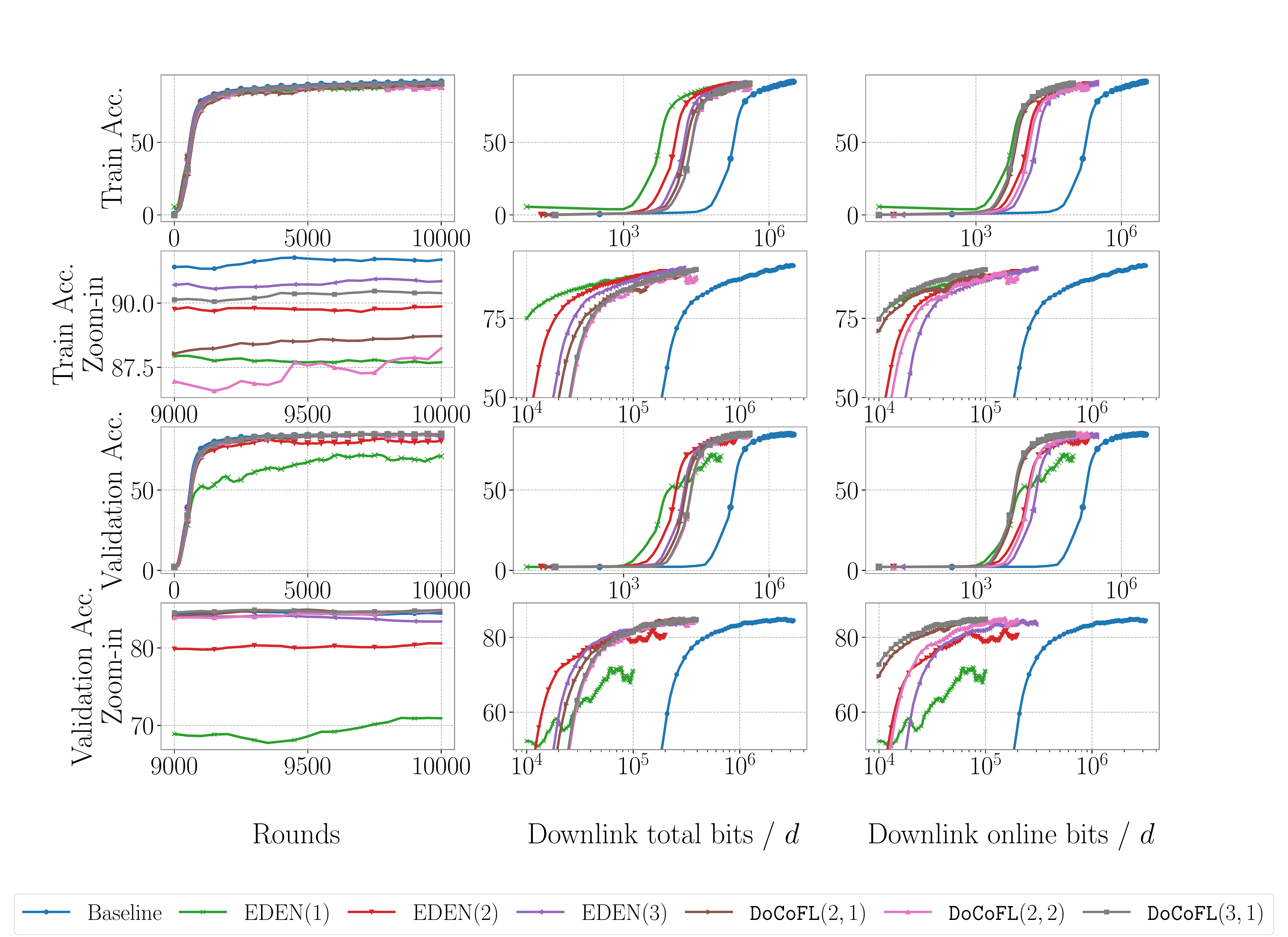}  
  \caption{Train and validation accuracy for \textsf{EF21-PP}~\cite{fatkhullin2021ef21} (baseline) and its extensions supporting downlink compression, either directly with EDEN, or with \methodName{}, over EMNIST. Results are displayed against the number of communication rounds (\textbf{left}), the total number of downlink communicated bits (\textbf{middle}), and the number of online downlink communicated bits (\textbf{right}).}
  \label{fig:ef21_emnist}
\end{figure}

\blue{We consider two tasks: \textbf{(1)} EMNIST + LeNet with $N=200$ clients and $S=10$ participating clients per-round; and \textbf{(2)} CIFAR-100 + ResNet-9 with $N=25$ clients and $S=5$ participating clients per-round. We used less clients here compared to the experiments in the main text due to GPU memory limitations (\textsf{EF21} requires keeping all $N$ clients persistent). In both experiments we use EDEN with $1$ bit/coordinate for uplink compression. \Cref{fig:ef21_emnist,fig:ef21_cifar} depict the train and validation accuracy as a function of the number of communication rounds, the total number of communicated bits in the downlink direction, and the number of communicated bits in the downlink direction required \emph{online} (i.e., at the clients' participation round) for EMNIST and CIFAR, respectively. We note that using a direct compression of the model with $1$ or $2$ bits per coordinate results in a notable drop in validation accuracy compared to the baseline. Indeed, using EDEN with $3$ bits per coordinate performs similarly to the baseline. Examining \methodName{} with $2$ and $1$ bits/coordinate for anchor and correction, respectively, reveals that it performs similarly to direct downlink compression with $3$ bits/coordinate, i.e., when using the same overall downlink bandwidth; however, it requires $3\times$ less online bandwidth. This is especially important in our context since online bandwidth demand directly translates to client delays; indeed, this is a main design goal of \methodName{}. Additionally, one may improve the results even further by increasing the anchor budget to $3$ bits/coordinate, while keeping the online bandwidth usage the same or even lower (e.g., see \Cref{fig:varying_b_c}).}

\blue{Another important point of comparison is the \textsf{EF21-P} + \textsf{DIANA} method~\cite{gruntkowska2022ef21}, which supports bi-directional compression. In particular, their server compression mechanism is similar to ours in the following sense: their server and clients hold control variates that track the global model; these control variates can be seen as an anchor that is being updated in each round and the server sends to the clients a compressed correction with respect to the control variates. However, their approach requires client-side memory with full participation (i.e., updated control variates). The authors propose to study an extension of their framework to partial participation as future work. It is interesting to investigate whether \methodName{} can be used in conjunction with this framework to achieve this.} 

\begin{figure}[t]
\centering
  \includegraphics[clip, trim=0.58cm 0cm 0cm 1cm,width=0.9\linewidth]{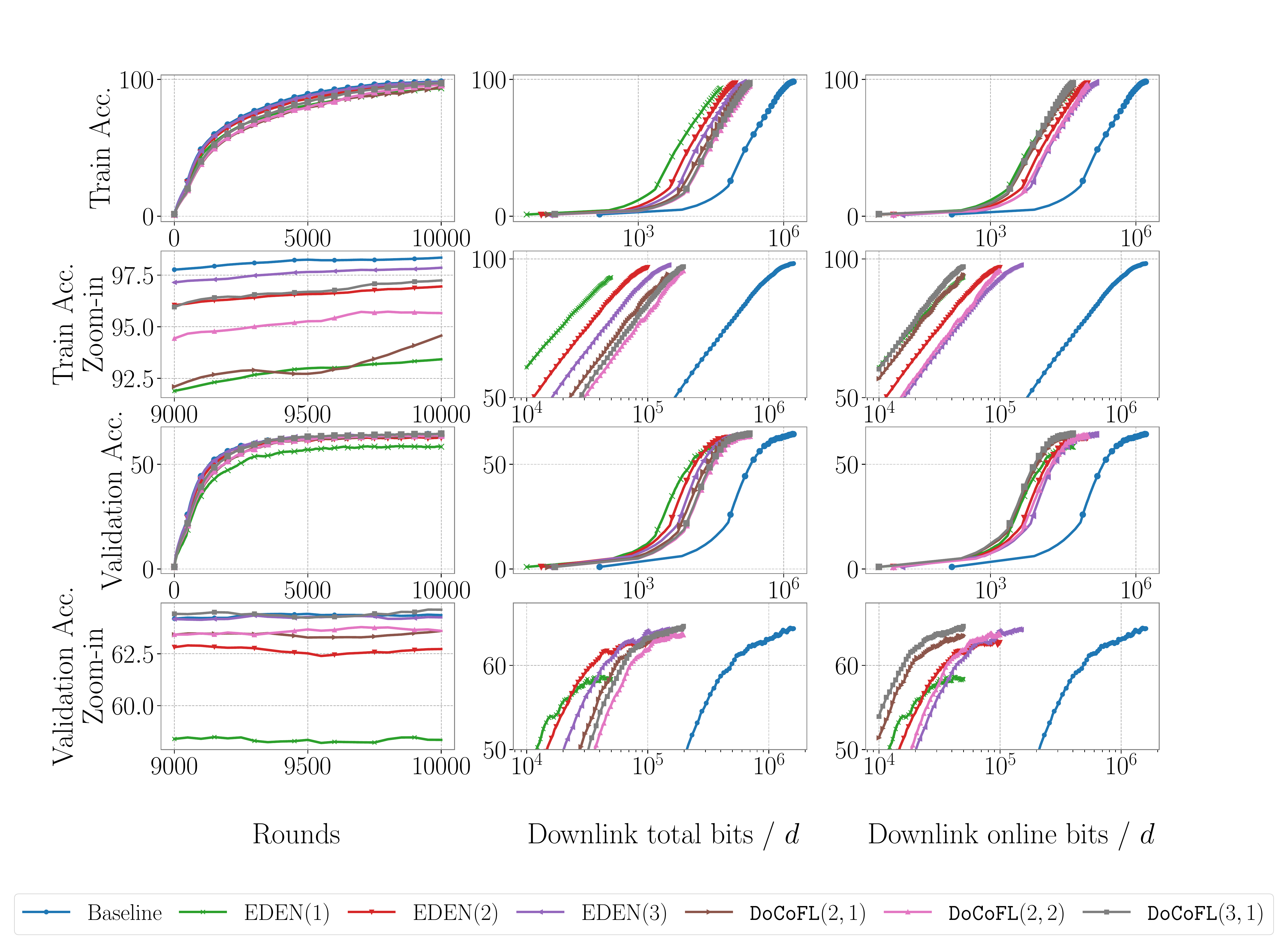}  
  \caption{Repeating the experiments from \Cref{fig:ef21_emnist} for CIFAR-100.}
  \label{fig:ef21_cifar}
\end{figure}


\section{Future Work}\label{app:future_work}
\blue{We point out several directions for future research: \textbf{(1)} an interesting avenue would be to investigate how to combine \methodName{} with the delayed gradients framework~\cite{stich2019error}. While delayed gradients do not reduce downlink bandwidth, they are especially useful for clients that may require a long time to perform local updates and communicate them back to the PS. Thus, accounting for delayed gradients may enhance \methodName{}'s versatility and robustness in real FL deployments; \textbf{(2)} our theoretical framework focuses on the SGD optimizer. Exploring the implications of using adaptive optimizers, such as Adam, on the theoretical analysis and guarantees would be of great interest; \textbf{(3)} as we convey in \cref{app:goal2_intuition}, an intriguing extension of \methodName{} involves the incorporation of adaptive bandwidth budget for anchor and correction compression; although it introduces a significant theoretical challenge due to the coupling between optimization and compression, it may yield a convergence guarantee for \methodName{} with anchor compression and achieve even larger bandwidth savings; \textbf{(4)} while we employ extensive simulations and account for various overheads of \methodName{}, it is desired to further strengthen our conclusions through real deployments.}

\end{document}